\newtheorem{theorem}{Theorem}
\newtheorem{lemma}{Lemma}
\title{A Kronecker-factored approximate Fisher matrix \\ for convolution layers}
\date{}
\author{
Roger Grosse \& James Martens \\
Department of Computer Science\\
University of Toronto\\
Toronto, ON, Canada \\
\texttt{\{rgrosse,jmartens\}@cs.toronto.edu}
}
\newcommand{\expect}{\mathbb{E}}
\newcommand{\zeroVec}{\mathbf{0}}
\newcommand{\identity}{\mathbf{I}}
\newcommand{\ident}{\identity}
\newcommand{\zeroMat}{\mathbf{0}}
\newcommand{\onesVec}{\mathbf{1}}
\DeclareMathOperator*{\kvec}{vec}
\newcommand{\transpose}{\top}
\newcommand{\grad}{\mathcal{D}}
\DeclareMathOperator*{\Cov}{Cov}
\newcommand{\indep}{\perp\!\!\!\perp}
\newcommand{\kronDelta}[2]{\indicator_{#1=#2}}
\DeclareMathOperator*{\tr}{tr}
\newcommand{\indicator}{\mathds{1}}
\newcommand{\doubleIndicator}[2]{\mathds{1}_{\substack{#1 \\ #2}}}
\newcommand{\layerIdx}{\ell}
\newcommand{\numLayers}{L}
\newcommand{\dimIdxOne}{i}
\newcommand{\dimIdxTwo}{j}
\newcommand{\numStepVec}{R}
\newcommand{\outputMapIdx}{i}
\newcommand{\inputMapIdx}{j}
\newcommand{\spatIdx}{t}
\newcommand{\spatOffsetRow}{\spatOffset_1}
\newcommand{\spatOffsetCol}{\spatOffset_2}
\newcommand{\numRows}{T_1}
\newcommand{\numCols}{T_2}
\newcommand{\spatIdxSet}{\mathcal{T}}
\newcommand{\spatOffset}{\delta}
\newcommand{\spatOffsetSet}{\Delta}
\newcommand{\mbsize}{M}
\newcommand{\mbIdx}{m}
\newcommand{\kernelRad}{R}
\newcommand{\preActivations}{\mathbf{s}}
\newcommand{\preActivationsL}[1]{\preActivations_{#1}}
\newcommand{\preActivationsLTranspose}[1]{\preActivations_{#1}^\transpose}
\newcommand{\preActivationsLI}[2]{[\preActivations_{#1}]_{#2}}
\newcommand{\preActivationsIS}[2]{s_{#1, #2}}
\newcommand{\gradPreActivationsIS}[2]{\grad s_{#1, #2}}
\newcommand{\weights}{\mathbf{W}}
\newcommand{\weightsL}[1]{\weights_{#1}}
\newcommand{\weightsIJS}[3]{w_{#1, #2, #3}}
\newcommand{\gradWeightsIJS}[3]{\grad w_{#1, #2, #3}}
\newcommand{\biasI}[1]{b_{#1}}
\newcommand{\biasVec}{\mathbf{b}}
\newcommand{\biasVecL}[1]{\biasVec_{#1}}
\newcommand{\weightsBiases}{{\bf \bar{\weights}}}
\newcommand{\weightsBiasesL}[1]{\weightsBiases_{#1}}
\newcommand{\weightsBiasesLIJ}[3]{[\weightsBiases_{#1}]_{#2 #3}}
\newcommand{\activations}{\mathbf{a}}
\newcommand{\activationsL}[1]{\activations_{#1}}
\newcommand{\activationsHom}{\bar{\activations}}
\newcommand{\activationsHomL}[1]{\activationsHom_{#1}}
\newcommand{\activationsHomLTranspose}[1]{\activationsHom_{#1}^\transpose}
\newcommand{\activationsHomLJ}[2]{[\activationsHom_{#1}]_{#2}}
\newcommand{\activationsJS}[2]{a_{#1, #2}}
\newcommand{\activationsJSM}[3]{a_{#1, #2}^{(#3)}}
\newcommand{\nonlinearity}{\phi}
\newcommand{\nonlinearityL}[1]{\nonlinearity_{#1}}
\newcommand{\inputData}{\mathbf{x}}
\newcommand{\prediction}{\mathbf{z}}
\newcommand{\paramVec}{{\boldsymbol \theta}}
\newcommand{\loss}{\mathcal{L}}
\newcommand{\function}{f}
\newcommand{\target}{\mathbf{y}}
\newcommand{\fisherMat}{\mathbf{F}}
\newcommand{\direction}{\mathbf{v}}
\newcommand{\pmf}{r} 
\newcommand{\dataDistribution}{p_{\rm data}}
\newcommand{\covActivations}{\boldsymbol{\Psi}}
\newcommand{\covActivationsL}[1]{\covActivations_{#1}}
\newcommand{\covActivationsLInv}[1]{\covActivations_{#1}^{-1}}
\newcommand{\covPreActivationGradients}{\boldsymbol{\Gamma}}
\newcommand{\covPreActivationGradientsL}[1]{\covPreActivationGradients_{#1}}
\newcommand{\covPreActivationGradientsLInv}[1]{\covPreActivationGradients_{#1}^{-1}}
\newcommand{\fisherMatApprox}{\hat{\fisherMat}}
\newcommand{\fisherMatApproxL}[1]{\hat{\fisherMat}_{#1}}
\newcommand{\stepA}{\mathbf{C}}
\newcommand{\stepB}{\mathbf{d}}
\newcommand{\stepVecI}[1]{\mathbf{v}_{#1}}
\newcommand{\stepCoeffs}{{\boldsymbol \alpha}}
\newcommand{\paramUpdate}{\mathbf{v}}
\newcommand{\lambdaParam}{\lambda}
\newcommand{\gammaParam}{\gamma}
\newcommand{\piParamL}[1]{\pi_{#1}}
\newcommand{\fisherMatApproxGamma}[1]{\fisherMatApprox^{(#1)}}
\newcommand{\fisherMatApproxGammaL}[2]{\fisherMatApprox^{(#1)}_{#2}}
\newcommand{\covActivationsGammaL}[2]{\covActivations^{(#1)}_{#2}}
\newcommand{\covActivationsGammaLInv}[2]{[\covActivationsGammaL{#1}{#2}]^{-1}}
\newcommand{\covPreActivationGradientsGammaL}[2]{\covPreActivationGradients^{(#1)}_{#2}}
\newcommand{\covPreActivationGradientsGammaLInv}[2]{[\covPreActivationGradientsGammaL{#1}{#2}]^{-1}}
\newcommand{\lambdaParamInc}{\lambda_+}
\newcommand{\lambdaParamDec}{\lambda_-}
\newcommand{\gammaParamInc}{\gammaParam_+}
\newcommand{\gammaParamDec}{\gammaParam_-}
\newcommand{\rhoStat}{\rho}
\newcommand{\qMat}{\mathbf{Q}}
\newcommand{\dMat}{\mathbf{D}}
\newcommand{\dVec}{\mathbf{d}}
\newcommand{\weightDecayParam}{r}
\newcommand{\iterCount}{k}
\newcommand{\updateStatsEvery}{T_s}
\newcommand{\updateFacEvery}{T_f}
\newcommand{\clipParam}{C}
\newcommand{\paramVecT}[1]{\paramVec^{(#1)}}
\newcommand{\momentumVec}{\mathbf{p}}
\newcommand{\momentumVecT}[1]{\mathbf{p}^{(#1)}}
\newcommand{\momentumParam}{\mu}
\newcommand{\updateVec}{\mathbf{v}}
\newcommand{\learningRate}{\alpha}
\newcommand{\updateNorm}{\nu}
\newcommand{\avgTimescale}{\tau}
\newcommand{\avgWeight}{\xi}
\newcommand{\paramVecAvg}{\bar{\paramVec}}
\newcommand{\paramVecAvgT}[1]{\paramVecAvg^{(#1)}}
\newcommand{\meanActivations}{M}
\newcommand{\autoActivations}{\Omega}
\newcommand{\autoGrad}{\Gamma}
\newcommand{\autoActivationsNNN}{\tilde{\autoActivations}}
\newcommand{\autoCovActivations}{\Sigma}
\newcommand{\numLocs}{|\spatIdxSet|}
\newcommand{\numOffsets}{|\spatOffsetSet|}
\newcommand{\convKronAct}{{\boldsymbol \Omega}}
\newcommand{\convKronActL}[1]{\convKronAct_{#1}}
\newcommand{\convKronActEmpL}[1]{\hat{\convKronAct}_{#1}}
\newcommand{\convKronActGammaL}[2]{\convKronAct^{(#1)}_{#2}}
\newcommand{\convKronActGammaLInv}[2]{[\convKronAct^{(#1)}_{#2}]^{-1}}
\newcommand{\convKronActNNN}{{\bf \tilde{\convKronAct}}}
\newcommand{\convKronGrad}{{\boldsymbol \Gamma}}
\newcommand{\convKronGradL}[1]{\convKronGrad_{#1}}
\newcommand{\convKronGradEmpL}[1]{\hat{\convKronGrad}_{#1}}
\newcommand{\convKronGradGammaL}[2]{\convKronGrad^{(#1)}_{#2}}
\newcommand{\convKronGradGammaLInv}[2]{[\convKronGrad^{(#1)}_{#2}]^{-1}}
\newcommand{\numOutputMaps}{I}
\newcommand{\numInputMaps}{J}
\newcommand{\boundaryFunction}{\beta}
\newcommand{\boundaryFunctionUni}{\boundaryFunction}
\newcommand{\activationsMat}{\mathbf{A}}
\newcommand{\activationsMatL}[1]{\activationsMat_{#1}}
\newcommand{\activationsMatBatch}{{\bf \tilde{\mathbf{A}}}}
\newcommand{\activationsMatBatchL}[1]{\activationsMatBatch_{#1}}
\newcommand{\expansion}[1]{\llbracket #1 \rrbracket}
\newcommand{\homog}{H}
\newcommand{\preActivationsMat}{\mathbf{S}}
\newcommand{\preActivationsMatL}[1]{\preActivationsMat_{#1}}
\newcommand{\preActivationsMatBatch}{{\bf \tilde{\mathbf{S}}}}
\newcommand{\preActivationsMatBatchL}[1]{\preActivationsMatBatch_{#1}}
\newcommand{\weightsReshape}{{\bf \breve{\weights}}}
\newcommand{\weightsReshapeL}[1]{\weightsReshape_{#1}}
\newcommand{\network}{\mathcal{N}}
\newcommand{\networkTrans}{\network^{\dagger}}
\newcommand{\activationsMatTransL}[1]{\activationsMat^\dagger_{#1}}
\newcommand{\preActivationsMatTransL}[1]{\preActivationsMat^\dagger_{#1}}
\newcommand{\nonlinearityTransL}[1]{\nonlinearity^\dagger_{#1}}
\newcommand{\transMatPreL}[1]{\mathbf{U}_{#1}}
\newcommand{\transMatPostL}[1]{\mathbf{V}_{#1}}
\newcommand{\transOffsetPreL}[1]{\mathbf{c}_{#1}}
\newcommand{\transOffsetPostL}[1]{\mathbf{d}_{#1}}
\newcommand{\paramVecTrans}{\paramVec^\dagger}
\newcommand{\weightsBiasesTransL}[1]{\weightsBiases^\dagger_{#1}}
\newcommand{\convKronActTransL}[1]{\convKronAct^\dagger_{#1}}
\newcommand{\convKronGradTransL}[1]{\convKronGrad^\dagger_{#1}}
\newcommand{\indicatorVec}{\mathbf{e}}
\newcommand{\leftPreconditionerL}[1]{\mathbf{P}_{#1}}
\newcommand{\leftPreconditionerTransL}[1]{\mathbf{P}_{#1}^\dagger}
\newcommand{\rightPreconditionerL}[1]{\mathbf{R}_{#1}}
\newcommand{\rightPreconditionerTransL}[1]{\mathbf{R}_{#1}^\dagger}
\newcommand{\covMatNNN}{\boldsymbol{\Sigma}}
\newcommand{\meanVecNNN}{\boldsymbol{\mu}}
\newcommand{\covMatNNNSqrt}{\mathbf{B}}
\newcommand{\cholFactorNNN}{\mathbf{L}}
\newcommand{\objective}{h}
\newcommand{\curveMat}{\mathbf{C}}
\newcommand{\outputDist}{R}
\newcommand{\numParams}{n}
\newcommand{\quadModel}{M}
\newcommand{\LR}{\alpha}
\newcommand{\momDecayParam}{\mu}
\newcommand{\approxNatGrad}{\hat{\nabla} \objective} 
\newcommand{\Real}{\mathbb{R}}
\begin{document}

\maketitle

\begin{abstract}
Second-order optimization methods such as natural gradient descent have the potential to speed up training of neural networks by correcting for the curvature of the loss function. Unfortunately, the exact natural gradient is impractical to compute for large models, and most approximations either require an expensive iterative procedure or make crude approximations to the curvature. We present Kronecker Factors for Convolution (KFC), a tractable approximation to the Fisher matrix for convolutional networks based on a structured probabilistic model for the distribution over backpropagated derivatives. Similarly to the recently proposed Kronecker-Factored Approximate Curvature (K-FAC), each block of the approximate Fisher matrix decomposes as the Kronecker product of small matrices, allowing for efficient inversion. KFC captures important curvature information while still yielding comparably efficient updates to stochastic gradient descent (SGD). We show that the updates are invariant to commonly used reparameterizations, such as centering of the activations. In our experiments, approximate natural gradient descent with KFC was able to train convolutional networks several times faster than carefully tuned SGD. Furthermore, it was able to train the networks in 10-20 times fewer \emph{iterations} than SGD, suggesting its potential applicability in a distributed setting.
\end{abstract}

\section{Introduction}

Despite advances in optimization, most neural networks are still trained using variants of stochastic gradient descent (SGD) with momentum. It has been suggested that natural gradient descent \citep{natural_gradient} could greatly speed up optimization because it accounts for the geometry of the optimization landscape and has desirable invariance properties. (See \citet{ng_martens} for a review.) Unfortunately, computing the exact natural gradient is intractable for large networks, as it requires solving a large linear system involving the Fisher matrix, whose dimension is the number of parameters (potentially tens of millions for modern architectures). Approximations to the natural gradient typically either impose very restrictive structure on the Fisher matrix \citep[e.g.][]{lecun_tricks,TONGA} or require expensive iterative procedures to compute each update, analogously to approximate Newton methods \citep[e.g.][]{HF}. An ongoing challenge has been to develop a curvature matrix approximation which reflects enough structure to yield high-quality updates, while introducing minimal computational overhead beyond the standard gradient computations.

Much progress in machine learning in the past several decades has been driven by the development of structured probabilistic models whose independence structure allows for efficient computations, yet which still capture important dependencies between the variables of interest. In our case, since the Fisher matrix is the covariance of the backpropagated log-likelihood derivatives, we are interested in modeling the distribution over these derivatives. The model must support efficient computation of the inverse covariance, as this is what's required to compute the natural gradient. Recently, the Factorized Natural Gradient (FANG) \citep{FANG} and Kronecker-Factored Approximate Curvature (K-FAC) \citep{kfac} methods exploited probabilistic models of the derivatives to efficiently compute approximate natural gradient updates. In its simplest version, K-FAC approximates each layer-wise block of the Fisher matrix as the Kronecker product of two much smaller matrices. These (very large) blocks can then be can be tractably inverted by inverting each of the two factors. K-FAC was shown to greatly speed up the training of deep autoencoders. However, its underlying probabilistic model assumed fully connected networks with no weight sharing, rendering the method inapplicable to two architectures which have recently revolutionized many applications of machine learning --- convolutional networks \citep{convnet,alexnet} and recurrent neural networks \citep{lstm,ilya_sequence}.

We introduce Kronecker Factors for Convolution (KFC), an approximation to the Fisher matrix for convolutional networks. Most modern convolutional networks have trainable parameters only in convolutional and fully connected layers. Standard K-FAC can be applied to the latter; our contribution is a factorization of the Fisher blocks corresponding to convolution layers. KFC is based on a structured probabilistic model of the backpropagated derivatives where the activations are modeled as independent of the derivatives, the activations and derivatives are spatially homogeneous, and the derivatives are spatially uncorrelated. Under these approximations, we show that the Fisher blocks for convolution layers decompose as a Kronecker product of smaller matrices (analogously to K-FAC), yielding tractable updates. 

KFC yields a tractable approximation to the Fisher matrix of a conv net. It can be used directly to compute approximate natural gradient descent updates, as we do in our experiments. One could further combine it with the adaptive step size, momentum, and damping methods from the full K-FAC algorithm \citep{kfac}. It could also potentially be used as a pre-conditioner for iterative second-order methods \citep{HF,KSD,sfo}. We show that the approximate natural gradient updates are invariant to widely used reparameterizations of a network, such as whitening or centering of the activations.

We have evaluated our method on training conv nets on object recognition benchmarks. In our experiments, KFC was able to optimize conv nets several times faster than carefully tuned SGD with momentum, in terms of both training and test error. Furthermore, it was able to train the networks in 10-20 times fewer \emph{iterations}, suggesting its usefulness in the context of highly distributed training algorithms.

\section{Background}
\label{sec:background}

In this section, we outline the K-FAC method as previously formulated for standard fully-connected feed-forward networks without weight sharing \citep{kfac}. Each layer of a fully connected network computes activations as:
\begin{align}
\preActivationsL{\layerIdx} &= \weightsL{\layerIdx} \activationsHomL{\layerIdx-1} \\
\activationsL{\layerIdx} &= \nonlinearityL{\layerIdx}(\preActivationsL{\layerIdx}),
\end{align}
where $\ell \in \{1, \ldots, \numLayers\}$ indexes the layer, $\preActivationsL{\layerIdx}$ denotes the inputs to the layer, $\activationsL{\layerIdx}$ denotes the activations, $\weightsBiasesL{\layerIdx} = (\biasVecL{\layerIdx}\ \weightsL{\layerIdx})$ denotes the matrix of biases and weights, $\activationsHomL{\layerIdx} = (1\ \activationsL{\layerIdx}^\transpose)^\transpose$ denotes the activations with a homogeneous dimension appended, and $\nonlinearityL{\layerIdx}$ denotes a nonlinear activation function (usually applied coordinate-wise). (Throughout this paper, we will use the index 0 for all homogeneous coordinates.) We will refer to the values $\preActivationsL{\layerIdx}$ as \emph{pre-activations}. By convention, $\activationsL{0}$ corresponds to the inputs $\inputData$ and $\activationsL{\numLayers}$ corresponds to the prediction $\prediction$ made by the network. For convenience, we concatenate all of the parameters of the network into a vector $\paramVec = (\kvec(\weightsL{1})^\transpose, \ldots, \kvec(\weightsL{\numLayers})^\transpose)^\transpose$, where $\kvec$ denotes the Kronecker vector operator which stacks the columns of a matrix into a vector. We denote the function computed by the network as $\function(\inputData, \paramVec) = \activationsL{\numLayers}$.

Typically, a network is trained to minimize an objective $\objective(\paramVec)$ given by $\loss(\target, \function(\inputData, \paramVec))$ as averaged over the training set, where $\loss(\target, \prediction)$ is a loss function.  The gradient $\nabla \objective$ of $\objective(\paramVec)$, which is required by most optimization methods, is estimated stochastically using mini-batches of training examples. (We will often drop the explicit $\paramVec$ subscript when the meaning is unambiguous.) For each case, $\nabla_\paramVec \objective$ is usually computed using automatic-differentiation aka backpropagation \citep{backprop,lecun_tricks}, which can be thought of as comprising two steps: first computing the pre-activation derivatives $\nabla_{\preActivationsL{\layerIdx}} \objective$ for each layer, and then computing $\nabla_{\weightsL{\layerIdx}} \objective = (\nabla_{\preActivationsL{\layerIdx}} \objective) \activationsHomLTranspose{\layerIdx-1}$. 

For the remainder of this paper, we will assume the network's prediction $\function(\inputData, \paramVec)$ determines the value of the parameter $\prediction$ of a distribution $\outputDist_{\target|\prediction}$ over $\target$, and the loss function is the corresponding negative log-likelihood $\loss(\target, \prediction) = -\log \pmf(\target | \prediction)$. 

\subsection{Second-order optimization of neural networks}
\label{sec:second_order}

Second-order optimization methods work by computing a parameter update $\direction$ that minimize (or approximately minimize) a local quadratic approximation to the objective, given by $\objective(\paramVec) + \nabla_\paramVec \objective^\top \direction + \frac{1}{2} \direction^\top \curveMat \direction$, where $\curveMat$ is a matrix which quantifies the curvature of the cost function $\objective$ at $\paramVec$.  The exact solution to this minimization problem can be obtained by solving the linear system $\curveMat \direction = -\nabla_\paramVec \objective$. The original and most well-known example is Newton's method, where $\curveMat$ is chosen to be the Hessian matrix; this isn't appropriate in the non-convex setting because of the well-known problem that it searches for critical points rather than local optima \citep[e.g.][]{saddle_points}. Therefore, it is more common to use natural gradient \citep{natural_gradient} or updates based on the generalized Gauss-Newton matrix \citep{schraudolph}, which are guaranteed to produce descent directions because the curvature matrix $\curveMat$ is positive semidefinite.   

Natural gradient descent can be usefully interpreted as a second-order method \citep{ng_martens} where $\curveMat$ is the Fisher information matrix $\fisherMat$, as given by
\begin{equation}
\fisherMat = \expect_{\substack{\inputData \sim \dataDistribution \\ \target \sim \outputDist_{\target|\function(\inputData, \paramVec)}}} \left[ \grad \paramVec (\grad \paramVec)^\transpose \right], \label{eqn:model_distribution}
\end{equation}
where $\dataDistribution$ denotes the training distribution, $\outputDist_{\target|\function(\inputData, \paramVec)}$ denotes the model's predictive distribution, and $\grad \paramVec = \nabla_{\paramVec} \loss(\target, \function(\inputData, \paramVec))$ is the log-likelihood gradient. For the remainder of this paper, all expectations are with respect to this distribution (which we term the \emph{model's distribution}), so we will leave off the subscripts. (In this paper, we will use the $\grad$ notation for log-likelihood derivatives; derivatives of other functions will be written out explicitly.) In the case where $\outputDist_{\target|\prediction}$ corresponds to an exponential family model with ``natural" parameters given by $\prediction$, $\fisherMat$ is equivalent to the generalized Gauss-Newton matrix \citep{ng_martens}, which is an approximation of the Hessian which has also seen extensive use in various neural-network optimization methods \citep[e.g.][]{HF,KSD}. 

$\fisherMat$ is an $\numParams \times \numParams$ matrix, where $\numParams$ is the number of parameters and can be in the tens of millions for modern deep architectures. Therefore, it is impractical to represent $\fisherMat$ explicitly in memory, let alone solve the linear system exactly. There are two general strategies one can take to find a good search direction.
First, one can impose a structure on $\fisherMat$ enabling tractable inversion; for instance \citet{lecun_tricks} approximates it as a diagonal matrix, TONGA \citep{TONGA} uses a more flexible low-rank-within-block-diagonal structure, and factorized natural gradient \citep{FANG} imposes a directed Gaussian graphical model structure. 

Another strategy is to approximately minimize the quadratic approximation to the objective using an iterative procedure such as conjugate gradient; this is the approach taken in Hessian-free optimization \citep{HF}, a type of truncated Newton method \citep[e.g.][]{nocedal_book}. Conjugate gradient (CG) is defined in terms of matrix-vector products $\fisherMat \direction$, which can be computed efficiently and exactly using the method outlined by \citet{schraudolph}.  While iterative approaches can produce high quality search directions, they can be very expensive in practice, as each update may require tens or even hundreds of CG iterations to reach an acceptable quality, and each of these iterations is comparable in cost to an SGD update. 

We note that these two strategies are not mutually exclusive. In the context of iterative methods, simple (e.g.~diagonal) curvature approximations can be used as preconditioners, where the iterative method is implicitly run in a coordinate system where the curvature is less extreme. It has been observed that a good choice of preconditioner can be crucial to obtaining good performance from iterative methods \citep{HF,HF_preconditioner,KSD}. Therefore, improved tractable curvature approximations such as the one we develop could likely be used to improve iterative second-order methods.

\subsection{Kronecker-factored approximate curvature}
\label{sec:kfac}

Kronecker-factored approximate curvature \citep[K-FAC;][]{kfac} is a recently proposed optimization method for neural networks which can be seen as a hybrid of the two approximation strategies: it uses a tractable approximation to the Fisher matrix $\fisherMat$, but also uses an optimization strategy which behaves locally like conjugate gradient. This section gives a conceptual summary of the aspects of K-FAC relevant to the contributions of this paper; a precise description of the full algorithm is given in Appendix \ref{app:kfac}.

The block-diagonal version of K-FAC (which is the simpler of the two versions, and is what we will present here) is based on two approximations to $\fisherMat$ which together make it tractable to invert. First, weight derivatives in different layers are assumed to be uncorrelated, which corresponds to $\fisherMat$ being block diagonal, with one block per layer:
\begin{equation}
\fisherMat \approx \begin{pmatrix} \expect[\kvec(\grad \weightsBiasesL{1}) \kvec(\grad \weightsBiasesL{1})^\transpose] & & \zeroMat \\ & \ddots & \\ \zeroMat && \expect[\kvec(\grad \weightsBiasesL{\numLayers}) \kvec(\grad \weightsBiasesL{\numLayers})^\transpose] \end{pmatrix}
\end{equation}
This approximation by itself is insufficient, because each of the blocks may still be very large. (E.g., if a network has 1,000 units in each layer, each block would be of size $10^6 \times 10^6$.) For the second approximation, observe that
\begin{equation}
\expect \left[ \grad \weightsBiasesLIJ{\layerIdx}{\dimIdxOne}{\dimIdxTwo}  \grad \weightsBiasesLIJ{\layerIdx}{\dimIdxOne^\prime}{\dimIdxTwo^\prime} \right] = \expect \left[ \grad \preActivationsLI{\layerIdx}{\dimIdxOne} \activationsHomLJ{\layerIdx-1}{\dimIdxTwo} \grad \preActivationsLI{\layerIdx}{\dimIdxOne^\prime} \activationsHomLJ{\layerIdx-1}{\dimIdxTwo^\prime} \right].
\end{equation}
If we approximate the activations and pre-activation derivatives as independent, this can be decomposed as $\expect \left[ \grad \weightsBiasesLIJ{\layerIdx}{\dimIdxOne}{\dimIdxTwo}  \grad \weightsBiasesLIJ{\layerIdx}{\dimIdxOne^\prime}{\dimIdxTwo^\prime} \right] \approx \expect \left[ \grad \preActivationsLI{\layerIdx}{\dimIdxOne} \grad \preActivationsLI{\layerIdx}{\dimIdxOne^\prime} \right] \expect \left[ \activationsHomLJ{\layerIdx-1}{\dimIdxTwo} \activationsHomLJ{\layerIdx-1}{\dimIdxTwo^\prime} \right]$. This can be written algebraically as a decomposition into a Kronecker product of two smaller matrices:
\begin{equation}
\expect[\kvec(\weightsBiasesL{\layerIdx}) \kvec(\weightsBiasesL{\layerIdx})^\transpose] \approx \covActivationsL{\layerIdx-1} \otimes \covPreActivationGradientsL{\layerIdx} \triangleq \fisherMatApproxL{\layerIdx}, \label{eqn:approx_fisher_blocks}
\end{equation}
where $\covActivationsL{\layerIdx-1} = \expect[\activationsHomL{\layerIdx-1} \activationsHomLTranspose{\layerIdx-1}]$ and $\covPreActivationGradientsL{\layerIdx} = \expect[\preActivationsL{\layerIdx} \preActivationsLTranspose{\layerIdx}]$ denote the second moment matrices of the activations and pre-activation derivatives, respectively. Call the block diagonal approximate Fisher matrix, with blocks given by Eqn.~\ref{eqn:approx_fisher_blocks}, $\fisherMatApprox$. The two factors are estimated online from the empirical moments of the model's distribution using exponential moving averages.

To invert $\fisherMatApprox$, we use the facts that (1) we can invert a block diagonal matrix by inverting each of the blocks, and (2) the Kronecker product satisfies the identity $({\bf A} \otimes {\bf B})^{-1} = {\bf A}^{-1} \otimes {\bf B}^{-1}$:
\begin{equation}
\fisherMatApprox^{-1} = \begin{pmatrix} \covActivationsLInv{0} \otimes \covPreActivationGradientsLInv{1} & & \zeroMat \\ & \ddots & \\ \zeroMat & & \covActivationsLInv{\numLayers-1} \otimes \covPreActivationGradientsLInv{\numLayers} \end{pmatrix}
\end{equation}
We do not represent $\fisherMatApprox^{-1}$ explicitly, as each of the blocks is quite large. Instead, we keep track of each of the Kronecker factors. 

The approximate natural gradient $\fisherMatApprox^{-1} \nabla \objective$ can then be computed as follows:
\begin{equation}
\fisherMatApprox^{-1} \nabla \objective = \begin{pmatrix} \kvec \left( \covPreActivationGradientsLInv{1} (\nabla_{\weightsBiasesL{1}} \objective) \covActivationsLInv{0} \right) \\ \vdots \\ \kvec \left( \covPreActivationGradientsLInv{\numLayers} (\nabla_{\weightsBiasesL{\numLayers}} \objective) \covActivationsLInv{\numLayers-1} \right) \end{pmatrix} \label{eqn:approx_ng}
\end{equation}

We would often like to add a multiple of the identity matrix to $\fisherMat$ for two reasons. First, many networks are regularized with weight decay, which corresponds to a penalty of $\frac{1}{2} \lambdaParam \paramVec^\transpose \paramVec$, for some parameter $\lambdaParam$. Following the interpretation of $\fisherMat$ as a quadratic approximation to the curvature, it would be appropriate to use $\fisherMat + \lambdaParam \ident$ to approximate the curvature of the regularized objective. The second reason is that the local quadratic approximation of $\objective$ implicitly used when computing the natural gradient may be inaccurate over the region of interest, owing to the approximation of $\fisherMat$ by $\fisherMatApprox$, to the approximation of the Hessian by $\fisherMat$, and finally to the error associated with approximating $\objective$ as locally quadratic in the first place.  A common way to address this issue is to damp the updates by adding $\gammaParam \ident$ to the approximate curvature matrix, for some small value $\gammaParam$, before minimizing the local quadratic model.  Therefore, we would ideally like to compute $\left[ \fisherMatApprox + (\lambdaParam + \gammaParam) \ident \right]^{-1} \nabla \objective$.

Unfortunately, adding $(\lambdaParam + \gammaParam)\ident$ breaks the Kronecker factorization structure. While it is possible to exactly solve the damped system (see Appendix~\ref{app:kfac}), it is often preferable to approximate $\fisherMatApprox + (\lambdaParam + \gammaParam) \ident$ in a way that maintains the factorizaton structure. \citet{kfac} pointed out that
\begin{equation}
\fisherMatApproxL{\layerIdx} + (\lambdaParam + \gammaParam) \ident \approx \left( \covActivationsL{\layerIdx-1} + \piParamL{\layerIdx} \sqrt{\lambdaParam + \gammaParam}\, \ident \right) \otimes \left( \covPreActivationGradientsL{\layerIdx} + \frac{1}{\piParamL{\layerIdx}} \sqrt{\lambdaParam + \gammaParam}\, \ident \right). \label{eqn:factored_damping}
\end{equation}
We will denote this damped approximation as $\fisherMatApproxGammaL{\gammaParam}{\layerIdx} = \covActivationsGammaL{\gammaParam}{\layerIdx-1} \otimes \covPreActivationGradientsGammaL{\gammaParam}{\layerIdx}$. Mathematically, $\piParamL{\layerIdx}$ can be any positive scalar, but \citet{kfac} suggest the formula
\begin{equation}
\piParamL{\layerIdx} = \sqrt{\frac{\| \covActivationsL{\layerIdx-1} \otimes \ident \|}{\|\ident \otimes \covPreActivationGradientsL{\layerIdx} \|}},
\end{equation}
where $\|\cdot\|$ denotes some matrix norm, as this value minimizes the norm of the residual in Eqn.~\ref{eqn:factored_damping}. In this work, we use the trace norm $\|{\bf B}\| = \tr {\bf B}$. The approximate natural gradient $\approxNatGrad$ is then computed as:
\begin{equation}
\approxNatGrad \triangleq [\fisherMatApproxGamma{\gammaParam}]^{-1} \nabla \objective = \begin{pmatrix} \kvec \left( \covPreActivationGradientsGammaLInv{\gammaParam}{1} (\nabla_{\weightsBiasesL{1}} \objective) \covActivationsGammaLInv{\gammaParam}{0} \right) \\ \vdots \\ \kvec \left( \covPreActivationGradientsGammaLInv{\gammaParam}{\numLayers} (\nabla_{\weightsBiasesL{\numLayers}} \objective) \covActivationsGammaLInv{\gammaParam}{\numLayers-1} \right) \end{pmatrix} \label{eqn:approx_ng_damped}
\end{equation}

The algorithm as presented by \citet{kfac} has many additional elements which are orthogonal to the contributions of this paper. For concision, a full description of the algorithm is relegated to Appendix~\ref{app:kfac}.

\subsection{Convolutional networks}

Convolutional networks require somewhat crufty notation when the computations are written out in full. In our case, we are interested in computing correlations of derivatives, which compounds the notational difficulties. In this section, we summarize the notation we use. (Table \ref{tab:notation} lists all convolutional network notation used in this paper.) In sections which focus on a single layer of the network, we drop the explicit layer indices.

A convolution layer takes as input a layer of activations $\{ \activationsJS{\inputMapIdx}{\spatIdx} \}$, where $\inputMapIdx \in \{1, \ldots, \numInputMaps\}$ indexes the input map and $\spatIdx \in \spatIdxSet$ indexes the spatial location. (Here, $\spatIdxSet$ is the set of spatial locations, which is typically a 2-D grid. For simplicity, we assume convolution is performed with a stride of 1 and padding equal to $\kernelRad$, so that the set of spatial locations is shared between the input and output feature maps.) This layer is parameterized by a set of weights $\weightsIJS{\outputMapIdx}{\inputMapIdx}{\spatOffset}$ and biases $\biasI{\outputMapIdx}$, where $\outputMapIdx \in \{1, \ldots, \numOutputMaps\}$ indexes the output map, $\inputMapIdx$ indexes the input map, and $\spatOffset \in \spatOffsetSet$ indexes the spatial offset (from the center of the filter). If the filters are of size $(2\kernelRad + 1) \times (2\kernelRad+1)$, then we would have $\spatOffsetSet = \{-\kernelRad, \ldots, \kernelRad\} \times \{-\kernelRad, \ldots, \kernelRad\}$. We denote the numbers of spatial locations and spatial offsets as $\numLocs$ and $\numOffsets$, respectively. 
The convolution layer computes a set of pre-activations $\{\preActivationsIS{\outputMapIdx}{\spatIdx}\}$ as follows:
\begin{equation}
\preActivationsIS{\outputMapIdx}{\spatIdx} = \sum_{\spatOffset \in \spatOffsetSet} \weightsIJS{\outputMapIdx}{\inputMapIdx}{\spatOffset} \activationsJS{\inputMapIdx}{\spatIdx+\spatOffset} + \biasI{\outputMapIdx},
\end{equation}
where $\biasI{\outputMapIdx}$ denotes the bias parameter. The activations are defined to take the value 0 outside of $\spatIdxSet$. The pre-activations are passed through a nonlinearity such as ReLU to compute the output layer activations, but we have no need to refer to this explicitly when analyzing a single layer. (For simplicity, we assume operations such as pooling and response normalization are implemented as separate layers.) 

Pre-activation derivatives $\grad \preActivationsIS{\outputMapIdx}{\spatIdx}$ are computed during backpropagation. One then computes weight derivatives as:
\begin{equation}
\grad \weightsIJS{\outputMapIdx}{\inputMapIdx}{\spatOffset} = \sum_{\spatIdx \in \spatIdxSet} \activationsJS{\inputMapIdx}{\spatIdx+\spatOffset} \grad \preActivationsIS{\outputMapIdx}{\spatIdx}. \label{eqn:weight_grad}
\end{equation}

\subsubsection{Efficient implementation and vectorized notation}
\label{sec:conv_nets_efficient}

For modern large-scale vision applications, it's necessary to implement conv nets efficiently for a GPU (or some other parallel architecture). We provide a very brief overview of the low-level efficiency issues which are relevant to K-FAC. We base our discussion on the Toronto Deep Learning ConvNet (TDLCN) package \citep{tdlcn}, whose convolution kernels we use in our experiments. Like many modern implementations, this implementation follows the approach of \citet{convolution_by_expansion}, which reduces the convolution operations to large matrix-vector products in order to exploit memory locality and efficient parallel BLAS operators. We describe the implementation explicitly, as it is important that our proposed algorithm be efficient using the same memory layout (shuffling operations are extremely expensive). As a bonus, these vectorized operations provide a convenient high-level notation which we will use throughout the paper.

The ordering of arrays in memory is significant, as it determines which operations can be performed efficiently without requiring (very expensive) transpose operations. The activations are stored as a $\mbsize \times \numLocs \times \numInputMaps$ array $\activationsMatBatchL{\layerIdx-1}$, where $\mbsize$ is the mini-batch size, $\numLocs$ is the number of spatial locations, and $\numInputMaps$ is the number of feature maps.\footnote{The first index of the array is the least significant in memory.} This can be interpreted as an $\mbsize \numLocs \times \numInputMaps$ matrix. (We must assign orderings to $\spatIdxSet$ and $\spatOffsetSet$, but this choice is arbitrary.) 
Similarly, the weights are stored as an $\numOutputMaps \times \numOffsets \times \numInputMaps$ array $\weightsL{\layerIdx}$, which can be interpreted either as an $\numOutputMaps \times \numOffsets \numInputMaps$ matrix or a $\numOutputMaps \numOffsets \times \numInputMaps$ matrix without reshuffling elements in memory. We will almost always use the former interpretation, which we denote $\weightsL{\layerIdx}$; the $\numOutputMaps \numOffsets \times \numInputMaps$ matrix will be denoted $\weightsReshapeL{\layerIdx}$.

The naive implementation of convolution, while highly parallel in principle, suffers from poor memory locality. Instead, efficient implementations typically use what we will term the \emph{expansion operator} and denote $\expansion{\cdot}$. This operator extracts the patches surrounding each spatial location and flattens them into vectors. These vectors become the rows of a matrix. For instance, $\expansion{\activationsMatBatchL{\layerIdx-1}}$ is a $\mbsize \numLocs \times \numInputMaps \numOffsets$ matrix, defined as
\begin{equation}
\expansion{\activationsMatBatchL{\layerIdx-1}}_{\spatIdx \mbsize + \mbIdx, \, \inputMapIdx \numOffsets + \spatOffset} = [\activationsMatBatchL{\layerIdx-1}]_{(\spatIdx+\spatOffset) \mbsize + \mbIdx, \, \inputMapIdx} = \activationsJSM{\inputMapIdx}{\spatIdx+\spatOffset}{\mbIdx},
\end{equation}
for all entries such that $\spatIdx + \spatOffset \in \spatIdxSet$. All other entries are defined to be 0. Here, $\mbIdx$ indexes the data instance within the mini-batch.

In TDLCN, the forward pass is computed as
\begin{equation}
\activationsMatBatchL{\layerIdx} = \nonlinearity(\preActivationsMatBatchL{\layerIdx}) = \nonlinearity \left( \expansion{\activationsMatBatchL{\layerIdx-1}} \weightsL{\layerIdx}^\transpose + \onesVec \biasVecL{\layerIdx}^\transpose \right), \label{eqn:conv_up_mat}
\end{equation}
where $\nonlinearity$ is the nonlinearity, applied elementwise, $\onesVec$ is a vector of ones, and $\biasVec$ is the vector of biases. In backpropagation, the activation derivatives are computed as:
\begin{equation}
\grad \activationsMatBatchL{\layerIdx-1} = \expansion{\grad \preActivationsMatBatchL{\layerIdx}} \weightsReshapeL{\layerIdx}.
\end{equation}
Finally, the gradient for the weights is computed as
\begin{equation}
\grad \weightsL{\layerIdx} = \grad \preActivationsMatBatchL{\layerIdx}^\transpose \expansion{\activationsMatBatchL{\layerIdx-1}} \label{eqn:conv_outp_mat}
\end{equation}
The matrix products are computed using the cuBLAS function \verb+cublasSgemm+. In practice, the expanded matrix $\expansion{\activationsMatBatchL{\layerIdx-1}}$ may be too large to store in memory. In this case, a subset of the rows of $\expansion{\activationsMatBatchL{\layerIdx-1}}$ are computed and processed at a time. 

We will also use the $\numLocs \times \numInputMaps$ matrix $\activationsMatL{\layerIdx-1}$ and the $\numLocs \times \numOutputMaps$ matrix $\preActivationsMatL{\layerIdx}$ to denote the activations and pre-activations for a single training case. $\activationsMatL{\layerIdx-1}$ and $\preActivationsMatL{\layerIdx}$ can be substituted for $\activationsMatBatchL{\layerIdx-1}$ and $\preActivationsMatBatchL{\layerIdx}$ in Eqns.~\ref{eqn:conv_up_mat}-\ref{eqn:conv_outp_mat}.

For fully connected networks, it is often convenient to append a homogeneous coordinate to the activations so that the biases can be folded into the weights (see Section~\ref{sec:kfac}). For convolutional layers, there is no obvious way to add extra activations such that the convolution operation simulates the effect of biases. However, we can achieve an analogous effect by adding a homogeneous coordinate (i.e.~a column of all 1's) to the \emph{expanded} activations. We will denote this $\expansion{\activationsMatBatchL{\layerIdx-1}}_\homog$. Similarly, we can prepend the bias vector to the weights matrix: $\weightsBiasesL{\layerIdx} = \left( \biasVecL{\layerIdx}\ \weightsL{\layerIdx}\right)$. The homogeneous coordinate is not typically used in conv net implementations, but it will be convenient for us notationally. For instance, the forward pass can be written as:
\begin{equation}
\activationsMatBatchL{\layerIdx} = \nonlinearity \left( \expansion{\activationsMatBatchL{\layerIdx-1}}_\homog \weightsBiasesL{\layerIdx}^\transpose \right) \label{eqn:conv_up_mat_homog}
\end{equation}

Table \ref{tab:notation} summarizes all of the conv net notation used in this paper.

\begin{table}
\begin{small}
\begin{minipage}{0.5 \textwidth}
\begin{tabular}{rl}
$\inputMapIdx$ & input map index \\
$\numInputMaps$ & number of input maps \\
$\outputMapIdx$ & output map index \\
$\numOutputMaps$ & number of output maps \\
$\numRows \times \numCols$ & feature map dimension \\
$\spatIdx$ & spatial location index \\
$\spatIdxSet$ & set of spatial locations \\
& $= \{ 1, \ldots, \numRows \} \times \{1, \ldots, \numCols\}$ \\
$\kernelRad$ & radius of filters \\
$\spatOffset$ & spatial offset \\
$\spatOffsetSet$ & set of spatial offsets (in a filter) \\
& $= \{-\kernelRad, \ldots, \kernelRad\} \times \{-\kernelRad, \ldots, \kernelRad\}$ \\
$\spatOffset = (\spatOffsetRow, \spatOffsetCol)$ & explicit 2-D parameterization \\
& ($\spatOffsetRow$ and $\spatOffsetCol$ run from $-\kernelRad$ to $\kernelRad$) \\
$\activationsJS{\inputMapIdx}{\spatIdx}$ & input layer activations \\
$\preActivationsIS{\outputMapIdx}{\spatIdx}$ & output layer pre-activations \\
$\grad \preActivationsIS{\outputMapIdx}{\spatIdx}$ & the loss derivative $\partial \loss / \partial \preActivationsIS{\outputMapIdx}{\spatIdx}$ \\
$\nonlinearity$ & activation function (nonlinearity) \\
$\weightsIJS{\outputMapIdx}{\inputMapIdx}{\spatOffset}$ & weights \\
$\biasI{\outputMapIdx}$ & biases \\
$\meanActivations(\inputMapIdx)$ & mean activation \\
$\autoActivations(\inputMapIdx, \inputMapIdx^\prime, \spatOffset)$ & uncentered autocovariance of \\
& activations \\
$\autoGrad(\outputMapIdx, \outputMapIdx^\prime, \spatOffset)$ & autocovariance of \\
& pre-activation derivatives \\
$\boundaryFunction(\spatOffset, \spatOffset^\prime)$ & function defined in Theorem \ref{thm:kfc_sud}
\end{tabular}
\end{minipage}
\begin{minipage}{0.5 \textwidth}
\begin{tabular}{rl}
$\otimes$ & Kronecker product \\
$\kvec$ & Kronecker vector operator \\
$\layerIdx$ & layer index \\
$\numLayers$ & number of layers \\
$\mbsize$ & size of a mini-batch \\
$\activationsMatL{\layerIdx}$ & activations for a data instance \\
$\activationsMatBatchL{\layerIdx}$ & activations for a mini-batch \\
$\expansion{\activationsMatL{\layerIdx}}$ & expanded activations \\
$\expansion{\activationsMatL{\layerIdx}}_\homog$ & expanded activations with \\
& homogeneous coordinate \\
$\preActivationsMatL{\layerIdx}$ & pre-activations for a data instance \\
$\preActivationsMatBatchL{\layerIdx}$ & pre-activations for a mini-batch \\
$\grad \preActivationsMatL{\layerIdx}$ & the loss gradient $\nabla_{\preActivationsMatL{\layerIdx}} \loss$ \\
$\paramVec$ & vector of trainable parameters \\
$\weightsL{\layerIdx}$ & weight matrix \\
$\biasVecL{\layerIdx}$ & bias vector \\
$\weightsBiasesL{\layerIdx}$ & combined parameters = $\left( \biasVecL{\layerIdx} \ \weightsL{\layerIdx} \right)$ \\
$\fisherMat$ & exact Fisher matrix \\
$\fisherMatApprox$ & approximate Fisher matrix \\
$\fisherMatApproxL{\layerIdx}$ & diagonal block of $\fisherMatApprox$ for layer $\layerIdx$ \\
$\convKronActL{\layerIdx}$ & Kronecker factor for activations \\
$\convKronGradL{\layerIdx}$ & Kronecker factor for derivatives \\
$\lambdaParam$ & weight decay parameter \\
$\gammaParam$ & damping parameter \\
$\fisherMatApproxGamma{\gammaParam}$ & damped approximate Fisher matrix \\
$\convKronActGammaL{\gammaParam}{\layerIdx}$, $\convKronGradGammaL{\gammaParam}{\layerIdx}$ & damped Kronecker factors
\end{tabular}
\end{minipage}
\end{small}
\caption{Summary of convolutional network notation used in this paper. The left column focuses on a single convolution layer, which convolves its ``input layer'' activations with a set of filters to produce the pre-activations for the ``output layer.'' Layer indices are omitted for clarity. The right column considers the network as a whole, and therefore includes explicit layer indices.}
\label{tab:notation}
\end{table}

\section{Kronecker factorization for convolution layers}
\label{sec:kfc}

We begin by assuming a block-diagonal approximation to the Fisher matrix like that of K-FAC, where each block contains all the parameters relevant to one layer (see Section~\ref{sec:kfac}). (Recall that these blocks are typically too large to invert exactly, or even represent explicitly, which is why the further Kronecker approximation is required.) The Kronecker factorization from K-FAC applies only to fully connected layers. Convolutional networks introduce several kinds of layers not found in fully connected feed-forward networks: convolution, pooling, and response normalization. Since pooling and response normalization layers don't have trainable weights, they are not included in the Fisher matrix. However, we must deal with convolution layers. In this section, we present our main contribution, an approximate Kronecker factorization for the blocks of $\fisherMatApprox$ corresponding to convolution layers. In the tradition of fast food puns \citep{mcrbm,deep_fried}, we call our method Kronecker Factors for Convolution (KFC). 

For this section, we focus on the Fisher block for a single layer, so we drop the layer indices. Recall that the Fisher matrix $\fisherMat = \expect \left[ \grad \paramVec (\grad \paramVec)^\transpose \right]$ is the covariance of the log-likelihood gradient under the model's distribution. (In this paper, all expectations are with respect to the model's distribution unless otherwise specified.) By plugging in Eqn.~\ref{eqn:weight_grad}, the entries corresponding to weight derivatives are given by:
\begin{align}
\expect[\gradWeightsIJS{\dimIdxOne}{\dimIdxTwo}{\spatOffset} \gradWeightsIJS{\dimIdxOne^\prime}{\dimIdxTwo^\prime}{\spatOffset^\prime}] 
&= \expect \left[ \left( \sum_{\spatIdx \in \spatIdxSet} \activationsJS{\inputMapIdx}{\spatIdx+\spatOffset} \gradPreActivationsIS{\outputMapIdx}{\spatIdx} \right) \left( \sum_{\spatIdx^\prime \in \spatIdxSet} \activationsJS{\inputMapIdx^\prime}{\spatIdx^\prime+\spatOffset^\prime} \gradPreActivationsIS{\outputMapIdx^\prime}{\spatIdx^\prime} \right) \right] \label{eqn:exact_fisher_block} 
\end{align}
To think about the computational complexity of computing the entries directly, consider the second convolution layer of AlexNet \citep{alexnet}, which has 48 input feature maps, 128 output feature maps, $27 \times 27 = 729$ spatial locations, and $5 \times 5$ filters. Since there are $128 \times 48 \times 5 \times 5 = 245760$ weights and 128 biases, the full block would require $245888^2 \approx$ 60.5 billion entries to represent explicitly, and inversion is clearly impractical. 

Recall that K-FAC approximation for classical fully connected networks can be derived by approximating activations and pre-activation derivatives as being statistically independent (this is the {\bf IAD} approximation below). Deriving an analogous Fisher approximation for convolution layers will require some additional approximations.  

Here are the approximations we will make in deriving our Fisher approximation:
\begin{itemize}
\item {\bf Independent activations and derivatives (IAD).} The activations are independent of the pre-activation derivatives, \emph{i.e.}~$\{\activationsJS{\inputMapIdx}{\spatIdx}\} \indep \{\gradPreActivationsIS{\outputMapIdx}{\spatIdx^\prime}\}$.
\item {\bf Spatial homogeneity (SH).} The first-order statistics of the activations are independent of spatial location. The second-order statistics of the activations and pre-activation derivatives at any two spatial locations $\spatIdx$ and $\spatIdx^\prime$ depend only on $\spatIdx^\prime - \spatIdx$. This implies there are functions $\meanActivations$, $\autoActivations$ and $\autoGrad$ such that:
\begin{align}
\expect \left[ \activationsJS{\inputMapIdx}{\spatIdx} \right] &= \meanActivations(\inputMapIdx) \\
\expect \left[ \activationsJS{\inputMapIdx}{\spatIdx} \activationsJS{\inputMapIdx^\prime}{\spatIdx^\prime} \right] &= \autoActivations(\inputMapIdx, \inputMapIdx^\prime, \spatIdx^\prime - \spatIdx) \\
\expect \left[ \gradPreActivationsIS{\outputMapIdx}{\spatIdx} \gradPreActivationsIS{\outputMapIdx^\prime}{\spatIdx^\prime} \right] &= \autoGrad(\outputMapIdx, \outputMapIdx^\prime, \spatIdx^\prime - \spatIdx).
\end{align}
Note that $\expect[  \gradPreActivationsIS{\outputMapIdx}{\spatIdx} ] = 0$ under the model's distribution, so $\Cov \left( \gradPreActivationsIS{\outputMapIdx}{\spatIdx},  \gradPreActivationsIS{\outputMapIdx^\prime}{\spatIdx^\prime} \right) = \expect \left[ \gradPreActivationsIS{\outputMapIdx}{\spatIdx} \gradPreActivationsIS{\outputMapIdx^\prime}{\spatIdx^\prime} \right]$.
\item {\bf Spatially uncorrelated derivatives (SUD).} The pre-activation derivatives at any two distinct spatial locations are uncorrelated, \emph{i.e.}~$\autoGrad(\outputMapIdx, \outputMapIdx^\prime, \spatOffset) = 0$ for $\spatOffset \neq 0$.
\end{itemize}
We believe {\bf SH} is fairly innocuous, as one is implicitly making a spatial homogeneity assumption when choosing to use convolution in the first place. {\bf SUD} perhaps sounds like a more severe approximation, but in fact appeared to describe the model's distribution quite well in the networks we investigated; this is analyzed empirially in Section~\ref{sec:eval_assumptions}.

We now show that combining the above three approximations yields a Kronecker factorization of the Fisher blocks. For simplicity of notation, assume the data are two-dimensional, so that the offsets can be parameterized with indices $\spatOffset = (\spatOffsetRow, \spatOffsetCol)$ and $\spatOffset^\prime = (\spatOffsetRow^\prime, \spatOffsetCol^\prime)$, and denote the dimensions of the activations map as $(\numRows, \numCols)$. The formulas can be generalized to data dimensions higher than 2 in the obvious way.
\begin{theorem}
\label{thm:kfc_sud}
Combining approximations {\bf IAD}, {\bf SH}, and {\bf SUD} yields the following factorization:
\begin{align}
\expect \left[ \gradWeightsIJS{\outputMapIdx}{\inputMapIdx}{\spatOffset} \gradWeightsIJS{\outputMapIdx^\prime}{\inputMapIdx^\prime}{\spatOffset^\prime} \right] &= \boundaryFunction(\spatOffset, \spatOffset^\prime) \, \autoActivations(\inputMapIdx, \inputMapIdx^\prime, \spatOffset^\prime - \spatOffset) \, \autoGrad(\outputMapIdx, \outputMapIdx^\prime, 0), \nonumber \\
\expect \left[ \gradWeightsIJS{\outputMapIdx}{\inputMapIdx}{\spatOffset} \grad \biasI{\outputMapIdx^\prime} \right] &= \boundaryFunctionUni(\spatOffset) \, \meanActivations(\inputMapIdx) \, \autoGrad(\outputMapIdx, \outputMapIdx^\prime, 0) \nonumber \\
\expect \left[ \grad \biasI{\outputMapIdx} \grad \biasI{\outputMapIdx^\prime} \right] &= \numLocs \, \autoGrad(\outputMapIdx, \outputMapIdx^\prime, 0) \label{eqn:naive_kfc}
\end{align}
where
\begin{align}
\boundaryFunctionUni(\spatOffset) &\triangleq \left( \numRows - |\spatOffsetRow| \right) \, \left( \numCols - |\spatOffsetCol| \right) \nonumber \\
\boundaryFunction(\spatOffset, \spatOffset^\prime) &\triangleq \left( \numRows - \max(\spatOffsetRow, \spatOffsetRow^\prime, 0) + \min(\spatOffsetRow, \spatOffsetRow^\prime, 0) \right) \cdot \left( \numCols - \max(\spatOffsetCol, \spatOffsetCol^\prime, 0) + \min(\spatOffsetCol, \spatOffsetCol^\prime, 0) \right)
\end{align}
\end{theorem}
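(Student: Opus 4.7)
The plan is to start from the exact expression for $\expect[\gradWeightsIJS{\outputMapIdx}{\inputMapIdx}{\spatOffset} \gradWeightsIJS{\outputMapIdx^\prime}{\inputMapIdx^\prime}{\spatOffset^\prime}]$ in Eqn.~\ref{eqn:exact_fisher_block}, expand the product of the two sums to obtain
\[
\sum_{\spatIdx, \spatIdx^\prime \in \spatIdxSet} \expect\!\left[ \activationsJS{\inputMapIdx}{\spatIdx+\spatOffset} \activationsJS{\inputMapIdx^\prime}{\spatIdx^\prime+\spatOffset^\prime} \gradPreActivationsIS{\outputMapIdx}{\spatIdx} \gradPreActivationsIS{\outputMapIdx^\prime}{\spatIdx^\prime}\right],
\]
and then push the three approximations through the summand in sequence. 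First I would apply \textbf{IAD} to split the fourth-order expectation into a product of a second-order activation moment and a second-order derivative moment. Next, \textbf{SUD} immediately collapses the double sum over $(\spatIdx,\spatIdx^\prime)$ to the diagonal $\spatIdx = \spatIdx^\prime$, leaving a single sum with the derivative factor replaced by $\autoGrad(\outputMapIdx,\outputMapIdx^\prime,0)$ (which is a constant in $\spatIdx$ and so may be pulled out). Finally, \textbf{SH} identifies the activation factor, at any $\spatIdx$ for which both shifted locations lie in $\spatIdxSet$, with $\autoActivations(\inputMapIdx,\inputMapIdx^\prime,\spatOffset^\prime - \spatOffset)$; the convention that activations vanish outside $\spatIdxSet$ kills the remaining terms.

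What is left is the purely combinatorial problem of counting
\[
\boundaryFunction(\spatOffset,\spatOffset^\prime) \;=\; \bigl|\{\spatIdx \in \spatIdxSet : \spatIdx + \spatOffset \in \spatIdxSet \text{ and } \spatIdx + \spatOffset^\prime \in \spatIdxSet\}\bigr|.
\]
Since $\spatIdxSet = \{1,\ldots,\numRows\}\times\{1,\ldots,\numCols\}$ is a product, this count factors across the two spatial axes, so it suffices to count valid $t_1$ given $(\spatOffsetRow,\spatOffsetRow^\prime)$. The three simultaneous constraints $1 \le t_1 \le \numRows$, $1 \le t_1+\spatOffsetRow \le \numRows$, $1 \le t_1+\spatOffsetRow^\prime \le \numRows$ reduce to $\max(1,1-\spatOffsetRow,1-\spatOffsetRow^\prime)\le t_1 \le \min(\numRows,\numRows-\spatOffsetRow,\numRows-\spatOffsetRow^\prime)$, which gives exactly $\numRows - \max(\spatOffsetRow,\spatOffsetRow^\prime,0) + \min(\spatOffsetRow,\spatOffsetRow^\prime,0)$ integers; multiplying by the analogous factor for the column axis recovers the stated $\boundaryFunction$. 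I expect this boundary bookkeeping to be the only subtle step: the probabilistic manipulations are immediate consequences of the three assumptions, whereas the case analysis in $\max/\min$ is what makes the formula look asymmetric.

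The bias-weight and bias-bias formulas follow the same recipe but are strictly easier. For $\expect[\gradWeightsIJS{\outputMapIdx}{\inputMapIdx}{\spatOffset} \grad\biasI{\outputMapIdx^\prime}]$, applying \textbf{IAD} plus \textbf{SUD} to the double sum over $(\spatIdx,\spatIdx^\prime)$ again collapses to the diagonal with an $\autoGrad(\outputMapIdx,\outputMapIdx^\prime,0)$ factor, and then \textbf{SH} replaces the single activation expectation by $\meanActivations(\inputMapIdx)$ wherever $\spatIdx + \spatOffset \in \spatIdxSet$; the resulting count is $(\numRows - |\spatOffsetRow|)(\numCols - |\spatOffsetCol|) = \boundaryFunctionUni(\spatOffset)$, which is the specialization of the previous count to the case $\spatOffset^\prime = 0$. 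The bias-bias term has no activation factors at all, so only \textbf{SUD} is needed to collapse it to $\sum_{\spatIdx \in \spatIdxSet} \autoGrad(\outputMapIdx,\outputMapIdx^\prime,0) = \numLocs\,\autoGrad(\outputMapIdx,\outputMapIdx^\prime,0)$. Assembling the three cases gives the theorem.
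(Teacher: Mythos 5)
Your proposal is correct and follows essentially the same route as the paper's proof: the paper isolates the \textbf{IAD} factorization as a separate lemma, then applies \textbf{SH} and \textbf{SUD} to collapse the double sum over $(\spatIdx,\spatIdx^\prime)$ to the diagonal and reduce the problem to counting $\left|\{\spatIdx \in \spatIdxSet : \spatIdx+\spatOffset \in \spatIdxSet,\ \spatIdx+\spatOffset^\prime \in \spatIdxSet\}\right|$, exactly as you do (modulo the harmless reordering of \textbf{SH} and \textbf{SUD}). Your explicit derivation of that count via the interval $\max(1,1-\spatOffsetRow,1-\spatOffsetRow^\prime)\le t_1\le\min(\numRows,\numRows-\spatOffsetRow,\numRows-\spatOffsetRow^\prime)$ is a correct elaboration of a step the paper leaves implicit.
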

\begin{proof}
See Appendix~\ref{app:proofs}.
\end{proof}

To talk about how this fits in to the block diagonal approximation to the Fisher matrix $\fisherMat$, we now restore the explicit layer indices and use the vectorized notation from Section~\ref{sec:conv_nets_efficient}. The above factorization yields a Kronecker factorization of each block, which will be useful for computing their inverses (and ultimately our approximate natural gradient). 
In particular, if $\fisherMatApproxL{\layerIdx} \approx \expect[\kvec(\grad \weightsBiasesL{\layerIdx}) \kvec(\grad \weightsBiasesL{\layerIdx})^\transpose]$ denotes the block of the approximate Fisher for layer $\layerIdx$, Eqn.~\ref{eqn:naive_kfc} yields our KFC factorization of $\fisherMatApproxL{\layerIdx}$ into a Kronecker product of smaller factors:
\begin{equation}
\fisherMatApproxL{\layerIdx} = \convKronActL{\layerIdx-1} \otimes \convKronGradL{\layerIdx},
\end{equation}
where
\begin{align}
[\convKronActL{\layerIdx-1}]_{\inputMapIdx \numOffsets + \spatOffset, \, \inputMapIdx^\prime \numOffsets + \spatOffset^\prime} &\triangleq \boundaryFunction(\spatOffset, \spatOffset^\prime) \, \autoActivations(\inputMapIdx, \inputMapIdx^\prime, \spatOffset^\prime - \spatOffset) \nonumber \\
[\convKronActL{\layerIdx-1}]_{\inputMapIdx \numOffsets + \spatOffset, \, 0} = [\convKronActL{\layerIdx-1}]_{0, \, \inputMapIdx \numOffsets + \spatOffset} &\triangleq \boundaryFunctionUni(\spatOffset) \, \meanActivations(\inputMapIdx) \nonumber \\
[\convKronActL{\layerIdx-1}]_{0, \, 0} &\triangleq \numLocs \nonumber \\
[\convKronGradL{\layerIdx}]_{\outputMapIdx, \outputMapIdx^\prime} &\triangleq \autoGrad(\outputMapIdx, \outputMapIdx^\prime, 0). \label{eqn:conv_kron_formulas}
\end{align}
(We will derive much simpler formulas for $\convKronActL{\layerIdx-1}$ and $\convKronGradL{\layerIdx}$ in the next section.) Using this factorization, the rest of the K-FAC algorithm can be carried out without modification. For instance, we can compute the approximate natural gradient using a damped version of $\fisherMatApprox$ analogously to Eqns.~\ref{eqn:factored_damping} and \ref{eqn:approx_ng_damped} of Section~\ref{sec:kfac}:
\begin{align}
\fisherMatApproxGammaL{\gammaParam}{\layerIdx} &= \convKronActGammaL{\gammaParam}{\layerIdx-1} \otimes \convKronGradGammaL{\gammaParam}{\layerIdx} \\
&\triangleq \left( \convKronActL{\layerIdx-1} + \piParamL{\layerIdx} \sqrt{\lambdaParam + \gammaParam}\, \ident \right) \otimes \left( \convKronGradL{\layerIdx} + \frac{1}{\piParamL{\layerIdx}} \sqrt{\lambdaParam + \gammaParam}\, \ident \right). \label{eqn:factored_damping_conv} \\
\approxNatGrad = [\fisherMatApproxGamma{\gammaParam}]^{-1} \nabla \objective &= \begin{pmatrix} \kvec \left( \convKronGradGammaLInv{\gammaParam}{1} (\nabla_{\weightsBiasesL{1}} \objective) \convKronActGammaLInv{\gammaParam}{0} \right) \\ \vdots \\ \kvec \left( \convKronGradGammaLInv{\gammaParam}{\numLayers} (\nabla_{\weightsBiasesL{\numLayers}} \objective) \convKronActGammaLInv{\gammaParam}{\numLayers-1} \right) \end{pmatrix} \label{eqn:approx_ng_damped_conv}
\end{align}

Returning to our running example of AlexNet, $\weightsBiasesL{\layerIdx}$ is a $\numOutputMaps \times (\numInputMaps \numOffsets + 1) = 128 \times 1201$ matrix. Therefore the factors $\convKronActL{\layerIdx-1}$ and $\convKronGradL{\layerIdx}$ are $1201 \times 1201$ and $128 \times 128$, respectively. These matrices are small enough that they can be represented exactly and inverted in a reasonable amount of time, allowing us to efficiently compute the approximate natural gradient direction using Eqn.~\ref{eqn:approx_ng_damped_conv}.

\subsection{Estimating the factors}
\label{sec:estimating_factors}

Since the true covariance statistics are unknown, we estimate them empirically by sampling from the model's distribution, similarly to \citet{kfac}. To sample derivatives from the model's distribution, we select a mini-batch, sample the outputs from the model's predictive distribution, and backpropagate the derivatives. 

We need to estimate the Kronecker factors $\{\convKronActL{\layerIdx}\}_{\layerIdx=0}^{\numLayers-1}$ and $\{\convKronGradL{\layerIdx}\}_{\layerIdx=1}^{\numLayers}$. Since these matrices are defined in terms of the autocovariance functions $\autoActivations$ and $\autoGrad$, it would appear natural to estimate these functions empirically. Unfortunately, if the empirical autocovariances are plugged into Eqn.~\ref{eqn:conv_kron_formulas}, the resulting $\convKronActL{\layerIdx}$ may not be positive semidefinite. This is a problem, since negative eigenvalues in the approximate Fisher could cause the optimization to diverge (a phenomenon we have observed in practice). An alternative which at least guarantees PSD matrices is to simply ignore the boundary effects, taking $\boundaryFunction(\spatOffset, \spatOffset^\prime) = \boundaryFunction(\spatOffset) = \numLocs$ in Eqn.~\ref{eqn:conv_kron_formulas}. Sadly, we found this to give very inaccurate covariances, especially for higher layers, where the filters are of comparable size to the activation maps.

Instead, we estimate each $\convKronActL{\layerIdx}$ directly using the following fact:

\begin{theorem}
Under assumption {\bf SH},
\begin{align}
\convKronActL{\layerIdx} &= \expect \left[ \expansion{\activationsMatL{\layerIdx}}_\homog^\transpose \expansion{\activationsMatL{\layerIdx}}_\homog \right] \\
\convKronGradL{\layerIdx} &= \frac{1}{\numLocs} \expect \left[ \grad \preActivationsMatL{\layerIdx}^\transpose \grad \preActivationsMatL{\layerIdx} \right].
\end{align}
(The $\expansion{\cdot}$ notation is defined in Section~\ref{sec:conv_nets_efficient}.)
\end{theorem}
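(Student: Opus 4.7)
The plan is to verify each entry of the two claimed matrix identities by directly expanding the matrix products on the right-hand side using the definition of $\expansion{\cdot}$ from Section~\ref{sec:conv_nets_efficient}, taking expectations under assumption \textbf{SH}, and matching the resulting expressions to the Kronecker-factor formulas in Eqn.~\ref{eqn:conv_kron_formulas}.

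For $\convKronActL{\layerIdx}$, I would start by writing the $(\inputMapIdx \numOffsets + \spatOffset, \, \inputMapIdx^\prime \numOffsets + \spatOffset^\prime)$ entry of $\expansion{\activationsMatL{\layerIdx}}_\homog^\transpose \expansion{\activationsMatL{\layerIdx}}_\homog$ as
\begin{equation*}
\sum_{\spatIdx \in \spatIdxSet} \expansion{\activationsMatL{\layerIdx}}_{\spatIdx, \, \inputMapIdx \numOffsets + \spatOffset} \, \expansion{\activationsMatL{\layerIdx}}_{\spatIdx, \, \inputMapIdx^\prime \numOffsets + \spatOffset^\prime} = \sum_{\spatIdx \,:\, \spatIdx+\spatOffset,\,\spatIdx+\spatOffset^\prime \in \spatIdxSet} \activationsJS{\inputMapIdx}{\spatIdx+\spatOffset} \, \activationsJS{\inputMapIdx^\prime}{\spatIdx+\spatOffset^\prime},
\end{equation*}
where the restriction on $\spatIdx$ arises because entries of the expanded matrix outside $\spatIdxSet$ are zero. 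Taking expectations and applying \textbf{SH}, each summand equals $\autoActivations(\inputMapIdx, \inputMapIdx^\prime, \spatOffset^\prime - \spatOffset)$, which is independent of $\spatIdx$, so the sum reduces to $N(\spatOffset, \spatOffset^\prime) \cdot \autoActivations(\inputMapIdx, \inputMapIdx^\prime, \spatOffset^\prime - \spatOffset)$, where $N(\spatOffset, \spatOffset^\prime)$ counts spatial locations $\spatIdx = (t_1, t_2) \in \{1,\ldots,\numRows\}\times\{1,\ldots,\numCols\}$ with both $\spatIdx + \spatOffset$ and $\spatIdx + \spatOffset^\prime$ in $\spatIdxSet$.

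The key combinatorial step is to verify $N(\spatOffset, \spatOffset^\prime) = \boundaryFunction(\spatOffset, \spatOffset^\prime)$. Since the constraints decouple across the two spatial dimensions, it suffices to count valid $t_1$: the conditions $t_1 \geq 1$, $t_1 + \spatOffsetRow \geq 1$, $t_1 + \spatOffsetRow^\prime \geq 1$ give $t_1 \geq 1 - \min(\spatOffsetRow, \spatOffsetRow^\prime, 0)$, and dually $t_1 \leq \numRows - \max(\spatOffsetRow, \spatOffsetRow^\prime, 0)$, so the number of valid $t_1$ is exactly $\numRows - \max(\spatOffsetRow, \spatOffsetRow^\prime, 0) + \min(\spatOffsetRow, \spatOffsetRow^\prime, 0)$; multiplying the two dimensions yields $\boundaryFunction(\spatOffset, \spatOffset^\prime)$. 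This matches the activation entry of $\convKronActL{\layerIdx}$ in Eqn.~\ref{eqn:conv_kron_formulas}. The bias-cross entries $(\inputMapIdx \numOffsets + \spatOffset, 0)$ specialize this argument: one factor in the product becomes the homogeneous $1$, so taking expectations and applying \textbf{SH} gives $\meanActivations(\inputMapIdx)$ times the number of valid $\spatIdx$ with $\spatIdx + \spatOffset \in \spatIdxSet$, which is $\boundaryFunctionUni(\spatOffset)$ by the same counting. The $(0,0)$ entry is trivially $\sum_{\spatIdx \in \spatIdxSet} 1 = \numLocs$.

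For $\convKronGradL{\layerIdx}$, the $(\outputMapIdx, \outputMapIdx^\prime)$ entry of $\grad \preActivationsMatL{\layerIdx}^\transpose \grad \preActivationsMatL{\layerIdx}$ is simply $\sum_{\spatIdx \in \spatIdxSet} \gradPreActivationsIS{\outputMapIdx}{\spatIdx} \gradPreActivationsIS{\outputMapIdx^\prime}{\spatIdx}$, whose expectation under \textbf{SH} equals $\numLocs \cdot \autoGrad(\outputMapIdx, \outputMapIdx^\prime, 0)$; dividing by $\numLocs$ recovers the definition of $[\convKronGradL{\layerIdx}]_{\outputMapIdx, \outputMapIdx^\prime}$. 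I expect the main obstacle to be the boundary-counting bookkeeping for $\boundaryFunction(\spatOffset, \spatOffset^\prime)$, in particular correctly incorporating the ambient constraint $\spatIdx \in \spatIdxSet$ alongside the shifted constraints $\spatIdx + \spatOffset, \spatIdx + \spatOffset^\prime \in \spatIdxSet$ so that the extra $0$ inside the $\max$/$\min$ appears; everything else is bookkeeping that follows immediately from the definitions and \textbf{SH}. Note that \textbf{IAD} and \textbf{SUD} are not needed for this theorem.
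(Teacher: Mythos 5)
Your proposal is correct and follows essentially the same route as the paper's proof: expand each entry of the matrix products, move the expectation inside the sum, apply \textbf{SH} so each summand is location-independent, and reduce the sum to a count of valid spatial locations, which equals $\boundaryFunction(\spatOffset,\spatOffset^\prime)$ (or $\boundaryFunctionUni(\spatOffset)$, or $\numLocs$), matching Eqn.~\ref{eqn:conv_kron_formulas}. The only difference is that you explicitly carry out the per-dimension counting argument identifying the cardinality with $\boundaryFunction$, a detail the paper leaves implicit, and your observation that \textbf{IAD} and \textbf{SUD} are not needed is likewise consistent with the paper.
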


\begin{proof}
See Appendix \ref{app:proofs}.
\end{proof}

Using this result, we define the empirical statistics for a given mini-batch:
\begin{align}
\convKronActEmpL{\layerIdx} &= \frac{1}{\mbsize} \expansion{\activationsMatBatchL{\layerIdx}}_\homog^\transpose \expansion{\activationsMatBatchL{\layerIdx}}_\homog \nonumber \\
\convKronGradEmpL{\layerIdx} &= \frac{1}{\mbsize \numLocs} \grad \preActivationsMatBatchL{\layerIdx}^\transpose \grad \preActivationsMatBatchL{\layerIdx} \label{eqn:conv_kron_emp}
\end{align}
Since the estimates $\convKronActEmpL{\layerIdx} $ and $\convKronGradEmpL{\layerIdx}$ are computed in terms of matrix inner products, they are always PSD matrices.  Importantly, because $\expansion{\activationsMatBatchL{\layerIdx}}$ and $\grad \preActivationsMatBatchL{\layerIdx}$ are the same matrices used to implement the convolution operations (Section~\ref{sec:conv_nets_efficient}), the computation of covariance statistics enjoys the same memory locality properties as the convolution operations. 

At the beginning of training, we estimate $\{\convKronActL{\layerIdx}\}_{\layerIdx=0}^{\numLayers-1}$ and $\{\convKronGradL{\layerIdx}\}_{\layerIdx=1}^{\numLayers}$ from the full dataset (or a large subset) using Eqn.~\ref{eqn:conv_kron_emp}. Subsequently, we maintain exponential moving averages of these matrices, where these equations are applied to each mini-batch, i.e.
\begin{align}
\convKronActL{\layerIdx} &\gets \avgWeight \convKronActL{\layerIdx} + (1 - \avgWeight) \convKronActEmpL{\layerIdx} \nonumber \\
\convKronGradL{\layerIdx} &\gets \avgWeight \convKronGradL{\layerIdx} + (1 - \avgWeight) \convKronGradEmpL{\layerIdx}, \label{eqn:conv_kron_avg}
\end{align}
where $\avgWeight$ is a parameter which determines the timescale for the moving average.

\subsection{Using KFC in optimization}
\label{sec:optimization_method}

So far, we have defined an approximation $\fisherMatApproxGamma{\gammaParam}$ to the Fisher matrix $\fisherMat$ which can be tractably inverted. This can be used in any number of ways in the context of optimization, most simply by using $\approxNatGrad = [\fisherMatApproxGamma{\gammaParam}]^{-1} \nabla \objective$ as an approximation to the natural gradient $\fisherMat^{-1} \nabla \objective$. Alternatively, we could use it in the context of the full K-FAC algorithm, or as a preconditioner for iterative second-order methods \citep{HF,KSD,sfo}.

In our experiments, we explored two particular instantiations of KFC in optimization algorithms. First, in order to provide as direct a comparison as possible to standard SGD-based optimization, we used $\approxNatGrad$ in the context of a generic approximate natural gradient descent procedure; this procedure is like SGD, except that $\approxNatGrad$ is substituted for the Euclidean gradient. Additionally, we used momentum, update clipping, and parameter averaging --- all standard techniques in the context of stochastic optimization.\footnote{Our SGD baseline used momentum and parameter averaging as well. Clipping was not needed for SGD, for reasons explained in Appendix \ref{app:approx_ng}.} One can also view this as a preconditioned SGD method, where $\fisherMatApproxGamma{\gammaParam}$ is used as the preconditioner. Therefore, we refer to this method in our experiments as KFC-pre (to distinguish it from the KFC approximation itself). This method is spelled out in detail in Appendix \ref{app:approx_ng}.

We also explored the use of $\fisherMatApproxGamma{\gammaParam}$ in the context of K-FAC, which (in addition to the techniques of Section \ref{sec:kfac}), includes methods for adaptively changing the learning rate, momentum, and damping parameters over the course of optimization. The full algorithm is given in Appendix \ref{app:kfac}. Our aim was to measure how KFC can perform in the context of a sophisticated and well-tuned second-order optimization procedure. We found that the adaptation methods tended to choose stable values for the learning rate, momentum, and damping parameters, suggesting that these could be replaced with fixed values (as in KFC-pre). Since both methods performed similarly, we report results only for KFC-pre. We note that this finding stands in contrast with the autoencoder experiments of \citet{kfac}, where the adapted parameters varied considerably over the course of optimization.

With the exception of inverting the Kronecker factors, all of the heavy computation for our methods was performed on the GPU. We based our implementation on CUDAMat \citep{cudamat} and the convolution kernels provided by the Toronto Deep Learning ConvNet (TDLCN) package \citep{tdlcn}. Full details on our GPU implementation and other techniques for minimizing computational overhead are given in Appendix \ref{app:implementation}.

\section{Theoretical analysis}
\label{sec:theory}

\subsection{Invariance}
\label{sec:invariance}

Natural gradient descent is motivated partly by way of its invariance to reparameterization: regardless of how the model is parameterized, the updates are equivalent up to the first order. Approximations to natural gradient don't satisfy full invariance to parameterization, but certain approximations have been shown to be invariant to more limited, but still fairly broad, classes of transformations. \citet{ollivier_invariance} showed that one such approximation was invariant to (invertible) affine transformations of individual activations. This class of transformations includes replacing sigmoidal with tanh activation functions, as well as the centering transformations discussed in the next section. \citet{kfac} showed that K-FAC is invariant to a broader class of reparameterizations: affine transformations of the activations (considered as a group), both before and after the nonlinearity. In addition to affine transformations of individual activations, this class includes transformations which whiten the activations to have zero mean and unit covariance. The transformations listed here have all been used to improve optimization performance (see next section), so these invariance properties provide an interesting justification of approximations to natural gradient methods. I.e., to the extent that these transformations help optimization, approximate natural gradient descent methods can be expected to achieve such benefits automatically.

For convolutional layers, we cannot expect an algorithm to be invariant to arbitrary affine transformations of a given layer's activations, as such transformations can change the set of functions which are representable. (Consider for instance, a transformation which permutes the spatial locations.) However, we show that the KFC updates are invariant to homogeneous, \emph{pointwise} affine transformations of the activations, both before and after the nonlinearity. This is perhaps an overly limited statement, as it doesn't use the fact that the algorithm accounts for spatial correlations. However, it still accounts for a broad set of transformations, such as normalizing activations to be zero mean and unit variance either before or after the nonlinearity.

To formalize this, recall that a layer's activations are represented as a $\numLocs \times \numInputMaps$ matrix and are computed from that layer's pre-activations by way of an elementwise nonlinearity, i.e.~$\activationsMatL{\layerIdx} = \nonlinearityL{\layerIdx}(\preActivationsMatL{\layerIdx})$. We replace this with an activation function $\nonlinearityTransL{\layerIdx}$ which additionally computes affine transformations before and after the nonlinearity. Such transformations can be represented in matrix form:
\begin{equation}
\activationsMatTransL{\layerIdx} = \nonlinearityTransL{\layerIdx}(\preActivationsMatTransL{\layerIdx}) = \nonlinearityL{\layerIdx}(\preActivationsMatTransL{\layerIdx} \transMatPreL{\layerIdx} + \onesVec \transOffsetPreL{\layerIdx}^\transpose) \transMatPostL{\layerIdx} + \onesVec \transOffsetPostL{\layerIdx}^\transpose, \label{eqn:affine_transformation}
\end{equation}
where $\transMatPreL{\layerIdx}$ and $\transMatPostL{\layerIdx}$ are invertible matrices, and $\transOffsetPreL{\layerIdx}$ and $\transOffsetPostL{\layerIdx}$ are vectors. For convenience, the inputs to the network can be treated as an activation function $\nonlinearityL{0}$ which takes no arguments. We also assume the final layer outputs are not transformed, i.e.~$\transMatPostL{\numLayers} = \ident$ and $\transOffsetPostL{\numLayers} = \zeroMat$. KFC is invariant to this class of transformations:

\begin{theorem}
Let $\network$ be a network with parameter vector $\paramVec$ and activation functions $\{\nonlinearityL{\layerIdx}\}_{\layerIdx=0}^\numLayers$. Given activation functions $\{\nonlinearityTransL{\layerIdx}\}_{\layerIdx=0}^\numLayers$ defined as in Eqn.~\ref{eqn:affine_transformation}, there exists a parameter vector $\paramVecTrans$ such that a network $\networkTrans$ with parameters $\paramVecTrans$ and activation functions $\{\nonlinearityTransL{\layerIdx}\}_{\layerIdx=0}^\numLayers$ computes the same function as $\network$. The KFC updates on $\network$ and $\networkTrans$ are equivalent, in that the resulting networks compute the same function.
\label{thm:invariance}
\end{theorem}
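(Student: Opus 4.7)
The plan is to adapt the K-FAC invariance proof of \citet{kfac} to the convolutional setting, organizing the argument in three stages: (i) construct $\paramVecTrans$ and show the reparameterization $\weightsBiasesL{\layerIdx} \mapsto \weightsBiasesTransL{\layerIdx}$ has the bilinear form $\weightsBiasesTransL{\layerIdx} = \leftPreconditionerL{\layerIdx} \weightsBiasesL{\layerIdx} \rightPreconditionerL{\layerIdx}$ for invertible matrices $\leftPreconditionerL{\layerIdx}, \rightPreconditionerL{\layerIdx}$ determined by $(\transMatPostL{\layerIdx-1}, \transOffsetPostL{\layerIdx-1}, \transMatPreL{\layerIdx}, \transOffsetPreL{\layerIdx})$; (ii) show that under this reparameterization the KFC Kronecker factors and the Euclidean gradient transform covariantly; (iii) verify that the KFC update direction satisfies the same covariance, so a step in either parameterization produces weights parameterizing the same function.

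For stage~(i), proceed layer by layer. Because $\transMatPostL{\layerIdx-1}$ is pointwise across spatial locations, the post-nonlinearity affine map $\activationsMatTransL{\layerIdx-1} = \activationsMatL{\layerIdx-1} \transMatPostL{\layerIdx-1} + \onesVec \transOffsetPostL{\layerIdx-1}^\transpose$ commutes with the expansion operator in the sense that $\expansion{\activationsMatTransL{\layerIdx-1}}_\homog = \expansion{\activationsMatL{\layerIdx-1}}_\homog \rightPreconditionerL{\layerIdx}^{-\transpose}$, where $\rightPreconditionerL{\layerIdx}$ is block-diagonal in the spatial-offset index (a Kronecker of $\transMatPostL{\layerIdx-1}$ with $\identity_{\numOffsets}$) and its homogeneous row absorbs the constant offset $\transOffsetPostL{\layerIdx-1}$. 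The pre-nonlinearity map $\preActivationsMatTransL{\layerIdx} = (\preActivationsMatL{\layerIdx} - \onesVec \transOffsetPreL{\layerIdx}^\transpose)\transMatPreL{\layerIdx}^{-1}$ is realized by a left multiplication on $\weightsBiasesL{\layerIdx}$ by $\leftPreconditionerL{\layerIdx} = \transMatPreL{\layerIdx}^{-\transpose}$ acting on the output-map index, with $\transOffsetPreL{\layerIdx}$ absorbed as a bias correction. Composing yields $\weightsBiasesTransL{\layerIdx} = \leftPreconditionerL{\layerIdx} \weightsBiasesL{\layerIdx} \rightPreconditionerL{\layerIdx}$, and the boundary condition $\transMatPostL{\numLayers} = \ident$, $\transOffsetPostL{\numLayers} = \zeroMat$ ensures $\activationsMatTransL{\numLayers} = \activationsMatL{\numLayers}$, so $\networkTrans$ with $\paramVecTrans$ computes the same function as $\network$.

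For stage~(ii), using the representations in Section~\ref{sec:estimating_factors}, one obtains $\convKronActTransL{\layerIdx-1} = \rightPreconditionerL{\layerIdx}^{-1} \convKronActL{\layerIdx-1} \rightPreconditionerL{\layerIdx}^{-\transpose}$ directly from the expansion identity above, and $\convKronGradTransL{\layerIdx} = \leftPreconditionerL{\layerIdx}^{-\transpose} \convKronGradL{\layerIdx} \leftPreconditionerL{\layerIdx}^{-1}$ from the chain-rule identity $\grad \preActivationsMatTransL{\layerIdx} = \grad \preActivationsMatL{\layerIdx} \leftPreconditionerL{\layerIdx}^{-1}$. Differentiating $\weightsBiasesL{\layerIdx} = \leftPreconditionerL{\layerIdx}^{-1} \weightsBiasesTransL{\layerIdx} \rightPreconditionerL{\layerIdx}^{-1}$ gives $\nabla_{\weightsBiasesTransL{\layerIdx}} \objective = \leftPreconditionerL{\layerIdx}^{-\transpose}(\nabla_{\weightsBiasesL{\layerIdx}}\objective)\rightPreconditionerL{\layerIdx}^{-\transpose}$. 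Plugging these three identities into the layer-$\layerIdx$ block of Eqn.~\ref{eqn:approx_ng_damped_conv} (working with the undamped version first), the $\leftPreconditionerL{\layerIdx}^{\pm\transpose}$ and $\rightPreconditionerL{\layerIdx}^{\pm\transpose}$ factors cancel in pairs, yielding $\direction^\dagger_\layerIdx = \leftPreconditionerL{\layerIdx} \direction_\layerIdx \rightPreconditionerL{\layerIdx}$, where $\direction_\layerIdx, \direction^\dagger_\layerIdx$ denote the layer-$\layerIdx$ KFC update directions for the original and transformed networks. Consequently $\weightsBiasesTransL{\layerIdx} - \alpha \direction^\dagger_\layerIdx = \leftPreconditionerL{\layerIdx}(\weightsBiasesL{\layerIdx} - \alpha \direction_\layerIdx)\rightPreconditionerL{\layerIdx}$, and by stage~(i) the two updated networks compute the same function.

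The main obstacle is that the damping term $(\lambdaParam+\gammaParam)\ident$ in Eqn.~\ref{eqn:factored_damping_conv} is not preserved under conjugation by the generally non-orthogonal $\leftPreconditionerL{\layerIdx}, \rightPreconditionerL{\layerIdx}$, so strict invariance requires using the exactly-damped solver of Appendix~\ref{app:kfac} in place of the factored damping, or restricting to orthogonal transformations; this caveat is inherited from K-FAC. A secondary difficulty is careful bookkeeping: verifying that the homogeneous coordinate consistently absorbs the constant offsets $\transOffsetPreL{\layerIdx}, \transOffsetPostL{\layerIdx-1}$, that the block-Kronecker structure of $\rightPreconditionerL{\layerIdx}$ on the offset index is compatible with the Kronecker factorization of $\convKronActL{\layerIdx-1}$ from Theorem~\ref{thm:kfc_sud}, and that the zero-padding convention at spatial boundaries is respected when the pointwise transformation is applied.
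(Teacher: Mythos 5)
Your proposal is correct and follows essentially the same route as the paper: the same homogeneous-coordinate parameter map $[\weightsBiasesTransL{\layerIdx}]_\homog = [\transMatPreL{\layerIdx}]_\homog^{-\transpose}[\weightsBiasesL{\layerIdx}]_\homog \expansion{\transMatPostL{\layerIdx-1}}_\homog^{-\transpose}$ established by induction on layers, the same covariant transformations of $\convKronActL{\layerIdx-1}$, $\convKronGradL{\layerIdx}$ and of the gradient, and the same cancellation showing the updated parameters correspond to the same function (the paper merely packages that last cancellation as a lemma on transformed preconditioned updates, a special case of Lemma 5 of the K-FAC paper, rather than inlining it). One small correction to your caveat: the paper's proof handles this by simply ignoring damping (together with momentum and clipping), and the exactly-damped solver of Appendix~\ref{app:kfac} would not restore strict invariance either, since $\fisherMatApprox^\dagger + \gammaParam \ident$ is not the conjugate of $\fisherMatApprox + \gammaParam \ident$ unless the transformation is orthogonal.
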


\begin{proof}
See Appendix \ref{app:proofs}.
\end{proof}

Invariance to affine transformations also implies approximate invariance to smooth nonlinear transformations; see \citet{ng_martens} for further discussion.

\subsection{Relationship with other algorithms}
\label{sec:prong}

Other neural net optimization methods have been proposed which attempt to correct for various statistics of the activations or gradients. Perhaps the most commonly used are algorithms which attempt to adapt learning rates for individual parameters based on the variance of the gradients \citep{lecun_tricks,adagrad,rmsprop,ADADELTA,adam}. These can be thought of as diagonal approximations to the Hessian or the Fisher matrix.\footnote{Some of these methods use the \emph{empirical} Fisher matrix, which differs from the proper Fisher matrix in that the targets are taken from the training data rather than sampled from the model's predictive distribution. The empirical Fisher matrix is less closely related to the curvature than is the proper one \citep{ng_martens}.}

Another class of approaches attempts to reparameterize a network such that its activations have zero mean and unit variance, with the goals of preventing covariate shift and improving the conditioning of the curvature \citep{enhanced_gradient,Vatanen_centering,batch_normalization}. Centering can be viewed as an approximation to natural gradient where the Fisher matrix is approximated with a directed Gaussian graphical model \citep{FANG}. 
As discussed in Section \ref{sec:invariance}, KFC is invariant to re-centering of activations, so it ought to automatically enjoy the optimization benefits of centering. However, batch normalization \citep{batch_normalization} includes some effects not automatically captured by KFC. First, the normalization is done separately for each mini-batch rather than averaged across mini-batches; this introduces stochasticity into the computations which may serve as a regularizer. Second, it discourages large covariate shifts in the pre-activations, which may help to avoid dead units. Since batch normalization is better regarded as a modification to the architecture than an optimization algorithm, it can be combined with KFC; we investigated this in our experiments.

Projected Natural Gradient \citep[PRONG;][]{natural_neural_networks} goes a step further than centering methods by fully whitening the activations in each layer. In the case of fully connected layers, the activations are transformed to have zero mean and unit covariance. For convolutional layers, they apply a linear transformation that whitens the activations \emph{across feature maps}. While PRONG includes clever heuristics for updating the statistics, it's instructive to consider an idealized version of the method which has access to the exact statistics. We can interpret this idealized PRONG in our own framework as arising from following two additional approximations:
\begin{itemize}
\item {\bf Spatially uncorrelated activations (SUA).} The activations at any two distinct spatial locations are uncorrelated, \emph{i.e.}~$\Cov(\activationsJS{\inputMapIdx}{\spatIdx}, \activationsJS{\inputMapIdx^\prime}{\spatIdx^\prime}) = 0$ for $\spatIdx \neq \spatIdx^\prime$. Also assuming {\bf SH}, the correlations can then be written as $\Cov(\activationsJS{\inputMapIdx}{\spatIdx}, \activationsJS{\inputMapIdx^\prime}{\spatIdx}) = \autoCovActivations(\inputMapIdx, \inputMapIdx^\prime)$. 
\item {\bf White derivatives (WD).} Pre-activation derivatives are uncorrelated and have spherical covariance, i.e.~$\autoGrad(\outputMapIdx, \outputMapIdx^\prime, \spatOffset) \propto \kronDelta{\outputMapIdx}{\outputMapIdx^\prime} \kronDelta{\spatOffset}{0}$. We can assume WLOG that the proportionality constant is 1, since any scalar factor can be absorbed into the learning rate.
\end{itemize}

\begin{theorem}
Combining approximations {\bf IAD}, {\bf SH}, {\bf SUA}, and {\bf WD} results in the following approximation to the entries of the Fisher matrix:
\begin{equation}
\expect \left[ \gradWeightsIJS{\outputMapIdx}{\inputMapIdx}{\spatOffset} \gradWeightsIJS{\outputMapIdx^\prime}{\inputMapIdx^\prime}{\spatOffset^\prime} \right] = \boundaryFunction(\spatOffset, \spatOffset^\prime) \, \autoActivationsNNN(\inputMapIdx, \inputMapIdx^\prime, \spatOffset^\prime - \spatOffset) \, \kronDelta{\outputMapIdx}{\outputMapIdx^\prime},
\end{equation}
where $\indicator$ is the indicator function and $\autoActivationsNNN(\inputMapIdx, \inputMapIdx^\prime, \spatOffset) \triangleq \autoCovActivations(\inputMapIdx, \inputMapIdx^\prime) \kronDelta{\spatOffset}{0} + \meanActivations(\inputMapIdx) \meanActivations(\inputMapIdx^\prime)$ is the uncentered autocovariance function. ($\boundaryFunction$ is defined in Theorem \ref{thm:kfc_sud}. Formulas for the remaining entries are given in Appendix~\ref{app:proofs}.) If the $\boundaryFunction(\spatOffset, \spatOffset^\prime)$ term is dropped, the resulting approximate natural gradient descent update rule is equivalent to idealized PRONG, up to rescaling.
\end{theorem}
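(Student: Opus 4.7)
The plan is to derive the stated Fisher-entry formula by specializing Theorem~\ref{thm:kfc_sud} under the additional assumptions, and then to unpack the resulting Kronecker factors and match the induced update to PRONG's whitening step.

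\emph{Deriving the formula.} Since WD is strictly stronger than SUD, the factorization
$$\expect \left[ \gradWeightsIJS{\outputMapIdx}{\inputMapIdx}{\spatOffset} \gradWeightsIJS{\outputMapIdx^\prime}{\inputMapIdx^\prime}{\spatOffset^\prime} \right] = \boundaryFunction(\spatOffset, \spatOffset^\prime) \, \autoActivations(\inputMapIdx, \inputMapIdx^\prime, \spatOffset^\prime - \spatOffset) \, \autoGrad(\outputMapIdx, \outputMapIdx^\prime, 0)$$
from Theorem~\ref{thm:kfc_sud} is already available. WD immediately collapses $\autoGrad(\outputMapIdx, \outputMapIdx^\prime, 0)$ to $\kronDelta{\outputMapIdx}{\outputMapIdx^\prime}$, with the overall scalar absorbed into the learning rate. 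For the activation factor, I would combine the identity $\expect[XY] = \Cov(X,Y) + \expect[X]\expect[Y]$ with SH to obtain $\autoActivations(\inputMapIdx,\inputMapIdx^\prime,\spatOffset) = \Cov(\activationsJS{\inputMapIdx}{\spatIdx}, \activationsJS{\inputMapIdx^\prime}{\spatIdx+\spatOffset}) + \meanActivations(\inputMapIdx)\meanActivations(\inputMapIdx^\prime)$, and then invoke SUA to replace the covariance by $\autoCovActivations(\inputMapIdx,\inputMapIdx^\prime)\kronDelta{\spatOffset}{0}$, yielding exactly $\autoActivationsNNN$. Multiplying the three simplified factors gives the claimed formula. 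The bias--weight and bias--bias entries referenced in the statement follow by applying the same two substitutions to the second and third lines of Theorem~\ref{thm:kfc_sud}.

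\emph{Reduction to idealized PRONG.} I would next translate the per-entry formula back into the block form of Eqn.~\ref{eqn:conv_kron_formulas} and study the update in Eqn.~\ref{eqn:approx_ng_damped_conv}. Under WD, $\convKronGradL{\layerIdx}$ is a scalar multiple of the identity, so only $\convKronActL{\layerIdx-1}^{-1}$ acts nontrivially on the gradient, and up to a global scalar the layer-$\layerIdx$ update is right-multiplication of $\nabla_{\weightsBiasesL{\layerIdx}} \objective$ by $\convKronActL{\layerIdx-1}^{-1}$. Replacing $\boundaryFunction$ by the uniform constant $\numLocs$ (as in Section~\ref{sec:estimating_factors}) and substituting $\autoActivationsNNN$ for $\autoActivations$, the weight--weight block of $\convKronActL{\layerIdx-1}$ splits into a spatial-offset-diagonal piece $\numLocs\,\autoCovActivations \otimes I_{\numOffsets}$ (from the $\kronDelta{\spatOffset^\prime-\spatOffset}{0}$ term) plus a rank-one piece $\numLocs\,\meanActivations\meanActivations^\transpose$ tiled across offsets (from the mean term). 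The rank-one piece is precisely what is glued to the homogeneous coordinate by the bias row and column of $\convKronActL{\layerIdx-1}$, so once the homogeneous coordinate is folded in, the weight-indexed part of $\convKronActL{\layerIdx-1}^{-1}$ reduces to $\numLocs^{-1}\,\autoCovActivations^{-1} \otimes I_{\numOffsets}$. Plugging this into Eqn.~\ref{eqn:approx_ng_damped_conv} shows the update right-multiplies each (output-map, spatial-offset) slice of the weight gradient by $\autoCovActivations^{-1}$ along the input-map dimension, which is exactly PRONG's across-feature-map whitening of convolutional weights; the leading $\numLocs^{-1}$ and the constant dropped from WD are precisely the rescaling the theorem allows.

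\emph{Anticipated obstacle.} The main subtlety is the bookkeeping around the rank-one mean contribution and the homogeneous/bias coordinate: one must verify that, after the $\meanActivations\meanActivations^\transpose$ piece is paired with the bias row and column, the weight-on-weight block really does decompose as $\autoCovActivations \otimes I_{\numOffsets}$, and that inverting this block reproduces PRONG's whitening step bit-for-bit rather than merely schematically. Everything else is routine: Theorem~\ref{thm:kfc_sud} supplies the Fisher factorization, and what remains is a direct substitution followed by a standard Kronecker-inverse manipulation.
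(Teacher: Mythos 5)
The first half of your proposal is fine: since {\bf WD} implies {\bf SUD}, specializing Theorem~\ref{thm:kfc_sud} and substituting $\autoGrad(\outputMapIdx,\outputMapIdx^\prime,0)=\kronDelta{\outputMapIdx}{\outputMapIdx^\prime}$ and $\autoActivations = \autoActivationsNNN$ (via {\bf SH} + {\bf SUA}) gives exactly the stated entries; the paper instead re-derives them from Lemma~\ref{lem:approx_iad}, but the two routes are equivalent. Your Schur-complement observation is also correct: with $\boundaryFunction$ replaced by $\numLocs$ one has $\convKronActL{\layerIdx-1} = \numLocs\,\convKronActNNN$ with $\convKronActNNN$ as in Eqn.~\ref{eqn:prong_fisher_block}, and the weight-on-weight block of $\convKronActNNN^{-1}$ is indeed $\covMatNNN^{-1}\otimes\ident$, because the rank-one mean piece $\meanVecNNN\meanVecNNN^\transpose\otimes\onesVec\onesVec^\transpose$ is exactly the Schur correction coming from the bias row and column.

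The genuine gap is the final identification with idealized PRONG, which you yourself flag as an ``anticipated obstacle'' but never close. Two things are missing. First, the KFC update right-multiplies $\nabla_{\weightsBiasesL{\layerIdx}}\objective$ by the \emph{full} matrix $\convKronActNNN^{-1}$, whose off-diagonal blocks ($-\covMatNNN^{-1}\meanVecNNN\otimes\onesVec$ and its transpose) couple the bias column of the gradient to the weight update and vice versa whenever $\meanVecNNN\neq\zeroVec$; showing only that the weight-on-weight block is $\covMatNNN^{-1}\otimes\ident$ does not determine the update. Second, and more importantly, you never actually compute what idealized PRONG does in the original parameterization --- you assert that it is ``across-feature-map whitening of the weight gradient,'' which is only the mean-zero special case of PRONG's transformation (which also subtracts $\meanVecNNN$, and hence shifts biases). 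The paper's proof closes precisely this gap: it writes the whitening as the affine reparameterization with $\transMatPostL{\layerIdx}=\covMatNNNSqrt^{-1}$, $\transOffsetPostL{\layerIdx}=-\covMatNNNSqrt^{-1}\meanVecNNN$, applies Lemma~\ref{lem:param_transformation} and Lemma~\ref{lem:transformed_update} with identity preconditioners to conclude that SGD in the whitened coordinates is, in the original coordinates, right-multiplication of the gradient by $\expansion{\covMatNNNSqrt^{-1}}_\homog\expansion{\covMatNNNSqrt^{-1}}_\homog^\transpose$, and then observes $\expansion{\covMatNNNSqrt^{-1}}_\homog = \cholFactorNNN^{-\transpose}$ with $\cholFactorNNN$ the block Cholesky factor of $\convKronActNNN$, so that this product equals $\convKronActNNN^{-1}$ exactly, bias--weight coupling included. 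Until you carry out this computation (or an equivalent one tracking the homogeneous coordinate through PRONG's reparameterization), the second claim of the theorem is asserted rather than proved; with it, your direct route and the paper's reparameterization-lemma route coincide.
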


As we later discuss in Section~\ref{sec:eval_assumptions}, assumption {\bf WD} appears to hold up well empirically, while {\bf SUA} appears to lose a lot of information. Observe, for instance, that the input images are themselves treated as a layer of activations. Assumption {\bf SUA} amounts to modeling each channel of an image as white noise, corresponding to a flat power spectrum. Images have a well-characterized $1/f^p$ power spectrum with $p \approx 2$ \citep{image_statistics_review}, which implies that the curvature may be much larger in directions corresponding to low-frequency Fourier components than in directions corresponding to high-frequency components.

\section{Experiments}

We have evaluated our method on two standard image recognition benchmark datasets: CIFAR-10 \citep{cifar}, and Street View Housing Numbers \citep[SVHN;][]{svhn}. Our aim is not to achieve state-of-the-art performance, but to evaluate KFC's ability to optimize previously published architectures. We first examine the probabilistic assumptions, and then present optimization results.

For CIFAR-10, we used the architecture from \verb+cuda-convnet+\footnote{\url{https://code.google.com/p/cuda-convnet/}} which achieved 18\% error in 20 minutes. This network consists of three convolution layers and a fully connected layer. (While \verb+cuda-convnet+ provides some better-performing architectures, we could not use these, since these included locally connected layers, which KFC can't handle.) For SVHN, we used the architecture of \citet{nitish_thesis}. This architecture consists of three convolutional layers followed by three fully connected layers, and uses dropout for regularization. Both of these architectures were carefully tuned for their respective tasks. Furthermore, the TDLCN CUDA kernels we used were carefully tuned at a low level to implement SGD updates efficiently for both of these architectures. Therefore, we believe our SGD baseline is quite strong.

\subsection{Evaluating the probabilistic modeling assumptions}
\label{sec:eval_assumptions}

\begin{figure}
\begin{center}
(a)\includegraphics[width=0.20 \textwidth]{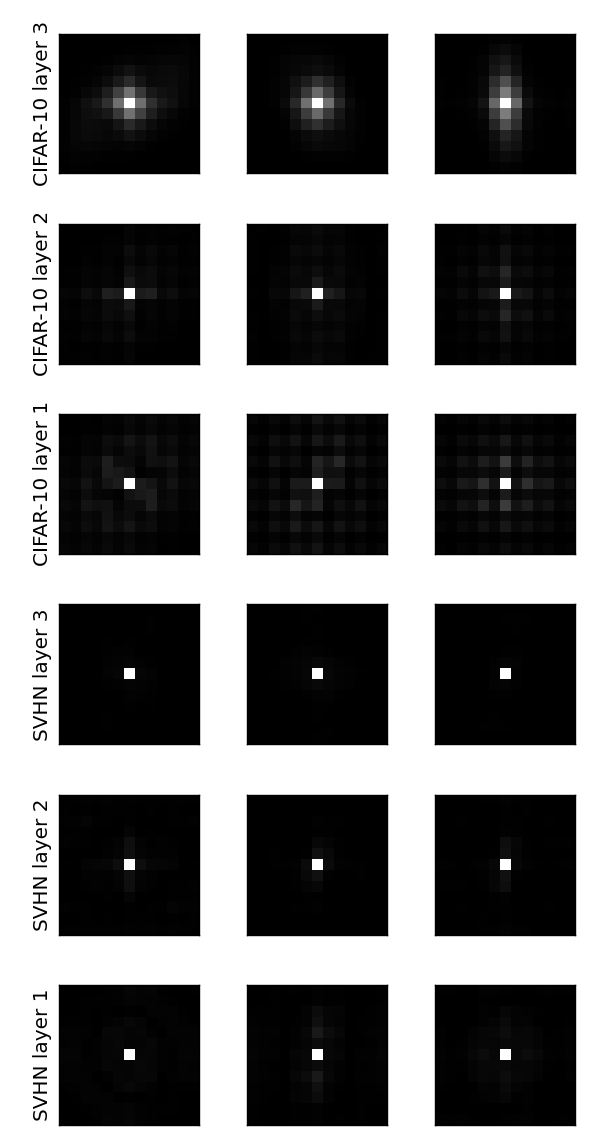}
(b)\includegraphics[width=0.22 \textwidth]{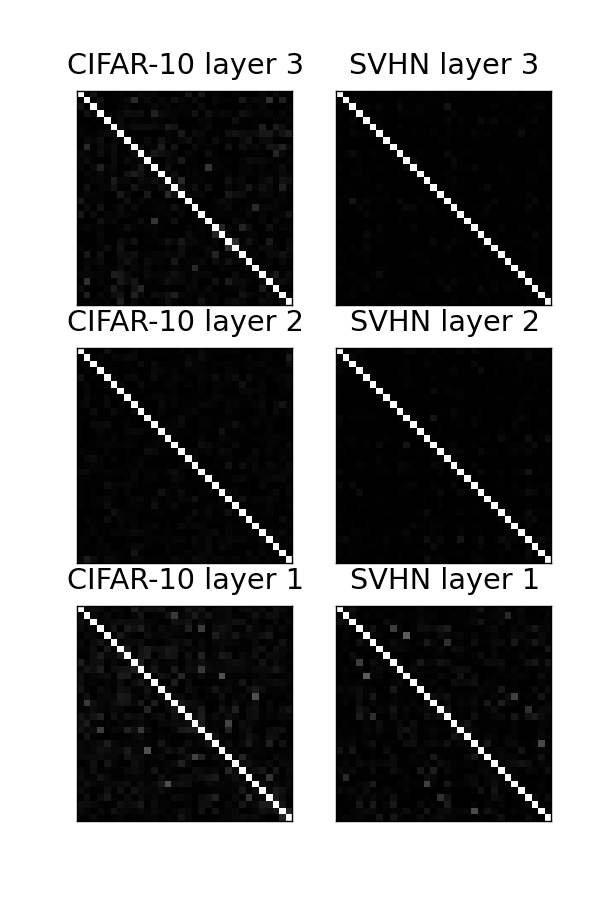}
(c)\includegraphics[width=0.20 \textwidth]{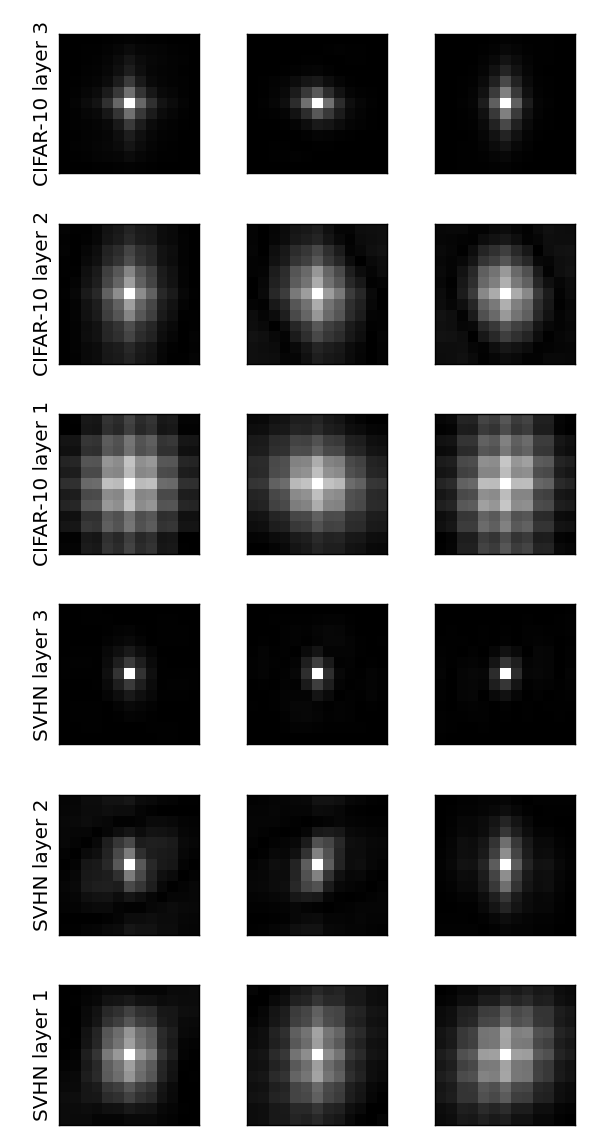}
(d)\includegraphics[width=0.22 \textwidth]{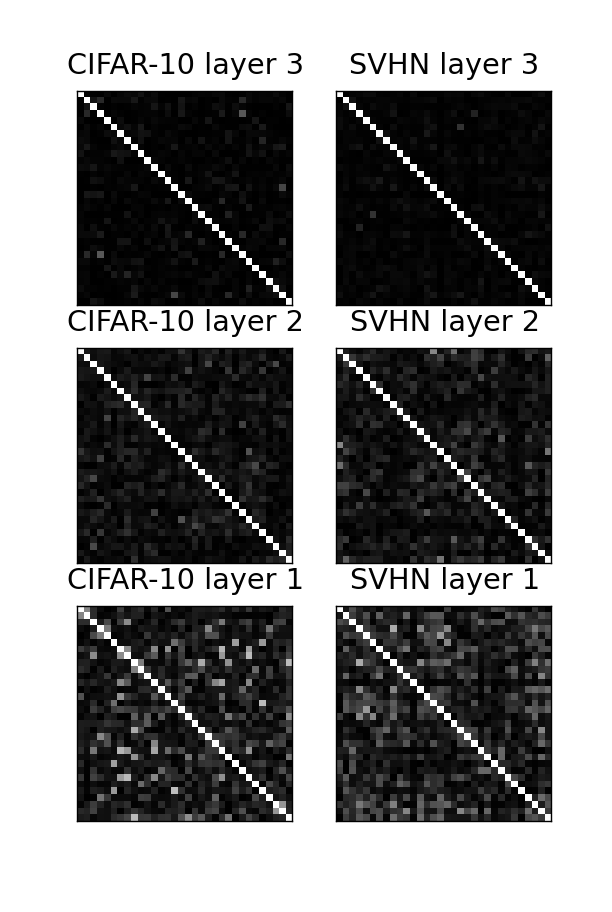}
\end{center}
\caption{Visualization of the absolute values of the correlations between the pre-activation derivatives for all of the convolution layers of CIFAR-10 and SVHN networks trained with SGD. {\bf (a)} Autocorrelation functions of the derivatives of three feature maps from each layer. {\bf (b)} Cross-map correlations for a single spatial position. {\bf (c, d)} Same as (a) and (b), except that the networks use average pooling rather than max-pooling.}
\label{fig:gradient_autocorrelations}
\end{figure}

In defining KFC, we combined three probabilistic modeling assumptions: independent activations and derivatives {\bf (IAD)}, spatial homogeneity {\bf (SH)}, and spatially uncorrelated derivatives {\bf (SUD)}. As discussed above, {\bf IAD} is the same approximation made by standard K-FAC, and it was investigated in detail both theoretically and empirically by \cite{kfac}. One implicitly assumes {\bf SH} when choosing to use a convolutional architecture. However, {\bf SUD} is perhaps less intuitive. Why should we suppose the derivatives are spatially uncorrelated?  Conversely, why not go a step further and assume the \emph{activations} are spatially uncorrelated (as does PRONG; see Section~\ref{sec:prong}) or even drop all of the correlations (thereby obtaining a much simpler diagonal approximation to the Fisher matrix)?

We investigated the autocorrelation functions for networks trained on CIFAR-10 and SVHN, each with 50 epochs of SGD. (These models were trained long enough to achieve good test error, but not long enough to overfit.) Derivatives were sampled from the model's distribution as described in Section \ref{sec:kfac}. Figure~\ref{fig:gradient_autocorrelations}(a) shows the autocorrelation functions of the pre-activation gradients for three (arbitrary) feature maps in all of the convolution layers of both networks. Figure~\ref{fig:gradient_autocorrelations}(b) shows the correlations between derivatives for different feature maps in the same spatial position. Evidently, the derivatives are very weakly correlated, both spatially and cross-map, although there are some modest cross-map correlations in the first layers of both models, as well as modest spatial correlations in the top convolution layer of the CIFAR-10 network. This suggests that {\bf SUD} is a good approximation for these networks.

Interestingly, the lack of correlations between derivatives appears to be a result of max-pooling. Max-pooling has a well-known sparsifying effect on the derivatives, as any derivative is zero unless the corresponding activation achieves the maximum within its pooling group. Since neighboring locations are unlikely to simultaneously achieve the maximum, max-pooling weakens the spatial correlations. To test this hypothesis, we trained networks equivalent to those described above, except that the max-pooling layers were replaced with average pooling. The spatial autocorrelations and cross-map correlations are shown in Figure~\ref{fig:gradient_autocorrelations}(c, d). Replacing max-pooling with average pooling dramatically strengthens both sets of correlations.

\begin{figure}
\begin{center}
(a)\includegraphics[width=0.20 \textwidth]{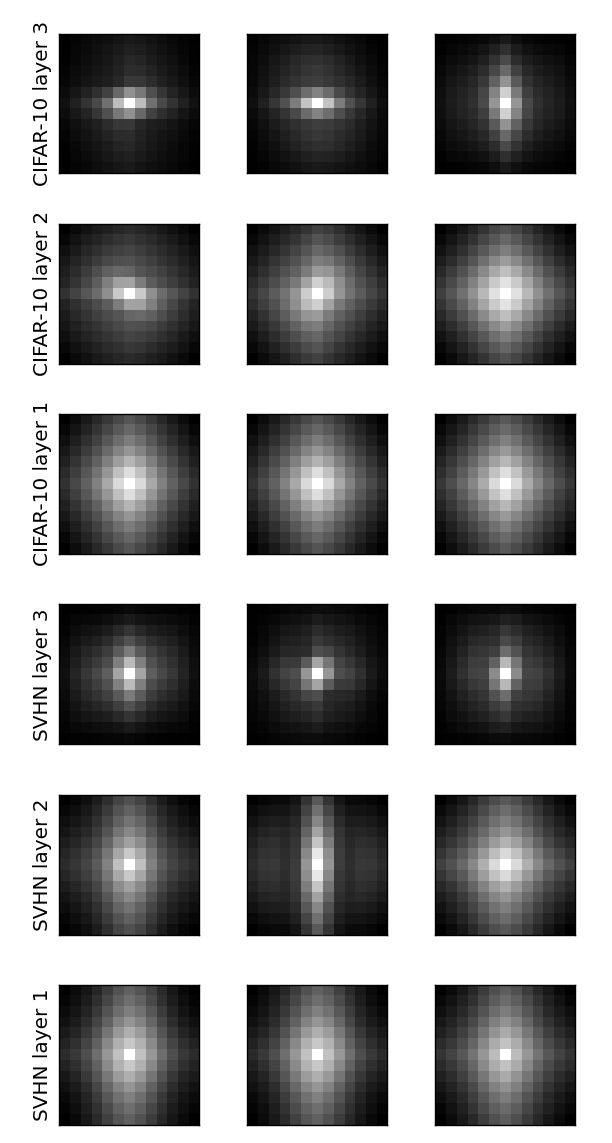}
(b)\includegraphics[width=0.30 \textwidth]{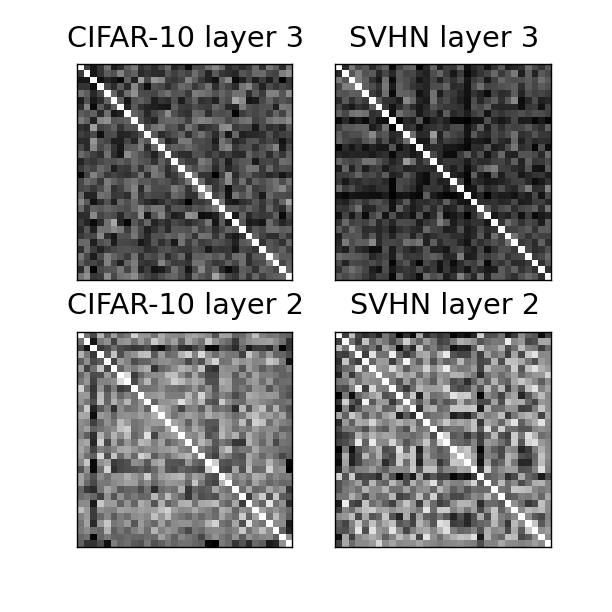}
\end{center}
\caption{Visualization of the uncentered correlations $\autoActivations$ between activations in all of the convolution layers of the CIFAR-10 and SVHN networks. {\bf (a)} Spatial autocorrelation functions of three feature maps in each layer. {\bf (b)} Correlations of the activations at a given spatial location. The activations have much stronger correlations than the backpropagated derivatives.}
\label{fig:activation_autocorrelations}
\end{figure}

In contrast with the derivatives, the activations have very strong correlations, both spatially and cross-map, as shown in Figure~\ref{fig:activation_autocorrelations}. This suggests the spatially uncorrelated activations {\bf (SUA)} assumption implicitly made by some algorithms could be problematic, despite appearing superficially analogous to {\bf SUD}.

\subsection{Optimization performance}
\label{sec:optimization_performance}

\begin{figure}
\begin{center}
(a)\includegraphics[width=0.45 \textwidth]{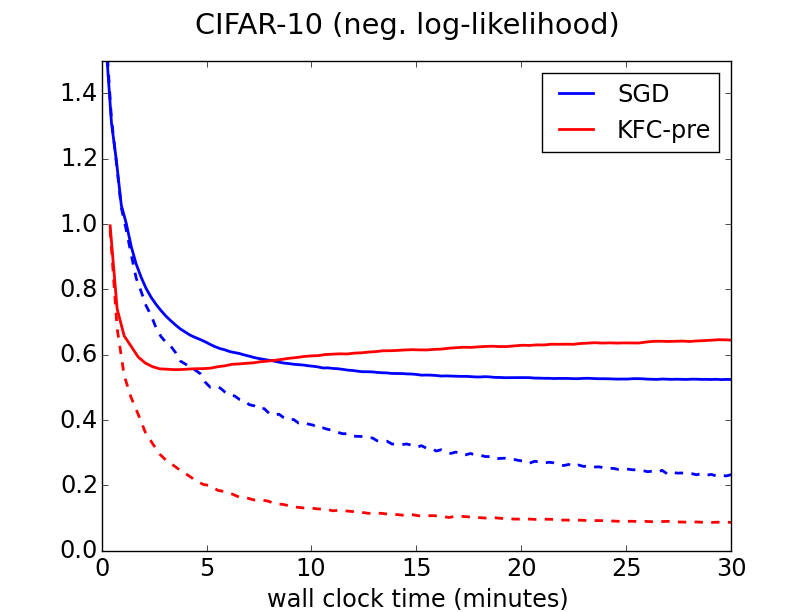}
(b)\includegraphics[width=0.45 \textwidth]{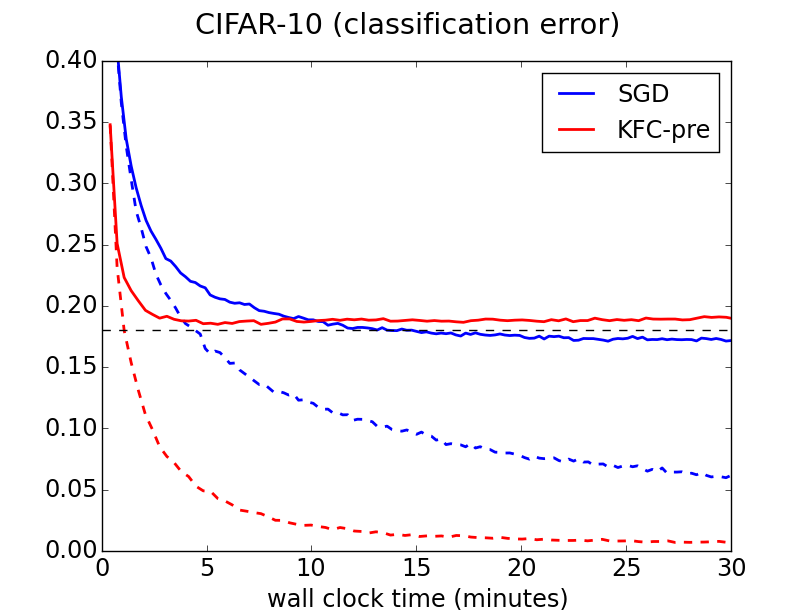} \\
(c)\includegraphics[width=0.45 \textwidth]{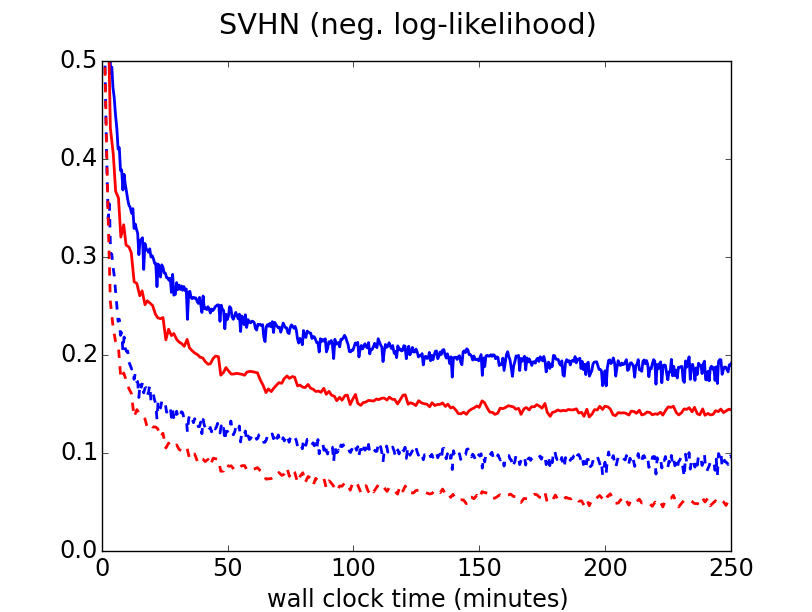}
(d)\includegraphics[width=0.45 \textwidth]{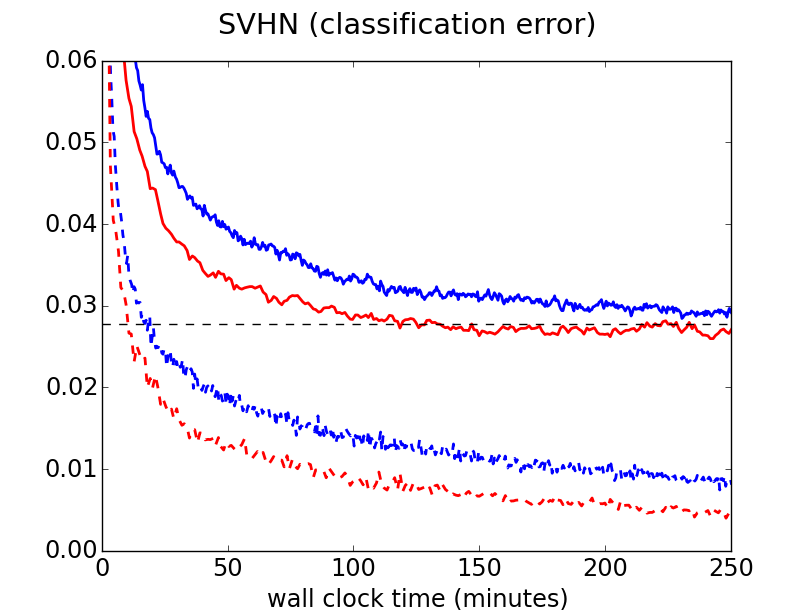}
\end{center}
\caption{Optimization performance of KFC-pre and SGD. {\bf (a)} CIFAR-10, negative log-likelihood. {\bf (b)} CIFAR-10, classification error. {\bf (c)} SVHN, negative log-likelihood. {\bf (d)} SVHN, classification error. {\bf Solid lines} represent test error and {\bf dashed lines} represent training error. The {\bf horizontal dashed line} represents the previously reported test error for the same architecture.}
\label{fig:optimization}
\end{figure}

We evaluated KFC-pre in the context of optimizing deep convolutional networks. We compared against stochastic gradient descent (SGD) with momentum, which is widely considered a strong baseline for training conv nets. All architectural choices (e.g.~sizes of layers) were kept consistent with the previously published configurations. Since the focus of this work is optimization rather than generalization, metaparameters were tuned with respect to \emph{training} error. This protocol was favorable to the SGD baseline, as the learning rates which performed the best on training error also performed the best on test error.\footnote{For KFC-pre, we encountered a more significant tradeoff between training and test error, most notably in the choice of mini-batch size, so the presented results do not reflect our best runs on the test set. For instance, as reported in Figure \ref{fig:optimization}, the test error on CIFAR-10 leveled off at 18.5\% after 5 minutes, after which the network started overfitting. When we reduced the mini-batch size from 512 to 128, the test error reached 17.5\% after 5 minutes and 16\% after 35 minutes. However, this run performed far worse on the training set. On the flip side, very large mini-batch sizes hurt generalization for both methods, as discussed in Section~\ref{sec:distributed}.} For both SGD and KFC-pre, we tuned the learning rates from the set $\{0.3, 0.1, 0.03, \ldots, 0.0003\}$ separately for each experiment.
For KFC-pre, we also chose several algorithmic parameters using the method of Appendix~\ref{app:implementation}, which considers only per-epoch running time and not final optimization performance.\footnote{For SGD, we used a momentum parameter of 0.9 and mini-batches of size 128, which match the previously published configurations. For KFC-pre, we used a momentum parameter of 0.9, mini-batches of size 512, and a damping parameter $\gammaParam = 10^{-3}$. In both cases, our informal explorations did not find other values which performed substantially better in terms of training error.}

For both SGD and KFC-pre, we used an exponential moving average of the iterates (see Appendix~\ref{app:approx_ng}) with a timescale of 50,000 training examples (which corresponds to one epoch on CIFAR-10). This helped both SGD and KFC-pre substantially. All experiments for which wall clock time is reported were run on a single Nvidia GeForce GTX Titan Z GPU board.

As baselines, we also tried Adagrad \citep{adagrad}, RMSProp \citep{rmsprop}, and Adam \citep{adam}, but none of these approaches outperformed carefully tuned SGD with momentum. This is consistent with the observations of \citet{adam}.

Figure~\ref{fig:optimization}(a,b) shows the optimization performance on the CIFAR-10 dataset, in terms of wall clock time. Both KFC-pre and SGD reached approximately the previously published test error of 18\% before they started overfitting. However, KFC-pre reached 19\% test error in 3 minutes, compared with 9 minutes for SGD. The difference in training error was more significant: KFC-pre reaches a training error of 6\% in 4 minutes, compared with 30 minutes for SGD. On SVHN, KFC-pre reached the previously published test error of 2.78\% in 120 minutes, while SGD did not reach it within 250 minutes. (As discussed above, test error comparisons should be taken with a grain of salt because algorithms were tuned based on training error; however, any biases introduced by our protocol would tend to favor the SGD baseline over KFC-pre.)

\begin{figure}
\begin{center}
(a)\includegraphics[width=0.45 \textwidth]{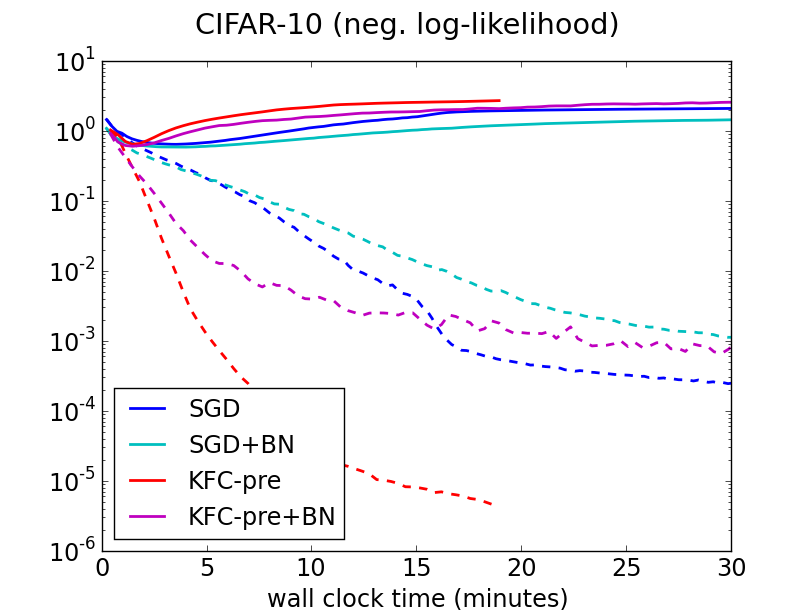}
(b)\includegraphics[width=0.45 \textwidth]{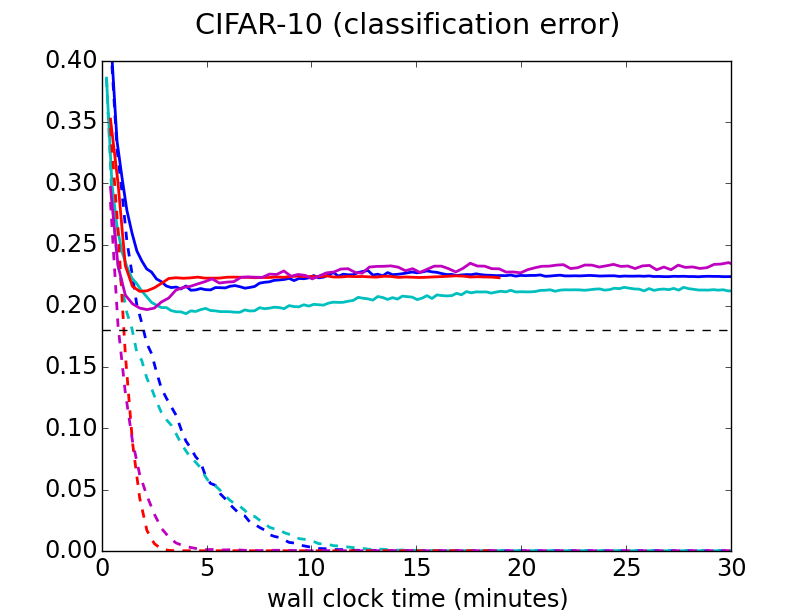} 
\end{center}
\caption{Optimization performance of KFC-pre and SGD on a CIFAR-10 network, with and without batch normalization (BN). {\bf (a)} Negative log-likelihood, on a log scale. {\bf (b)} Classification error. {\bf Solid lines} represent test error and {\bf dashed lines} represent training error. The {\bf horizontal dashed line} represents the previously reported test error for the same architecture. The KFC-pre training curve is cut off because the algorithm became unstable when the training NLL reached $4 \times 10^{-6}$.}
\label{fig:bnorm}
\end{figure}

Batch normalization \citep[BN][]{batch_normalization} has recently had much success at training a variety of neural network architectures. It has been motivated both in terms of optimization benefits (because it reduces covariate shift) and regularization benefits (because it adds stochasticity to the updates). However, BN is best regarded not as an optimization algorithm, but as a modification to the network architecture, and it can be used in conjunction with algorithms other than SGD. We modified the original CIFAR-10 architecture to use batch normalization in each layer. Since the parameters of a batch normalized network would have a different scale from those of an ordinary network, we disabled the $\ell_2$ regularization term so that both networks would be optimized to the same objective function. While our own (inefficient) implementation of batch normalization incurred substantial computational overhead, we believe an efficient implementation ought to have very little overhead; therefore, we simulated an efficient implementation by reusing the timing data from the non-batch-normalized networks. Learning rates were tuned separately for all four conditions (similarly to the rest of our experiments).

Training curves are shown in Figure \ref{fig:bnorm}. All of the methods achieved worse test error than the original network as a result of $\ell_2$ regularization being eliminated. However, the BN networks reached a lower test error than the non-BN networks before they started overfitting, consistent with the stochastic regularization interpretation of BN.\footnote{Interestingly, the BN networks were slower to optimize the training error than their non-BN counterparts. We speculate that this is because (1) the SGD baseline, being carefully tuned, didn't exhibit the pathologies that BN is meant to correct for (i.e.~dead units and extreme covariate shift), and (2) the regularization effects of BN made it harder to overfit.} For both the BN and non-BN architectures, KFC-pre optimized both the training and test error and NLL considerably faster than SGD. Furthermore, it appeared not to lose the regularization benefit of BN. This suggests that KFC-pre and BN can be combined synergistically.

\subsection{Potential for distributed implementation}
\label{sec:distributed}

\begin{figure}
\begin{center}
(a) \includegraphics[width=0.45 \textwidth]{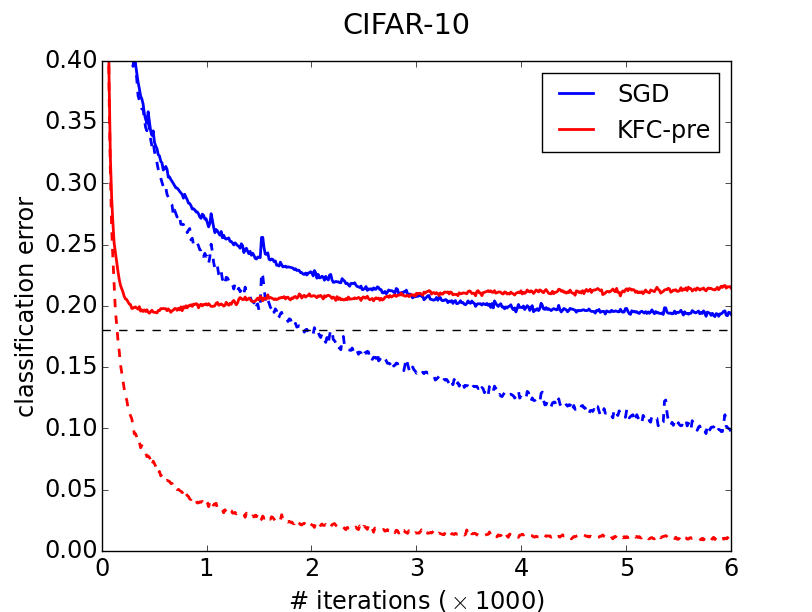}
(b) \includegraphics[width=0.45 \textwidth]{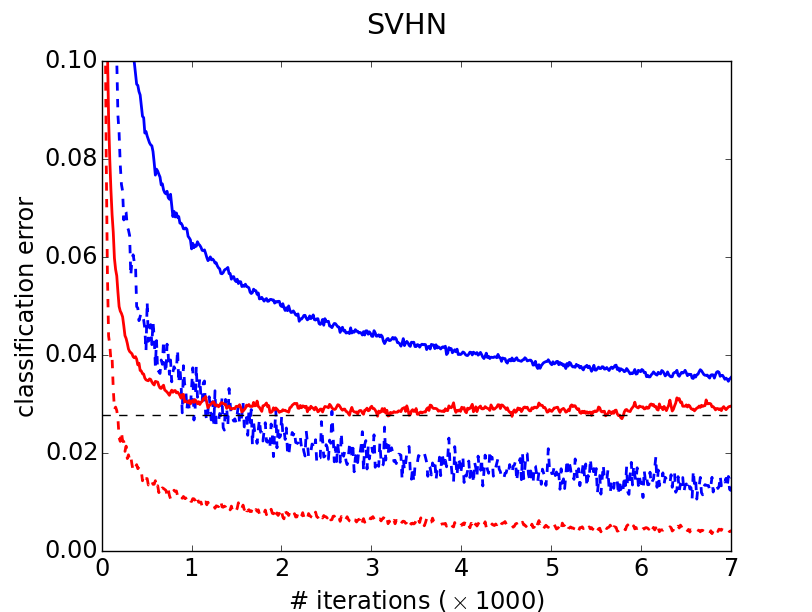}
\end{center}
\caption{Classification error as a function of the number of iterations (weight updates). Heuristically, this is a rough measure of how the algorithms might perform in a highly distributed setting. The $y$-axes represent classification error. {\bf (a)} CIFAR-10. {\bf (b)} SVHN. {\bf Solid lines} represent test error and {\bf dashed lines} represent training error. The {\bf horizontal dashed line} represents the previously reported test error for the same architecture.}
\label{fig:iterations}
\end{figure}

Much work has been devoted recently to highly parallel or distributed implementations of neural network optimization (e.g.~\citet{distributed_deep_nets}). Synchronous SGD effectively allows one to use very large mini-batches efficiently, which helps optimization by reducing the variance in the stochastic gradient estimates. However, the per-update performace levels off to that of batch SGD once the variance is no longer significant and curvature effects come to dominate. Asynchronous SGD partially alleviates this issue by using new network parameters as soon as they become available, but needing to compute gradients with stale parameters limits the benefits of this approach.

As a proxy for how the algorithms are likely to perform in a highly distributed setting\footnote{Each iteration of KFC-pre requires many of the same computations as SGD, most notably computing activations and gradients. There were two major sources of additional overhead: maintaining empirical averages of covariance statistics, and computing inverses or eigendecompositions of the Kronecker factors. These additional operations can almost certainly be performed asynchronously; in our own experiments, we only periodically performed these operations, and this did not cause a significant drop in performance. Therefore, we posit that each iteration of KFC-pre requires a comparable number of sequential operations to SGD for each weight update. This is in contrast to other methods which make good use of large mini-batches such as Hessian-free optimization \citep{HF}, which requires many sequential iterations for each weight update. KFC-pre also adds little communication overhead, as the Kronecker factors need not be sent to the worker nodes which compute the gradients.}, we measured the classification error as a function of the \emph{number of iterations} (weight updates) for each algorithm. Both algorithms were run with large mini-batches of size 4096 (in place of 128 for SGD and 512 for KFC-pre). Figure~\ref{fig:iterations} shows training curves for both algorithms on CIFAR-10 and SVHN, using the same architectures as above.\footnote{Both SGD and KFC-pre reached a slightly worse test error before they started overfitting, compared with the small-minibatch experiments of the previous section. This is because large mini-batches lose the regularization benefit of stochastic gradients. One would need to adjust the regularizer in order to get good generalization performance in this setting.} KFC-pre required far fewer weight updates to achieve good training and test error compared with SGD. For instance, on CIFAR-10, KFC-pre obtained a training error of 10\% after 300 updates, compared with 6000 updates for SGD, a 20-fold improvement. Similar speedups were obtained on test error and on the SVHN dataset. These results suggest that a distributed implementation of KFC-pre has the potential to obtain large speedups over distributed SGD-based algorithms.

\begin{small}
\bibliography{convnet}

\begin{thebibliography}{44}
\providecommand{\natexlab}[1]{#1}
\providecommand{\url}[1]{\texttt{#1}}
\expandafter\ifx\csname urlstyle\endcsname\relax
  \providecommand{\doi}[1]{doi: #1}\else
  \providecommand{\doi}{doi: \begingroup \urlstyle{rm}\Url}\fi

\bibitem[Amari(1998)]{natural_gradient}
Amari, Shun-Ichi.
\newblock Natural gradient works efficiently in learning.
\newblock \emph{Neural Computation}, 10\penalty0 (2):\penalty0 251--276, 1998.

\bibitem[Chapelle \& Erhan(2011)Chapelle and Erhan]{HF_preconditioner}
Chapelle, O. and Erhan, D.
\newblock Improved preconditioner for {Hessian}-free optimization.
\newblock In \emph{NIPS Workshop on Deep Learning and Unsupervised Feature
  Learning}, 2011.

\bibitem[Chellapilla et~al.(2006)Chellapilla, Puri, and
  Simard]{convolution_by_expansion}
Chellapilla, K., Puri, S., and Simard, P.
\newblock High performance convolutional neural networks for document
  processing.
\newblock In \emph{International Workshop on Frontiers in Handwriting
  Recognition}, 2006.

\bibitem[Cho et~al.(2013)Cho, Raiko, and Ilin]{enhanced_gradient}
Cho, K., Raiko, T., and Ilin, A.
\newblock Enhanced gradient for training restricted {Boltzmann} machines.
\newblock \emph{Neural Computation}, 25:\penalty0 805--813, 2013.

\bibitem[Dean et~al.(2012)Dean, Corrado, Monga, Chen, Devin, Le, Mao, Ranzato,
  Senior, Tucker, Yang, and Ng]{distributed_deep_nets}
Dean, J., Corrado, G.~S., Monga, R., Chen, K., Devin, M., Le, Q.~V., Mao,
  M.~Z., Ranzato, M., Senior, A., Tucker, P., Yang, K., and Ng, A.~Y.
\newblock Large scale distributed deep networks.
\newblock In \emph{Neural Information Processing Systems}, 2012.

\bibitem[Demmel(1997)]{demmel}
Demmel, J.~W.
\newblock \emph{Applied Numerical Linear Algebra}.
\newblock SIAM, 1997.

\bibitem[Desjardins et~al.(2015)Desjardins, Simonyan, Pascanu, and
  Kavukcuoglu]{natural_neural_networks}
Desjardins, G., Simonyan, K., Pascanu, R., and Kavukcuoglu, K.
\newblock Natural neural networks.
\newblock arXiv:1507.00210, 2015.

\bibitem[Duchi et~al.(2011)Duchi, Hazan, and Singer]{adagrad}
Duchi, J., Hazan, E., and Singer, Y.
\newblock Adaptive subgradient methods for online learning and stochastic
  optimization.
\newblock \emph{Journal of Machine Learning Research}, 12:\penalty0 2121--2159,
  2011.

\bibitem[Grosse \& Salakhutdinov(2015)Grosse and Salakhutdinov]{FANG}
Grosse, Roger and Salakhutdinov, Ruslan.
\newblock Scaling up natural gradient by sparsely factorizing the inverse
  {Fisher} matrix.
\newblock In \emph{Proceedings of the 32nd International Conference on Machine
  Learning (ICML)}, 2015.

\bibitem[Heskes(2000)]{heskes}
Heskes, Tom.
\newblock On ``natural" learning and pruning in multilayered perceptrons.
\newblock \emph{Neural Computation}, 12\penalty0 (4):\penalty0 881--901, 2000.

\bibitem[Hochreiter \& Schmidhuber(1997)Hochreiter and Schmidhuber]{lstm}
Hochreiter, S. and Schmidhuber, J.
\newblock Long short-term memory.
\newblock \emph{Neural Computation}, 9:\penalty0 1735--1780, 1997.

\bibitem[Ioffe \& Szegedy(2015)Ioffe and Szegedy]{batch_normalization}
Ioffe, S. and Szegedy, C.
\newblock Batch normalization: accelerating deep network training by reducing
  internal covariate shift.
\newblock In \emph{International Conference on Machine Learning}, 2015.

\bibitem[Kingma \& Ba(2015)Kingma and Ba]{adam}
Kingma, D.~P. and Ba, J.~L.
\newblock Adam: a method for stochastic optimization.
\newblock In \emph{International Conference on Learning Representations}, 2015.

\bibitem[Krizhevsky(2009)]{cifar}
Krizhevsky, A.
\newblock Learning multiple layers of features from tiny images.
\newblock Technical report, University of Toronto, 2009.

\bibitem[Krizhevsky et~al.(2012)Krizhevsky, Sutskever, and Hinton]{alexnet}
Krizhevsky, A., Sutskever, I., and Hinton, G.~E.
\newblock {ImageNet} classification with deep convolutional neural networks.
\newblock In \emph{Neural Information Processing Systems}, 2012.

\bibitem[{Le Roux} et~al.(2008){Le Roux}, Manzagol, and Bengio]{TONGA}
{Le Roux}, Nicolas, Manzagol, Pierre-antoine, and Bengio, Yoshua.
\newblock Topmoumoute online natural gradient algorithm.
\newblock In \emph{Advances in Neural Information Processing Systems 20}, pp.\
  849--856. MIT Press, 2008.

\bibitem[LeCun et~al.(1989)LeCun, Boser, Denker, Henderson, Howard, Hubbard,
  and Jackel]{convnet}
LeCun, Y., Boser, B., Denker, J.~S., Henderson, D., Howard, R.~E., Hubbard, W.,
  and Jackel, L.~D.
\newblock Backpropagation applied to handwritten zip code recognition.
\newblock \emph{Neural Computation}, 1:\penalty0 541--551, 1989.

\bibitem[LeCun et~al.(1998)LeCun, Bottou, Orr, and M{\"u}ller]{lecun_tricks}
LeCun, Y., Bottou, L., Orr, G., and M{\"u}ller, K.
\newblock Efficient backprop.
\newblock \emph{Neural networks: Tricks of the trade}, pp.\  546--546, 1998.

\bibitem[Martens(2010)]{HF}
Martens, J.
\newblock Deep learning via {H}essian-free optimization.
\newblock In \emph{Proceedings of the 27th International Conference on Machine
  Learning (ICML)}, 2010.

\bibitem[Martens(2014)]{ng_martens}
Martens, J.
\newblock New insights and perspectives on the natural gradient method, 2014.

\bibitem[Martens \& Grosse(2015)Martens and Grosse]{kfac}
Martens, J. and Grosse, R.
\newblock Optimizing neural networks with {Kronecker}-factored approximate
  curvature.
\newblock In \emph{International Conference on Machine Learning}, 2015.

\bibitem[Mnih(2009)]{cudamat}
Mnih, V.
\newblock {CUDAMat: A CUDA-based matrix class for Python}.
\newblock Technical Report 004, University of Toronto, 2009.

\bibitem[Mor{\'e}(1978)]{levenberg_marquardt}
Mor{\'e}, J.J.
\newblock The {L}evenberg-{M}arquardt algorithm: implementation and theory.
\newblock \emph{Numerical analysis}, pp.\  105--116, 1978.

\bibitem[Netzer et~al.(2011)Netzer, Wang, Coates, Bissacco, Wu, and Ng]{svhn}
Netzer, Y., Wang, T., Coates, A., Bissacco, A., Wu, B., and Ng, A.~Y.
\newblock Reading digits in natural images with unsupervised feature learning.
\newblock In \emph{Neural Information Processing Systems Deep Learning and
  Unsupervised Feature Learning Workshop}, 2011.

\bibitem[Nocedal \& Wright(2006)Nocedal and Wright]{nocedal_book}
Nocedal, Jorge and Wright, Stephen~J.
\newblock \emph{Numerical optimization}.
\newblock Springer, 2. ed. edition, 2006.

\bibitem[Ollivier(2015)]{ollivier_invariance}
Ollivier, Y.
\newblock Riemannian metrics for neural networks {I}: feedforward networks.
\newblock \emph{Information and Inference}, 4\penalty0 (2):\penalty0 108--153,
  2015.

\bibitem[Pascanu et~al.(2013)Pascanu, Mikolov, and Bengio]{rnn_difficulty}
Pascanu, R., Mikolov, T., and Bengio, Y.
\newblock On the difficulty of training recurrent neural networks.
\newblock In \emph{International Conference on Machine Learning}, 2013.

\bibitem[Pascanu et~al.(2014)Pascanu, Dauphin, Ganguli, and
  Bengio]{saddle_points}
Pascanu, R., Dauphin, Y.~N., Ganguli, S., and Bengio, Y.
\newblock On the saddle point problem for non-convex optimization.
\newblock arXiv:1405.4604, 2014.

\bibitem[Polyak \& Juditsky(1992)Polyak and Juditsky]{polyak_averaging}
Polyak, B.~T. and Juditsky, A.~B.
\newblock Acceleration of stochastic approximation by averaging.
\newblock \emph{SIAM Journal of Control and Optimization}, 30\penalty0
  (4):\penalty0 838--855, 1992.

\bibitem[Povey et~al.(2015)Povey, Zhang, and Khudanpur]{povey_ng}
Povey, Daniel, Zhang, Xiaohui, and Khudanpur, Sanjeev.
\newblock Parallel training of {DNNs} with natural gradient and parameter
  averaging.
\newblock In \emph{International Conference on Learning Representations:
  Workshop track}, 2015.

\bibitem[Ranzato \& Hinton(2010)Ranzato and Hinton]{mcrbm}
Ranzato, M. and Hinton, G.~E.
\newblock Modeling pixel means and covariances using factorized third-order
  {Boltzmann} machines.
\newblock In \emph{Computer Vision and Pattern Recognition}, 2010.

\bibitem[Rumelhart et~al.(1986)Rumelhart, Hinton, and Williams]{backprop}
Rumelhart, D.E., Hinton, G.E., and Williams, R.J.
\newblock Learning representations by back-propagating errors.
\newblock \emph{Nature}, 323\penalty0 (6088):\penalty0 533--536, 1986.

\bibitem[Schraudolph(2002)]{schraudolph}
Schraudolph, Nicol~N.
\newblock Fast curvature matrix-vector products for second-order gradient
  descent.
\newblock \emph{Neural Computation}, 14, 2002.

\bibitem[Simoncelli \& Olshausen(2001)Simoncelli and
  Olshausen]{image_statistics_review}
Simoncelli, E.~P. and Olshausen, B.~A.
\newblock Natural image statistics and neural representation.
\newblock \emph{Annual Review of Neuroscience}, 24:\penalty0 1193--1216, 2001.

\bibitem[Sohl-Dickstein et~al.(2014)Sohl-Dickstein, Poole, and Ganguli]{sfo}
Sohl-Dickstein, J., Poole, B., and Ganguli, S.
\newblock Fast large-scale optimization by unifying stochastic gradient and
  {quasi-Newton} methods.
\newblock In \emph{International Conference on Machine Learning}, 2014.

\bibitem[Srivastava(2013)]{nitish_thesis}
Srivastava, N.
\newblock Improving neural networks with dropout.
\newblock Master's thesis, University of Toronto, 2013.

\bibitem[Srivastava(2015)]{tdlcn}
Srivastava, N.
\newblock {Toronto Deep Learning ConvNet}.
\newblock \url{https://github.com/TorontoDeepLearning/convnet/}, 2015.

\bibitem[Sutskever et~al.(2014)Sutskever, Vinyals, and Le]{ilya_sequence}
Sutskever, I., Vinyals, O., and Le, Q. V.~V.
\newblock Sequence to sequence learning with neural networks.
\newblock In \emph{Neural Information Processing Systems}, 2014.

\bibitem[Swersky et~al.(2010)Swersky, Chen, Marlin, and de~Freitas]{kevin}
Swersky, K., Chen, Bo, Marlin, B., and de~Freitas, N.
\newblock A tutorial on stochastic approximation algorithms for training
  restricted {Boltzmann} machines and deep belief nets.
\newblock In \emph{Information Theory and Applications Workshop (ITA), 2010},
  pp.\  1--10, Jan 2010.

\bibitem[Tieleman \& Hinton(2012)Tieleman and Hinton]{rmsprop}
Tieleman, T. and Hinton, G.
\newblock Lecture 6.5, {RMSProp}.
\newblock In Coursera course Neural Networks for Machine Learning, 2012.

\bibitem[Vatanen et~al.(2013)Vatanen, Raiko, Valpola, and
  LeCun]{Vatanen_centering}
Vatanen, Tommi, Raiko, Tapani, Valpola, Harri, and LeCun, Yann.
\newblock Pushing stochastic gradient towards second-order methods --
  backpropagation learning with transformations in nonlinearities.
\newblock 2013.

\bibitem[Vinyals \& Povey(2012)Vinyals and Povey]{KSD}
Vinyals, O. and Povey, D.
\newblock Krylov subspace descent for deep learning.
\newblock In \emph{International Conference on Artificial Intelligence and
  Statistics (AISTATS)}, 2012.

\bibitem[Yang et~al.(2014)Yang, Moczulski, Denil, de~Freitas, Smola, Song, and
  Wang]{deep_fried}
Yang, Z., Moczulski, M., Denil, M., de~Freitas, N., Smola, A., Song, L., and
  Wang, Z.
\newblock Deep fried convnets.
\newblock arXiv:1412.7149, 2014.

\bibitem[Zeiler(2013)]{ADADELTA}
Zeiler, Matthew~D.
\newblock {ADADELTA}: An adaptive learning rate method.
\newblock 2013.

\end{thebibliography}
\bibliographystyle{convnet}
\end{small}

\appendix

\section{Optimization methods}

\subsection{KFC as a preconditioner for SGD}
\label{app:approx_ng}

\begin{algorithm}
\begin{algorithmic}
  \REQUIRE \begin{varwidth}[t]{\linewidth}
    initial network parameters $\paramVecT{0}$ \par
    weight decay penalty $\lambdaParam$ \par
    learning rate $\learningRate$ \par
    momentum parameter $\momentumParam$ (suggested value: 0.9) \par
    parameter averaging timescale $\avgTimescale$ (suggested value: number of mini-batches in the dataset) \par
    damping parameter $\gammaParam$ (suggested value: $10^{-3}$, but this may require tuning) \par
    statistics update period $\updateStatsEvery$ (see Appendix~\ref{app:implementation}) \par
    inverse update period $\updateFacEvery$ (see Appendix~\ref{app:implementation}) \par
    clipping parameter $\clipParam$ (suggested value: 0.3) \par
    \end{varwidth}
  \STATE $\iterCount \gets 0$
  \STATE $\momentumVec \gets \zeroVec$
  \STATE $\avgWeight \gets e^{-1/\avgTimescale}$
  \STATE $\paramVecAvgT{0} \gets \paramVecT{0}$
  \STATE Estimate the factors $\{ \convKronActL{\layerIdx} \}_{\layerIdx=0}^{\numLayers-1}$ and $\{ \convKronGradL{\layerIdx} \}_{\layerIdx=1}^\numLayers$ on the full dataset using Eqn.~\ref{eqn:conv_kron_emp}
  \STATE Compute the inverses $\{ \convKronActGammaLInv{\gammaParam}{\layerIdx} \}_{\layerIdx=0}^{\numLayers-1}$ and $\{ \convKronGradGammaLInv{\gammaParam}{\layerIdx} \}_{\layerIdx=1}^\numLayers$ using Eqn.~\ref{eqn:factored_damping_conv}
  \WHILE{stopping criterion not met}
    \STATE $\iterCount \gets \iterCount + 1$
    \STATE Select a new mini-batch
    \STATE
    \IF{$\iterCount \equiv 0 \pmod{\updateStatsEvery}$}
      \STATE Update the factors $\{ \convKronActL{\layerIdx} \}_{\layerIdx=0}^{\numLayers-1}$ and $\{ \convKronGradL{\layerIdx} \}_{\layerIdx=1}^\numLayers$ using Eqn.~\ref{eqn:conv_kron_avg}
    \ENDIF
    \IF{$\iterCount \equiv 0 \pmod{\updateFacEvery}$}
      \STATE Compute the inverses $\{ \convKronActGammaLInv{\gammaParam}{\layerIdx} \}_{\layerIdx=0}^{\numLayers-1}$ and $\{ \convKronGradGammaLInv{\gammaParam}{\layerIdx} \}_{\layerIdx=1}^\numLayers$ using Eqn.~\ref{eqn:factored_damping_conv}
    \ENDIF
    \STATE
    \STATE Compute $\nabla \objective$ using backpropagation
    \STATE Compute $\approxNatGrad = [\fisherMatApproxGamma{\gammaParam}]^{-1} \nabla \objective$ using Eqn.~\ref{eqn:approx_ng_damped_conv}
    \STATE $\updateVec \gets -\learningRate \approxNatGrad$
    \STATE
    \STATE
    \COMMENT{Clip the update if necessary}
    \STATE Estimate $\updateNorm = \updateVec^\transpose \fisherMat \updateVec + \lambdaParam \updateVec^\transpose \updateVec$ using a subset of the current mini-batch
    \IF{$\updateNorm > \clipParam$}
      \STATE $\updateVec \gets \updateVec / \sqrt{\updateNorm / \clipParam}$
    \ENDIF
    \STATE
    \STATE $\momentumVecT{\iterCount} \gets \momentumParam \momentumVecT{\iterCount-1} + \updateVec$
    \COMMENT{Update momentum}
    \STATE $\paramVecT{\iterCount} \gets \paramVecT{\iterCount - 1} + \momentumVecT{\iterCount}$
    \COMMENT{Update parameters}
    \STATE $\paramVecAvgT{\iterCount} \gets \avgWeight \paramVecAvgT{\iterCount-1} + (1 - \avgWeight) \paramVecT{\iterCount}$
    \COMMENT{Parameter averaging}
  \ENDWHILE
  \RETURN Averaged parameter vector $\paramVecAvgT{\iterCount}$
  
\end{algorithmic}
\caption{Using KFC as a preconditioner for SGD}
\label{alg:kfc}
\end{algorithm}

The first optimization procedure we used in our experiments was a generic natural gradient descent approximation, where $\fisherMatApproxGamma{\gammaParam}$ was used to approximate $\fisherMat$. This procedure is like SGD with momentum, except that $\approxNatGrad$ is substituted for the Euclidean gradient. One can also view this as a preconditioned SGD method, where $\fisherMatApproxGamma{\gammaParam}$ is used as the preconditioner. To distinguish this optimization procedure from the KFC approximation itself, we refer to it as KFC-pre. Our procedure is perhaps more closely analogous to earlier Kronecker product-based natural gradient approximations \citep{heskes,povey_ng} than to K-FAC itself.

In addition, we used a variant of gradient clipping \citep{rnn_difficulty} to avoid instability. In particular, we clipped the approximate natural gradient update $\paramUpdate$ so that $\updateNorm \triangleq \paramUpdate^\transpose \fisherMat \paramUpdate < 0.3$, where $\fisherMat$ is estimated using 1/4 of the training examples from the current mini-batch. One motivation for this heuristic is that $\updateNorm$ approximates the KL divergence of the predictive distributions before and after the update, and one wouldn't like the predictive distributions to change too rapidly. The value $\updateNorm$ can be computed using curvature-vector products \citep{schraudolph}. The clipping was only triggered near the beginning of optimization, where the parameters (and hence also the curvature) tended to change rapidly.\footnote{This may be counterintuitive, since SGD applied to neural nets tends to take small steps early in training, at least for commonly used initializations. For SGD, this happens because the initial parameters, and hence also the initial curvature, are relatively small in magnitude. Our method, which corrects for the curvature, takes larger steps early in training, when the error signal is the largest.} Therefore, one can likely eliminate this step by initializing from a model partially trained using SGD.

Taking inspiration from Polyak averaging \citep{polyak_averaging,kevin}, we used an exponential moving average of the iterates. This helps to smooth out the variability caused by the mini-batch selection. The full optimization procedure is given in Algorithm~\ref{alg:kfc}.

\subsection{Kronecker-factored approximate curvature}
\label{app:kfac}

The central idea of K-FAC is the combination of approximations to the Fisher matrix described in Section \ref{sec:kfac}.  While one could potentially perform standard natural gradient descent using the approximate natural gradient $\approxNatGrad$, perhaps with a fixed learning rate and with fixed Tikhonov-style damping/reglarization, \citet{kfac} found that the most effective way to use $\approxNatGrad$ was within a robust 2nd-order optimization framework based on adaptively damped quadratic models, similar to the one employed in HF \citep{HF}.  In this section, we describe the K-FAC method in detail, while omitting certain aspects of the method which we do not use, such as the block tri-diagonal inverse approximation.

K-FAC uses a quadratic model of the objective to dynamically choose the step size $\LR$ and momentum decay parameter $\momDecayParam$ at each step.  This is done by taking $\paramUpdate = \LR \approxNatGrad + \momDecayParam \paramUpdate_{prev}$ where $\paramUpdate_{prev}$ is the update computed at the previous iteration, and minimizing the following quadratic model of the objective (over the current mini-batch):
\begin{equation}
\quadModel(\paramVec + \paramUpdate) = \objective(\paramVec) + \nabla \objective^\transpose \paramUpdate + \frac{1}{2} \paramUpdate^\transpose ( \fisherMat + \weightDecayParam \ident ) \paramUpdate. \label{eqn:loss_quad}
\end{equation}
where we assume the $\objective$ is the expected loss plus an $\ell_2$-regularization term of the form $\frac{\weightDecayParam}{2} \|\paramVec\|^2$.
Since $\fisherMat$ behaves like a curvature matrix, this quadratic function is similar to the second-order Taylor approximation to $\objective$. Note that here we use the \emph{exact} $\fisherMat$ for the mini-batch, rather than the approximation $\fisherMatApprox$.  Intuitively, one can think of $\paramUpdate$ as being itself iteratively optimized at each step in order to better minimize $\quadModel$, or in other words, to more closely match the true natural gradient (which is the exact minimum of $\quadModel$).  Interestingly, in full batch mode, this method is equivalent to performing preconditioned conjugate gradient in the vicinity of a local optimum (where $\fisherMat$ remains approximately constant).

To see how this minimization over $\LR$ and $\momDecayParam$ can be done efficiently, without computing the entire matrix $\fisherMat$, consider the general problem of minimizing $\quadModel$ on the subspace spanned by arbitrary vectors $\{\stepVecI{1}, \ldots, \stepVecI{\numStepVec}\}$. (In our case, $\numStepVec = 2$, $\stepVecI{1} = \approxNatGrad$ and $\stepVecI{2} = \paramUpdate_{prev}$.) The coefficients $\stepCoeffs$ can be found by solving the linear system $\stepA \stepCoeffs = -\stepB$, where $\stepA_{ij} = \stepVecI{i}^\transpose \fisherMat \stepVecI{j}$ and $\stepB_i = \nabla \objective^\transpose \stepVecI{i}$. To compute the matrix $\stepA$, we compute each of the matrix-vector products $\fisherMat \stepVecI{j}$ using automatic differentiation \citep{schraudolph}.

Both the approximate natural gradient $\approxNatGrad$ and the update $\paramUpdate$ (generated as described above) arise as the minimum, or approximate minimum, of a corresponding quadratic model.  In the case of $\paramUpdate$, this model is given by $\quadModel$ and is designed to be a good local approximation to the objective $\objective$.  Meanwhile, the quadratic model which is implicitly minimized when computing $\approxNatGrad$ is designed to approximate $\quadModel$ (by approximating $\fisherMat$ with $\fisherMatApprox$).

Because these quadratic models are approximations, naively minimizing them over $\Real^\numParams$ can lead to poor results in both theory and practice.  To help deal with this problem K-FAC employs an adaptive Tikhonov-style damping scheme applied to each of them (the details of which differ in either case).

To compensate for the inaccuracy of $\quadModel$ as a model of $\objective$, K-FAC adds a Tikhonov regularization term $\frac{\lambdaParam}{2} \|\paramUpdate\|^2$ to $\quadModel$ which encourages the update to remain small in magnitude, and thus more likely to remain in the region where $\quadModel$ is a reasonable approximation to $\objective$. This amounts to replacing $\weightDecayParam$ with $\weightDecayParam + \lambdaParam$ in Eqn.~\ref{eqn:loss_quad}. Note that this technique is formally equivalent to performing constrained minimization of $\quadModel$ within some spherical region around $\paramUpdate = 0$ (a ``trust-region").  See for example \citet{nocedal_book}. 

K-FAC uses the well-known Levenberg-Marquardt technique \citep{levenberg_marquardt} to automatically adapt the damping parameter $\lambdaParam$ so that the damping is loosened or tightened depending on how accurately $\quadModel(\paramVec + \paramUpdate)$ predicts the true decrease in the objective function after each step. This accuracy is measured by the so-called ``reduction ratio", which is given by
\begin{equation}
\rho = \frac{\objective(\paramVec) - \objective(\paramVec + \paramUpdate)}{\quadModel(\paramVec) - \quadModel(\paramVec + \paramUpdate)},
\end{equation}
and should be close to 1 when the quadratic approximation is reasonably accurate around the given value of $\paramVec$. The update rule for $\lambdaParam$ is as follows:
\begin{equation}
\lambdaParam \gets \left\{ \begin{array}{ll} \lambdaParam \cdot \lambdaParamDec & \textrm{if } \rhoStat > 3/4 \\ \lambdaParam & \textrm{if } 1/4 \leq \rhoStat \leq 3/4 \\ \lambdaParam \cdot \lambdaParamInc & \textrm{if } \rhoStat < 1/4 \end{array} \right.
\end{equation}
where $\lambdaParamInc$ and $\lambdaParamDec$ are constants such that $\lambdaParamDec < 1 < \lambdaParamInc$. 

To compensate for the inaccuracy of $\fisherMatApprox$, and encourage $\approxNatGrad$ to be smaller and more conservative, K-FAC similarly adds $\gammaParam \ident$ to $\fisherMatApprox$ before inverting it.  As discussed in Section \ref{sec:kfac}, this can be done approximately by adding multiples of $\ident$ to each of the Kronecker factors $\covActivationsL{\layerIdx}$ and $\covPreActivationGradientsL{\layerIdx}$ of $\fisherMatApproxL{\layerIdx}$ before inverting them.  Alternatively, an exact solution can be obtained by expanding out the eigendecomposition of each block $\fisherMatApproxL{\layerIdx}$ of $\fisherMatApprox$, and using the following identity:
\begin{align}
\left[ \fisherMatApproxL{\layerIdx} + \gammaParam \ident \right]^{-1} &= \left[ \left( \qMat_{\covActivations} \otimes \qMat_{\covPreActivationGradients} \right) \left( \dMat_{\covActivations} \otimes \dMat_{\covPreActivationGradients} \right) \left( \qMat_{\covActivations}^\transpose \otimes \qMat_{\covPreActivationGradients}^\transpose \right) + \gammaParam \ident \right]^{-1} \\
&= \left[ \left( \qMat_{\covActivations} \otimes \qMat_{\covPreActivationGradients} \right) \left( \dMat_{\covActivations} \otimes \dMat_{\covPreActivationGradients} + \gammaParam \ident \right) \left( \qMat_{\covActivations}^\transpose \otimes \qMat_{\covPreActivationGradients}^\transpose \right) \right]^{-1} \\
&= \left( \qMat_{\covActivations} \otimes \qMat_{\covPreActivationGradients} \right) \left( \dMat_{\covActivations} \otimes \dMat_{\covPreActivationGradients} + \gammaParam \ident \right)^{-1} \left( \qMat_{\covActivations}^\transpose \otimes \qMat_{\covPreActivationGradients}^\transpose \right), \label{eqn:kron_damping}
\end{align}
where $\covActivationsL{\layerIdx} = \qMat_{\covActivations} \dMat_{\covActivations} \qMat_{\covActivations}^\transpose$ and $\covPreActivationGradientsL{\layerIdx} = \qMat_{\covPreActivationGradients} \dMat_{\covPreActivationGradients} \qMat_{\covPreActivationGradients}^\transpose$ are the orthogonal eigendecompositions of $\covActivationsL{\layerIdx}$ and $\covPreActivationGradientsL{\layerIdx}$ (which are symmetric PSD). These manipulations are based on well-known properties of the Kronecker product which can be found in, e.g., \citet[sec.~6.3.3]{demmel}. 
Matrix-vector products $(\fisherMatApprox + \gammaParam \ident)^{-1} \nabla \objective$ can then be computed from the above identity using the following block-wise formulas:
\begin{align}
{\bf V}_1 &= \qMat_{\covPreActivationGradients}^\transpose (\nabla_{\weightsBiasesL{\layerIdx}} \objective) \qMat_{\covActivations} \\
{\bf V}_2 &= {\bf V}_1 / (\dVec_{\covPreActivationGradients} \dVec_{\covActivations}^\transpose + \gammaParam) \label{eqn:damping_elementwise_step} \\
(\fisherMatApproxL{\layerIdx} + \gammaParam \ident)^{-1} \kvec(\nabla_{\weightsBiasesL{\layerIdx}} \objective) &= \kvec \left( \qMat_{\covPreActivationGradients} {\bf V}_2 \qMat_{\covActivations}^\transpose \right), \label{eqn:damped_matrix_vector_products}
\end{align}
where $\dVec_{\covPreActivationGradients}$ and $\dVec_{\covActivations}$ are the diagonals of $\dMat_{\covPreActivationGradients}$ and $\dMat_{\covActivations}$ and the division and addition in Eqn.~\ref{eqn:damping_elementwise_step} are both elementwise.

One benefit of this damping strategy is that it automatically accounts for the curvature contributed by both the quadratic damping term $\frac{\lambdaParam}{2} \|\paramUpdate\|^2$ and the weight decay penalty $\frac{\weightDecayParam}{2} \|\paramVec\|^2$ if these are used. Heuristically, one could even set $\gammaParam = \sqrt{\lambdaParam + \weightDecayParam}$, which can sometimes perform well.  One should always choose $\gammaParam$ at least this large. However, it may sometimes be advantageous to choose $\gammaParam$ significantly larger, as $\fisherMatApprox$ might not be a good approximation to $\fisherMat$, and the damping may help reduce the impact of directions erroneously estimated to have low curvature. For consistency with \citet{kfac}, we adopt their method of automatically adapting $\gammaParam$. In particular, each time we adapt $\gammaParam$, we compute $\approxNatGrad$ for three different values $\gammaParamDec < \gamma < \gammaParamInc$. We choose whichever of the three values results in the lowest value of $\quadModel(\paramVec + \paramUpdate)$.

\subsection{Efficient implementation}
\label{app:implementation}

We based our implementation on the Toronto Deep Learning ConvNet (TDLCN) package \citep{tdlcn}, which is a Python wrapper around CUDA kernels. We needed to write a handful of additional kernels:
\begin{itemize}
\item a kernel for computing $\convKronActEmpL{\layerIdx}$ (Eqn.~\ref{eqn:conv_kron_emp})
\item kernels which performed forward mode automatic differentiation for the max-pooling and response normalization layers
\end{itemize}
Most of the other operations for KFC could be performed on the GPU using kernels provided by TDLCN. The only exception is computing the inverses $\{ \convKronActGammaLInv{\gammaParam}{\layerIdx} \}_{\layerIdx=0}^{\numLayers-1}$ and $\{ \convKronGradGammaLInv{\gammaParam}{\layerIdx} \}_{\layerIdx=1}^\numLayers$, which was done on the CPU. (The forward mode kernels are only used in update clipping; as mentioned above, one can likely eliminate this step in practice by initializing from a partially trained model.)

KFC introduces several sources of overhead per iteration compared with SGD:
\begin{itemize}
\item Updating the factors $\{ \convKronActL{\layerIdx} \}_{\layerIdx=0}^{\numLayers-1}$ and $\{ \convKronGradL{\layerIdx} \}_{\layerIdx=1}^\numLayers$
\item Computing the inverses $\{ \convKronActGammaLInv{\gammaParam}{\layerIdx} \}_{\layerIdx=0}^{\numLayers-1}$ and $\{ \convKronGradGammaLInv{\gammaParam}{\layerIdx} \}_{\layerIdx=1}^\numLayers$
\item Computing the approximate natural gradient $\approxNatGrad = [\fisherMatApproxGamma{\gammaParam}]^{-1} \nabla \objective$
\item Estimating $\updateNorm = \updateVec^\transpose \fisherMat \updateVec + \lambdaParam \updateVec^\transpose \updateVec$ (which is used for gradient clipping)
\end{itemize}
The overhead from the first two could be reduced by only periodically recomputing the factors and inverses, rather than doing so after every mini-batch. The cost of estimating $\updateVec^\transpose \fisherMat \updateVec$ can be reduced by using only a subset of the mini-batch. These shortcuts did not seem to hurt the per-epoch progress very much, as one can get away with using quite stale curvature information, and $\updateNorm$ is only used for clipping and therefore doesn't need to be very accurate. The cost of computing $\approxNatGrad$ is unavoidable, but because it doesn't grow with the size of the mini-batch, its per-epoch cost can be made smaller by using larger mini-batches. (As we discuss further in Section~\ref{sec:distributed}, KFC can work well with large mini-batches.) These shortcuts introduce several additional hyperparameters, but fortunately these are easy to tune: we simply chose them such that the per-epoch cost of KFC was less than twice that of SGD. This requires only running a profiler for a few epochs, rather than measuring overall optimization performance.

Observe that the inverses $\{ \convKronActGammaLInv{\gammaParam}{\layerIdx} \}_{\layerIdx=0}^{\numLayers-1}$ and $\{ \convKronGradGammaLInv{\gammaParam}{\layerIdx} \}_{\layerIdx=1}^\numLayers$  are computed on the CPU, while all of the other heavy computation is GPU-bound. In principle, since KFC works fine with stale curvature information, the inverses could be computed asychronously while the algorithm is running, thereby making their cost almost free. We did not exploit this in our experiments, however.

\section{Proofs of theorems}
\label{app:proofs}

\subsection{Proofs for Section~\ref{sec:kfc}}

\setcounter{theorem}{0}

\begin{lemma}
Under approximation {\bf IAD},
\begin{align}
\expect \left[ \gradWeightsIJS{\outputMapIdx}{\inputMapIdx}{\spatOffset} \gradWeightsIJS{\outputMapIdx^\prime}{\inputMapIdx^\prime}{\spatOffset^\prime} \right] &= \sum_{\spatIdx \in \spatIdxSet} \sum_{\spatIdx^\prime \in \spatIdxSet} \expect \left[ \activationsJS{\inputMapIdx}{\spatIdx+\spatOffset} \activationsJS{\inputMapIdx^\prime}{\spatIdx^\prime+\spatOffset^\prime} \right] \expect \left[ \gradPreActivationsIS{\outputMapIdx}{\spatIdx} \gradPreActivationsIS{\outputMapIdx^\prime}{\spatIdx^\prime} \right] \\
\expect \left[ \gradWeightsIJS{\outputMapIdx}{\inputMapIdx}{\spatOffset} \grad \biasI{\outputMapIdx^\prime} \right] &= \sum_{\spatIdx \in \spatIdxSet} \sum_{\spatIdx^\prime \in \spatIdxSet} \expect \left[ \activationsJS{\inputMapIdx}{\spatIdx+\spatOffset} \right] \expect \left[ \gradPreActivationsIS{\outputMapIdx}{\spatIdx} \gradPreActivationsIS{\outputMapIdx^\prime}{\spatIdx^\prime} \right] \\
\expect \left[ \grad \biasI{\outputMapIdx} \grad \biasI{\outputMapIdx^\prime} \right] &= \numLocs \, \expect \left[ \gradPreActivationsIS{\outputMapIdx}{\spatIdx} \gradPreActivationsIS{\outputMapIdx^\prime}{\spatIdx^\prime} \right]
\end{align}
\label{lem:approx_iad}
\end{lemma}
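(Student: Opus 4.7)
The plan is to prove each of the three identities by direct expansion from the definitions of the weight and bias gradients, then pushing the expectation through the sums by linearity and invoking the IAD factorization at the end.

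First, I recall the formula for the weight gradient (Eqn.~\ref{eqn:weight_grad}),
$$\gradWeightsIJS{\outputMapIdx}{\inputMapIdx}{\spatOffset} = \sum_{\spatIdx \in \spatIdxSet} \activationsJS{\inputMapIdx}{\spatIdx+\spatOffset} \gradPreActivationsIS{\outputMapIdx}{\spatIdx},$$
and observe that, since the bias $\biasI{\outputMapIdx}$ enters the pre-activations at every spatial location, backpropagation gives the analogous formula $\grad \biasI{\outputMapIdx} = \sum_{\spatIdx \in \spatIdxSet} \gradPreActivationsIS{\outputMapIdx}{\spatIdx}$. For the first identity, I substitute both factors, distribute to obtain a double sum over $\spatIdx, \spatIdx^\prime \in \spatIdxSet$, and use linearity of expectation to pull the sums outside. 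The summand is
$$\expect\!\left[ \activationsJS{\inputMapIdx}{\spatIdx+\spatOffset}\, \activationsJS{\inputMapIdx^\prime}{\spatIdx^\prime+\spatOffset^\prime}\, \gradPreActivationsIS{\outputMapIdx}{\spatIdx}\, \gradPreActivationsIS{\outputMapIdx^\prime}{\spatIdx^\prime} \right].$$
Approximation \textbf{IAD} asserts that the activations are jointly independent of the pre-activation derivatives, so this expectation factors into $\expect[\activationsJS{\inputMapIdx}{\spatIdx+\spatOffset} \activationsJS{\inputMapIdx^\prime}{\spatIdx^\prime+\spatOffset^\prime}] \cdot \expect[\gradPreActivationsIS{\outputMapIdx}{\spatIdx} \gradPreActivationsIS{\outputMapIdx^\prime}{\spatIdx^\prime}]$, giving the claimed expression.

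The second identity proceeds identically, except that in place of the second weight gradient I substitute the bias-gradient formula; the inner expectation then contains only one activation factor, which \textbf{IAD} splits off as $\expect[\activationsJS{\inputMapIdx}{\spatIdx+\spatOffset}]$ times the expectation of the two derivatives. For the third (bias--bias) identity, no independence is required: expanding $\grad\biasI{\outputMapIdx}\,\grad\biasI{\outputMapIdx^\prime}$ as a double sum of $\gradPreActivationsIS{\outputMapIdx}{\spatIdx}\,\gradPreActivationsIS{\outputMapIdx^\prime}{\spatIdx^\prime}$ and taking expectations already yields $\sum_{\spatIdx,\spatIdx^\prime}\expect[\gradPreActivationsIS{\outputMapIdx}{\spatIdx}\,\gradPreActivationsIS{\outputMapIdx^\prime}{\spatIdx^\prime}]$. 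To recover the stated form with a factor of $\numLocs$ out front and a single representative expectation on the right, one has to appeal (at least implicitly) to \textbf{SH}+\textbf{SUD}, under which the double sum collapses to $\numLocs$ identical copies of $\autoGrad(\outputMapIdx,\outputMapIdx^\prime,0)$; I will flag this reading so that its use further downstream in Theorem~\ref{thm:kfc_sud} is unambiguous.

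I do not anticipate any substantive obstacle: every step is a purely algebraic manipulation of a sum of products combined with one invocation of the independence assumption. The only care required is in the bookkeeping of the double spatial sums and in being explicit about the mildly informal third identity, whose stated simplification actually leans on the homogeneity and uncorrelatedness assumptions introduced alongside \textbf{IAD} rather than on \textbf{IAD} itself.
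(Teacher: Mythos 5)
Your proposal is correct and follows essentially the same route as the paper's proof: expand the weight (and bias) gradients via Eqn.~\ref{eqn:weight_grad}, distribute into a double spatial sum, apply linearity of expectation, and invoke \textbf{IAD} to factor the activation and derivative expectations, with the second and third identities treated analogously. Your side remark on the bias--bias case is also apt: as literally written, the prefactor $\numLocs$ (in place of the full double sum over $\spatIdxSet \times \spatIdxSet$) only emerges once \textbf{SH} and \textbf{SUD} are brought in, which is exactly how the expression is used in Theorem~\ref{thm:kfc_sud}, whereas the paper simply declares the remaining cases ``analogous.''
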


\begin{proof}
We prove the first equality; the rest are analogous.
\begin{align}
\expect[\gradWeightsIJS{\dimIdxOne}{\dimIdxTwo}{\spatOffset} \gradWeightsIJS{\dimIdxOne^\prime}{\dimIdxTwo^\prime}{\spatOffset^\prime}] 
&= \expect \left[ \left( \sum_{\spatIdx \in \spatIdxSet} \activationsJS{\inputMapIdx}{\spatIdx+\spatOffset} \gradPreActivationsIS{\outputMapIdx}{\spatIdx} \right) \left( \sum_{\spatIdx^\prime \in \spatIdxSet} \activationsJS{\inputMapIdx^\prime}{\spatIdx^\prime+\spatOffset^\prime} \gradPreActivationsIS{\outputMapIdx^\prime}{\spatIdx^\prime} \right) \right] \\
&= \expect \left[ \sum_{\spatIdx \in \spatIdxSet} \sum_{\spatIdx^\prime \in \spatIdxSet} \activationsJS{\inputMapIdx}{\spatIdx+\spatOffset} \gradPreActivationsIS{\outputMapIdx}{\spatIdx} \activationsJS{\inputMapIdx^\prime}{\spatIdx^\prime+\spatOffset^\prime} \gradPreActivationsIS{\outputMapIdx^\prime}{\spatIdx^\prime} \right] \\
&= \sum_{\spatIdx \in \spatIdxSet} \sum_{\spatIdx^\prime \in \spatIdxSet} \expect \left[ \activationsJS{\inputMapIdx}{\spatIdx+\spatOffset} \gradPreActivationsIS{\outputMapIdx}{\spatIdx} \activationsJS{\inputMapIdx^\prime}{\spatIdx^\prime+\spatOffset^\prime} \gradPreActivationsIS{\outputMapIdx^\prime}{\spatIdx^\prime} \right] \\
&= \sum_{\spatIdx \in \spatIdxSet} \sum_{\spatIdx^\prime \in \spatIdxSet} \expect \left[ \activationsJS{\inputMapIdx}{\spatIdx+\spatOffset} \activationsJS{\inputMapIdx^\prime}{\spatIdx^\prime+\spatOffset^\prime} \right] \expect \left[ \gradPreActivationsIS{\outputMapIdx}{\spatIdx} \gradPreActivationsIS{\outputMapIdx^\prime}{\spatIdx^\prime} \right] 
\end{align}
Assumption {\bf IAD} is used in the final line.
\end{proof}

\begin{theorem}
Combining approximations {\bf IAD}, {\bf SH}, and {\bf SUD} yields the following factorization:
\begin{align}
\expect \left[ \gradWeightsIJS{\outputMapIdx}{\inputMapIdx}{\spatOffset} \gradWeightsIJS{\outputMapIdx^\prime}{\inputMapIdx^\prime}{\spatOffset^\prime} \right] &= \boundaryFunction(\spatOffset, \spatOffset^\prime) \, \autoActivations(\inputMapIdx, \inputMapIdx^\prime, \spatOffset^\prime - \spatOffset) \, \autoGrad(\outputMapIdx, \outputMapIdx^\prime, 0), \nonumber \\
\expect \left[ \gradWeightsIJS{\outputMapIdx}{\inputMapIdx}{\spatOffset} \grad \biasI{\outputMapIdx^\prime} \right] &= \boundaryFunctionUni(\spatOffset) \, \meanActivations(\inputMapIdx) \, \autoGrad(\outputMapIdx, \outputMapIdx^\prime, 0) \nonumber \\
\expect \left[ \grad \biasI{\outputMapIdx} \grad \biasI{\outputMapIdx^\prime} \right] &= \numLocs \, \autoGrad(\outputMapIdx, \outputMapIdx^\prime, 0) 
\end{align}
where
\begin{align}
\boundaryFunctionUni(\spatOffset) &\triangleq \left( \numRows - |\spatOffsetRow| \right) \, \left( \numCols - |\spatOffsetCol| \right) \nonumber \\
\boundaryFunction(\spatOffset, \spatOffset^\prime) &\triangleq \left( \numRows - \max(\spatOffsetRow, \spatOffsetRow^\prime, 0) + \min(\spatOffsetRow, \spatOffsetRow^\prime, 0) \right) \cdot \left( \numCols - \max(\spatOffsetCol, \spatOffsetCol^\prime, 0) + \min(\spatOffsetCol, \spatOffsetCol^\prime, 0) \right)
\end{align}
\end{theorem}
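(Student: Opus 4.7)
The plan is to start from the exact entry formula in Eqn.~\ref{eqn:exact_fisher_block} and apply the three approximations in sequence, so that the double sum over spatial indices collapses to a closed-form expression. First I would invoke Lemma~\ref{lem:approx_iad}, which uses \textbf{IAD} to separate the joint expectation into a product of activation and pre-activation-derivative second moments. Then I would apply \textbf{SH} to rewrite
\begin{equation*}
\expect[\activationsJS{\inputMapIdx}{\spatIdx+\spatOffset}\activationsJS{\inputMapIdx^\prime}{\spatIdx^\prime+\spatOffset^\prime}] = \autoActivations(\inputMapIdx,\inputMapIdx^\prime,(\spatIdx^\prime+\spatOffset^\prime)-(\spatIdx+\spatOffset)),\quad \expect[\gradPreActivationsIS{\outputMapIdx}{\spatIdx}\gradPreActivationsIS{\outputMapIdx^\prime}{\spatIdx^\prime}] = \autoGrad(\outputMapIdx,\outputMapIdx^\prime,\spatIdx^\prime-\spatIdx).
\end{equation*}
Next, \textbf{SUD} kills every term with $\spatIdx\neq\spatIdx^\prime$, leaving a single sum over $\spatIdx\in\spatIdxSet$. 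On the diagonal $\spatIdx=\spatIdx^\prime$ the activation autocovariance reduces to the $\spatIdx$-independent quantity $\autoActivations(\inputMapIdx,\inputMapIdx^\prime,\spatOffset^\prime-\spatOffset)$ and the derivative autocovariance to $\autoGrad(\outputMapIdx,\outputMapIdx^\prime,0)$, so both factors can be pulled out of the sum.

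What remains is a purely combinatorial counting problem: count the number of $\spatIdx\in\spatIdxSet$ such that both $\spatIdx+\spatOffset$ and $\spatIdx+\spatOffset^\prime$ lie in $\spatIdxSet$ (the activations vanish outside $\spatIdxSet$, so only such terms contribute). Since $\spatIdxSet=\{1,\dots,\numRows\}\times\{1,\dots,\numCols\}$ is a product grid and the constraints decouple across the two axes, I would handle each axis separately. For the row coordinate $\spatIdx_1$, the joint constraints $\spatIdx_1\in\{1,\dots,\numRows\}$, $\spatIdx_1+\spatOffsetRow\in\{1,\dots,\numRows\}$, $\spatIdx_1+\spatOffsetRow^\prime\in\{1,\dots,\numRows\}$ give the feasible interval $1-\min(0,\spatOffsetRow,\spatOffsetRow^\prime)\leq\spatIdx_1\leq\numRows-\max(0,\spatOffsetRow,\spatOffsetRow^\prime)$, whose length is exactly $\numRows-\max(\spatOffsetRow,\spatOffsetRow^\prime,0)+\min(\spatOffsetRow,\spatOffsetRow^\prime,0)$. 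Multiplying by the analogous column count yields $\boundaryFunction(\spatOffset,\spatOffset^\prime)$, producing the first identity.

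For the weight-bias cross term, the bias gradient is $\grad\biasI{\outputMapIdx^\prime}=\sum_{\spatIdx^\prime\in\spatIdxSet}\gradPreActivationsIS{\outputMapIdx^\prime}{\spatIdx^\prime}$ (no activation factor, no offset), so after \textbf{IAD} and \textbf{SUD} the single sum becomes $\sum_{\spatIdx}\expect[\activationsJS{\inputMapIdx}{\spatIdx+\spatOffset}]\,\autoGrad(\outputMapIdx,\outputMapIdx^\prime,0)$. By \textbf{SH} the activation mean equals $\meanActivations(\inputMapIdx)$ whenever $\spatIdx+\spatOffset\in\spatIdxSet$ and vanishes otherwise; counting $\spatIdx$ with $\spatIdx+\spatOffset\in\spatIdxSet$ gives $(\numRows-|\spatOffsetRow|)(\numCols-|\spatOffsetCol|)=\boundaryFunctionUni(\spatOffset)$. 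The bias-bias term is the simplest: with no activation factors at all, \textbf{SUD} collapses the double sum to $\sum_{\spatIdx\in\spatIdxSet}\autoGrad(\outputMapIdx,\outputMapIdx^\prime,0)=\numLocs\,\autoGrad(\outputMapIdx,\outputMapIdx^\prime,0)$.

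The only real obstacle is the counting step for $\boundaryFunction$; the rest is bookkeeping once the three approximations have been substituted in the correct order. I would double-check the $\max/\min$ expression by evaluating it on small cases such as $\spatOffset=\spatOffset^\prime=0$ (recovering $\numLocs$) and $\spatOffset=-\spatOffset^\prime$ (recovering the symmetric overlap count) to make sure signs of the offsets are treated correctly, since sign errors in the $\max(\cdot,0)$ terms are the easiest mistake here.
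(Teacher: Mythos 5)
Your proposal is correct and follows essentially the same route as the paper: apply the IAD factorization lemma, substitute the SH autocovariance forms with the indicator coming from activations vanishing outside $\spatIdxSet$, use SUD to collapse the double spatial sum to the diagonal, and count the valid locations to obtain $\boundaryFunction(\spatOffset,\spatOffset^\prime)$. Your explicit interval-length derivation of the count and the spelled-out bias terms are just details the paper treats as immediate or "analogous."
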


\begin{proof}
\begin{align}
\expect[\gradWeightsIJS{\dimIdxOne}{\dimIdxTwo}{\spatOffset} \gradWeightsIJS{\dimIdxOne^\prime}{\dimIdxTwo^\prime}{\spatOffset^\prime}] 
&= \sum_{\spatIdx \in \spatIdxSet} \sum_{\spatIdx^\prime \in \spatIdxSet} \expect \left[ \activationsJS{\inputMapIdx}{\spatIdx+\spatOffset} \activationsJS{\inputMapIdx^\prime}{\spatIdx^\prime+\spatOffset^\prime} \right] \expect \left[ \gradPreActivationsIS{\outputMapIdx}{\spatIdx} \gradPreActivationsIS{\outputMapIdx^\prime}{\spatIdx^\prime} \right] \label{eqn:thm_one_iag} \\
&= \sum_{\spatIdx \in \spatIdxSet} \sum_{\spatIdx^\prime \in \spatIdxSet} \autoActivations(\inputMapIdx, \inputMapIdx^\prime, \spatIdx^\prime + \spatOffset^\prime - \spatIdx - \spatOffset) \, \doubleIndicator{\spatIdx + \spatOffset \in \spatIdxSet}{\spatIdx^\prime + \spatOffset^\prime \in \spatIdxSet} \, \autoGrad(\outputMapIdx, \outputMapIdx^\prime, \spatIdx^\prime - \spatIdx) \label{eqn:thm_one_sh} \\
&= \sum_{\spatIdx \in \spatIdxSet} \autoActivations(\inputMapIdx, \inputMapIdx^\prime, \spatOffset^\prime - \spatOffset) \, \doubleIndicator{\spatIdx + \spatOffset \in \spatIdxSet}{\spatIdx + \spatOffset^\prime \in \spatIdxSet} \, \autoGrad(\outputMapIdx, \outputMapIdx^\prime, 0) \label{eqn:thm_one_sud} \\
&= \left| \left\{ \spatIdx \in \spatIdxSet : \spatIdx + \spatOffset \in \spatIdxSet, \spatIdx + \spatOffset^\prime \in \spatIdxSet \right\} \right| \, \autoActivations(\inputMapIdx, \inputMapIdx^\prime, \spatOffset^\prime - \spatOffset) \, \autoGrad(\outputMapIdx, \outputMapIdx^\prime, 0) \\
&= \boundaryFunction(\spatOffset, \spatOffset^\prime) \, \autoActivations(\inputMapIdx, \inputMapIdx^\prime, \spatOffset^\prime - \spatOffset) \, \autoGrad(\outputMapIdx, \outputMapIdx^\prime, 0) \label{eqn:thm_one_final}
\end{align}
Lines \ref{eqn:thm_one_iag}, \ref{eqn:thm_one_sh}, and \ref{eqn:thm_one_sud} use Lemma \ref{lem:approx_iad} and assumptions {\bf SH}, and {\bf SUD}, respectively. In Line \ref{eqn:thm_one_sh}, the indicator function (denoted $\indicator$) arises because the activations are defined to be zero outside the set of spatial locations. The remaining formulas can be derived analogously.
\end{proof}

\begin{theorem}
Under assumption {\bf SH},
\begin{align}
\convKronActL{\layerIdx} &= \expect \left[ \expansion{\activationsMatL{\layerIdx}}_\homog^\transpose \expansion{\activationsMatL{\layerIdx}}_\homog \right] \\
\convKronGradL{\layerIdx} &= \frac{1}{\numLocs} \expect \left[ \grad \preActivationsMatL{\layerIdx}^\transpose \grad \preActivationsMatL{\layerIdx} \right]
\end{align}
\end{theorem}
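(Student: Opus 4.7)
The plan is to verify both identities entry by entry, matching each block of the matrix products on the right-hand side against the explicit formulas defining $\convKronActL{\layerIdx}$ and $\convKronGradL{\layerIdx}$ in Eqn.~\ref{eqn:conv_kron_formulas}. Since $\expansion{\activationsMatL{\layerIdx}}_\homog$ is an $\numLocs \times (\numInputMaps \numOffsets + 1)$ matrix whose non-homogeneous columns are indexed by $(\inputMapIdx, \spatOffset)$ with $\expansion{\activationsMatL{\layerIdx}}_{\spatIdx, \inputMapIdx \numOffsets + \spatOffset} = \activationsJS{\inputMapIdx}{\spatIdx+\spatOffset}$ when $\spatIdx + \spatOffset \in \spatIdxSet$ and $0$ otherwise, and whose homogeneous column is all $1$'s, the products on the right are simply sums over the spatial index $\spatIdx$.

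First I would handle the ``main'' block $(\inputMapIdx \numOffsets + \spatOffset,\, \inputMapIdx^\prime \numOffsets + \spatOffset^\prime)$ of the activation factor. Writing out the inner product gives
\begin{equation*}
\expect \bigl[ \expansion{\activationsMatL{\layerIdx}}_\homog^\transpose \expansion{\activationsMatL{\layerIdx}}_\homog \bigr]_{\inputMapIdx \numOffsets + \spatOffset,\, \inputMapIdx^\prime \numOffsets + \spatOffset^\prime}
= \sum_{\spatIdx \in \spatIdxSet} \expect\bigl[ \activationsJS{\inputMapIdx}{\spatIdx+\spatOffset} \activationsJS{\inputMapIdx^\prime}{\spatIdx+\spatOffset^\prime} \bigr] \, \doubleIndicator{\spatIdx + \spatOffset \in \spatIdxSet}{\spatIdx + \spatOffset^\prime \in \spatIdxSet}.
\end{equation*}
Invoking \textbf{SH} replaces the expectation by $\autoActivations(\inputMapIdx, \inputMapIdx^\prime, \spatOffset^\prime - \spatOffset)$, which is independent of $\spatIdx$. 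The remaining sum is then just the count of shifts $\spatIdx$ for which both $\spatIdx + \spatOffset$ and $\spatIdx + \spatOffset^\prime$ lie in $\spatIdxSet$; this count is exactly $\boundaryFunction(\spatOffset, \spatOffset^\prime)$, reproducing the target formula. The boundary count is the only genuine calculation: the set $\{\spatIdx \in \spatIdxSet : \spatIdx + \spatOffset, \spatIdx + \spatOffset^\prime \in \spatIdxSet\}$ is a rectangle whose dimensions factor as $(\numRows - \max(\spatOffsetRow,\spatOffsetRow^\prime,0) + \min(\spatOffsetRow,\spatOffsetRow^\prime,0))$ times the analogous column factor, which I would verify by working out the four sign cases for $(\spatOffsetRow, \spatOffsetRow^\prime)$. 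The analogous computation for the $(\inputMapIdx \numOffsets + \spatOffset,\, 0)$ entry replaces one activation by the constant $1$, so the expectation becomes $\meanActivations(\inputMapIdx)$ by \textbf{SH} and the count becomes $\boundaryFunctionUni(\spatOffset)$. The $(0,0)$ entry is simply $\sum_{\spatIdx \in \spatIdxSet} 1 = \numLocs$.

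For the derivative factor, the argument is shorter: the matrix product gives
\begin{equation*}
\expect\bigl[ \grad \preActivationsMatL{\layerIdx}^\transpose \grad \preActivationsMatL{\layerIdx} \bigr]_{\outputMapIdx, \outputMapIdx^\prime}
= \sum_{\spatIdx \in \spatIdxSet} \expect\bigl[ \gradPreActivationsIS{\outputMapIdx}{\spatIdx} \gradPreActivationsIS{\outputMapIdx^\prime}{\spatIdx} \bigr]
= \numLocs\, \autoGrad(\outputMapIdx, \outputMapIdx^\prime, 0),
\end{equation*}
where \textbf{SH} (applied with $\spatIdx^\prime = \spatIdx$, so the spatial offset is $0$) makes the summand independent of $\spatIdx$. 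Dividing by $\numLocs$ yields the claim. Note that \textbf{SUD} is not needed here, since we are only summing the diagonal $\spatIdx^\prime = \spatIdx$ terms in the first place.

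The main ``obstacle,'' and it is a minor one, is bookkeeping: keeping the indicator function from \textbf{SH}'s boundary convention aligned with the definition of $\boundaryFunction$, and checking that the ordering of rows of $\expansion{\activationsMatL{\layerIdx}}_\homog$ really does match the row/column ordering used to index $\convKronActL{\layerIdx}$ in Eqn.~\ref{eqn:conv_kron_formulas}. Once these bookkeeping conventions are pinned down, the result drops out immediately and no further approximations (\textbf{IAD} or \textbf{SUD}) are invoked — only \textbf{SH}, as advertised.
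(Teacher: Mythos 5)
Your proposal is correct and follows essentially the same route as the paper's proof: an entry-wise expansion of the matrix products, applying \textbf{SH} to pull out $\autoActivations$ (resp.\ $\meanActivations$, $\autoGrad$), and recognizing the surviving indicator count as $\boundaryFunction(\spatOffset,\spatOffset^\prime)$, $\boundaryFunctionUni(\spatOffset)$, or $\numLocs$. Your observation that only \textbf{SH} is needed (the derivative factor involves only the $\spatIdx^\prime=\spatIdx$ terms) matches the paper exactly.
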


\begin{proof}
In this proof, all activations and pre-activations are taken to be in layer $\layerIdx$. The expected entries are given by:
\begin{align}
\expect \left[ \expansion{\activationsMatL{\layerIdx}}_\homog^\transpose \expansion{\activationsMatL{\layerIdx}}_\homog \right]_{\inputMapIdx \numOffsets + \spatOffset, \, \inputMapIdx^\prime \numOffsets + \spatOffset} &= \expect \left[ \sum_{\spatIdx \in \spatIdxSet} \activationsJS{\inputMapIdx}{\spatIdx+\spatOffset} \activationsJS{\inputMapIdx^\prime}{\spatIdx+\spatOffset^\prime} \right] \\
&= \sum_{\spatIdx \in \spatIdxSet} \expect \left[ \activationsJS{\inputMapIdx}{\spatIdx+\spatOffset} \activationsJS{\inputMapIdx^\prime}{\spatIdx+\spatOffset^\prime} \right] \\
&= \sum_{\spatIdx \in \spatIdxSet} \autoActivations(\inputMapIdx, \inputMapIdx^\prime, \spatOffset^\prime - \spatOffset) \, \doubleIndicator{\spatIdx + \spatOffset \in \spatIdxSet}{\spatIdx + \spatOffset^\prime \in \spatIdxSet} \label{eqn:thm_two_sh} \\
&= \left| \left\{ \spatIdx \in \spatIdxSet : \spatIdx + \spatOffset \in \spatIdxSet, \spatIdx + \spatOffset^\prime \in \spatIdxSet \right\} \right| \, \autoActivations(\inputMapIdx, \inputMapIdx^\prime, \spatOffset^\prime - \spatOffset) \\
&= \boundaryFunction(\spatOffset, \spatOffset^\prime) \, \autoActivations(\inputMapIdx, \inputMapIdx^\prime, \spatOffset^\prime - \spatOffset) \\
&= [\convKronActL{\layerIdx}]_{\inputMapIdx \numOffsets + \spatOffset, \, \inputMapIdx^\prime \numOffsets + \spatOffset^\prime}
\end{align}
{\bf SH} is used in Line \ref{eqn:thm_two_sh}. Similarly,
\begin{align}
\expect \left[ \expansion{\activationsMatL{\layerIdx}}_\homog^\transpose \expansion{\activationsMatL{\layerIdx}}_\homog \right]_{0, \,  \inputMapIdx \numOffsets + \spatOffset} &= \expect \left[ \sum_{\spatIdx \in \spatIdxSet} \activationsJS{\inputMapIdx}{\spatIdx+\spatOffset} \right] \\
&= \boundaryFunctionUni(\spatOffset) \, \meanActivations(\inputMapIdx) \\
&= [\convKronActL{\layerIdx}]_{0, \, \inputMapIdx \numOffsets + \spatOffset} \\
\left[ \expansion{\activationsMatL{\layerIdx}}_\homog^\transpose \expansion{\activationsMatL{\layerIdx}}_\homog \right]_{0, \, 0} &= \numLocs \\
&= [\convKronActL{\layerIdx}]_{0, \, 0} \\
\expect \left[ \grad \preActivationsMatL{\layerIdx}^\transpose \grad \preActivationsMatL{\layerIdx} \right]_{\outputMapIdx, \outputMapIdx^\prime} &= \expect \left[ \sum_{\spatIdx \in \spatIdxSet} \grad \preActivationsIS{\outputMapIdx}{\spatIdx} \grad \preActivationsIS{\outputMapIdx^\prime}{\spatIdx} \right] \\
&= \numLocs \, \autoGrad(\outputMapIdx, \outputMapIdx^\prime, 0) \\
&= \numLocs \, \left[ \convKronGradL{\layerIdx} \right]_{\outputMapIdx,\, \outputMapIdx^\prime}
\end{align}
\end{proof}

\subsection{Proofs for Section~\ref{sec:theory}}

{\bf Preliminaries and notation.} In discussing invariances, it will be convenient to add homogeneous coordinates to various matrices:
\begin{align}
[\activationsMatL{\layerIdx}]_\homog &\triangleq \begin{pmatrix} \onesVec & \activationsMatL{\layerIdx} \end{pmatrix} \\
[\preActivationsMatL{\layerIdx}]_\homog &\triangleq \begin{pmatrix} \onesVec & \preActivationsMatL{\layerIdx} \end{pmatrix} \\
[\weightsBiasesL{\layerIdx}]_\homog &\triangleq \begin{pmatrix} 1 &  \\ \biasVecL{\layerIdx} & \weightsL{\layerIdx} \end{pmatrix}
\end{align}
We also define the activation function $\nonlinearity$ to ignore the homogeneous column, so that 
\begin{equation}
[\activationsMatL{\layerIdx}]_\homog = \nonlinearity([\preActivationsMatL{\layerIdx}]_\homog) = \nonlinearity(\expansion{\activationsMatL{\layerIdx-1}} [\weightsBiasesL{\layerIdx}]_\homog). \label{eqn:activation_function_homog}
\end{equation}

Using the homogeneous coordinate notation, we can write the effect of the affine transformations on the pre-activations and activations:
\begin{align}
[\preActivationsMatTransL{\layerIdx} \transMatPreL{\layerIdx} + \onesVec \transOffsetPreL{\layerIdx}^\transpose ]_\homog &= [\preActivationsMatTransL{\layerIdx}]_\homog [\transMatPreL{\layerIdx}]_\homog \nonumber \\
[\activationsMatL{\layerIdx} \transMatPostL{\layerIdx} + \onesVec \transOffsetPostL{\layerIdx}^\transpose]_\homog &= [\activationsMatL{\layerIdx}]_\homog [\transMatPostL{\layerIdx}]_\homog, \label{eqn:trans_activations_homog}
\end{align}
where
\begin{align}
[\transMatPreL{\layerIdx}]_\homog &\triangleq
\begin{pmatrix}
1 & \transOffsetPreL{\layerIdx}^\transpose \\
& \transMatPreL{\layerIdx}
\end{pmatrix}
\label{eqn:trans_mat_pre_homog} \\
[\transMatPostL{\layerIdx}]_\homog &\triangleq
\begin{pmatrix}
1 & \transOffsetPostL{\layerIdx}^\transpose \\
& \transMatPostL{\layerIdx}
\end{pmatrix}.
\label{eqn:trans_mat_post_homog}
\end{align}
The inverse transformations are represented as
\begin{align}
[\transMatPreL{\layerIdx}]_\homog^{-1} &\triangleq
\begin{pmatrix}
1 & -\transOffsetPreL{\layerIdx}^\transpose \transMatPreL{\layerIdx}^{-1} \\
& \transMatPreL{\layerIdx}^{-1}
\end{pmatrix}
\label{eqn:trans_mat_pre_homog_inv} \\
[\transMatPostL{\layerIdx}]_\homog^{-1} &\triangleq
\begin{pmatrix}
1 & -\transOffsetPostL{\layerIdx}^\transpose \transMatPostL{\layerIdx}^{-1} \\
& \transMatPostL{\layerIdx}^{-1}
\end{pmatrix}.
\label{eqn:trans_mat_post_homog_inv}
\end{align}

We can also determine the effect of the affine transformation on the \emph{expanded} activations:
\begin{equation}
\expansion{\activationsMatL{\layerIdx} \transMatPostL{\layerIdx} + \onesVec \transOffsetPostL{\layerIdx}^\transpose}_\homog = \expansion{\activationsMatL{\layerIdx}}_\homog \expansion{\transMatPostL{\layerIdx}}_\homog, \label{eqn:transformation_expanded}
\end{equation}
where
\begin{equation}
\expansion{\transMatPostL{\layerIdx}}_\homog \triangleq 
\begin{pmatrix}
1 & \transOffsetPostL{\layerIdx}^\transpose \otimes \onesVec^\transpose \\
& \transMatPostL{\layerIdx} \otimes \ident
\end{pmatrix},
\label{eqn:expand_trans_mat_post}
\end{equation}
with inverse
\begin{equation}
\expansion{\transMatPostL{\layerIdx}}_\homog^{-1} = \begin{pmatrix}
1 & -\transOffsetPostL{\layerIdx}^\transpose \transMatPostL{\layerIdx}^{-1} \otimes \onesVec^\transpose \\
& \transMatPostL{\layerIdx}^{-1} \otimes \ident
\end{pmatrix}.
\end{equation}
Note that $\expansion{\transMatPostL{\layerIdx}}_\homog$ is simply a suggestive notation, rather than an application of the expansion operator $\expansion{\cdot}$.

\begin{lemma}
Let $\network$, $\paramVec$, $\{\nonlinearityL{\layerIdx}\}_{\layerIdx=0}^\numLayers$, and $\{\nonlinearityTransL{\layerIdx}\}_{\layerIdx=0}^\numLayers$ be given as in Theorem \ref{thm:invariance}. The network $\networkTrans$ with activations functions $\{\nonlinearityTransL{\layerIdx}\}_{\layerIdx=0}^\numLayers$ and parameters defined by 
\begin{equation}
[\weightsBiasesTransL{\layerIdx}]_\homog \triangleq [\transMatPreL{\layerIdx}]_\homog^{-\transpose} [\weightsBiasesL{\layerIdx}]_\homog \expansion{\transMatPostL{\layerIdx-1}}_\homog^{-\transpose}, \label{eqn:weights_biases_homog_trans}
\end{equation}
compute the same function as $\network$. 
\label{lem:param_transformation}
\end{lemma}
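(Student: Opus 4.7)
The plan is to prove the lemma by induction on the layer index $\layerIdx$, carrying the invariant
\[
[\activationsMatTransL{\layerIdx}]_\homog = [\activationsMatL{\layerIdx}]_\homog [\transMatPostL{\layerIdx}]_\homog
\]
from one layer to the next, so that the transformed network's activations differ from the original ones only by the post-nonlinearity affine transformation of that layer. Since the hypothesis specifies $\transMatPostL{\numLayers} = \ident$ and $\transOffsetPostL{\numLayers} = \zeroMat$, the invariant at $\layerIdx = \numLayers$ immediately gives $\activationsMatTransL{\numLayers} = \activationsMatL{\numLayers}$, which is precisely the statement that the two networks compute the same function. The base case $\layerIdx = 0$ is essentially a definition: $\nonlinearityTransL{0}$ outputs $\activationsMatL{0} \transMatPostL{0} + \onesVec \transOffsetPostL{0}^\transpose$, which in homogeneous form is the required identity by Eqn.~\ref{eqn:trans_activations_homog}.

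For the inductive step, I would assume the invariant at layer $\layerIdx - 1$ and push it through the forward computation. First, apply the expansion identity of Eqn.~\ref{eqn:transformation_expanded} to get
\[
\expansion{\activationsMatTransL{\layerIdx-1}}_\homog = \expansion{\activationsMatL{\layerIdx-1}}_\homog \, \expansion{\transMatPostL{\layerIdx-1}}_\homog,
\]
then form the transformed pre-activations via Eqn.~\ref{eqn:conv_up_mat_homog} applied to $\networkTrans$: $[\preActivationsMatTransL{\layerIdx}]_\homog = \expansion{\activationsMatTransL{\layerIdx-1}}_\homog [\weightsBiasesTransL{\layerIdx}]_\homog^\transpose$. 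Plugging the definition of $\weightsBiasesTransL{\layerIdx}$ from Eqn.~\ref{eqn:weights_biases_homog_trans} and transposing produces an $\expansion{\transMatPostL{\layerIdx-1}}_\homog^{-1}$ that exactly cancels the $\expansion{\transMatPostL{\layerIdx-1}}_\homog$ on the left, yielding $[\preActivationsMatTransL{\layerIdx}]_\homog = [\preActivationsMatL{\layerIdx}]_\homog [\transMatPreL{\layerIdx}]_\homog^{-1}$. The pre-nonlinearity affine transformation inside $\nonlinearityTransL{\layerIdx}$ then multiplies by $[\transMatPreL{\layerIdx}]_\homog$, erasing the $[\transMatPreL{\layerIdx}]_\homog^{-1}$ factor and recovering $[\preActivationsMatL{\layerIdx}]_\homog$; hence $\nonlinearityL{\layerIdx}$ is applied to the \emph{original} pre-activations, and the subsequent post-transformation gives $[\activationsMatL{\layerIdx}]_\homog [\transMatPostL{\layerIdx}]_\homog$, closing the induction.

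The main obstacle is not the inductive logic itself but the block algebra in homogeneous coordinates: one has to verify that the specific definitions \eqref{eqn:trans_mat_pre_homog}, \eqref{eqn:trans_mat_post_homog}, and especially \eqref{eqn:expand_trans_mat_post} interact correctly with the bias column prepended in $[\weightsBiasesL{\layerIdx}]_\homog$, and that $\expansion{\transMatPostL{\layerIdx-1}}_\homog$ is precisely the $(\numInputMaps\numOffsets{+}1) \times (\numInputMaps\numOffsets{+}1)$ matrix whose inverse exists and composes as claimed. The non-trivial piece is the offset: under expansion, a constant bias vector becomes a bias replicated across spatial offsets, which is exactly what the Kronecker factor $\transOffsetPostL{\layerIdx-1}^\transpose \otimes \onesVec^\transpose$ encodes in the top row of $\expansion{\transMatPostL{\layerIdx-1}}_\homog$. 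Once this identity is secured, every other step reduces to Kronecker and matrix-inverse manipulations that one can verify by checking the homogeneous first row and the remaining block separately.
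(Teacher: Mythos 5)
Your proposal is correct and follows essentially the same route as the paper's proof: induction on the layer index with the invariant $[\activationsMatTransL{\layerIdx}]_\homog = [\activationsMatL{\layerIdx}]_\homog [\transMatPostL{\layerIdx}]_\homog$, using Eqn.~\ref{eqn:transformation_expanded} to propagate the invariant through the expansion operator and then cancelling $\expansion{\transMatPostL{\layerIdx-1}}_\homog$ and $[\transMatPreL{\layerIdx}]_\homog$ against the factors built into $[\weightsBiasesTransL{\layerIdx}]_\homog$. The only cosmetic difference is that you isolate $[\preActivationsMatTransL{\layerIdx}]_\homog = [\preActivationsMatL{\layerIdx}]_\homog [\transMatPreL{\layerIdx}]_\homog^{-1}$ as an intermediate identity, while the paper performs the same cancellation inside the argument of $\nonlinearityL{\layerIdx}$ in one chain of equalities.
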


\noindent {\bf Remark.} The definition of $\nonlinearityTransL{\layerIdx}$ (Eqn.~\ref{eqn:affine_transformation}) can be written in homogeneous coordinates as
\begin{equation}
[\activationsMatTransL{\layerIdx}]_\homog = \nonlinearityTransL{\layerIdx}([\preActivationsMatTransL{\layerIdx}]_\homog) = \nonlinearityL{\layerIdx}([\preActivationsMatTransL{\layerIdx}]_\homog [\transMatPreL{\layerIdx}]_\homog) [\transMatPostL{\layerIdx}]_\homog.
\end{equation}
Eqn.~\ref{eqn:weights_biases_homog_trans} can be expressed equivalently without homogeneous coordinates as
\begin{equation}
\weightsBiasesTransL{\layerIdx} \triangleq \transMatPreL{\layerIdx}^{-\transpose} \left( \weightsBiasesL{\layerIdx} - \transOffsetPreL{\layerIdx} \indicatorVec^\transpose \right) \expansion{\transMatPostL{\layerIdx-1}}_\homog^{-\transpose}, \label{eqn:weights_biases_trans}
\end{equation}
where $\indicatorVec = (1\ 0\ \cdots\ 0)^\transpose$.

\begin{proof}
We will show inductively the following relationship between the activations in each layer of the two networks:
\begin{align}
[\activationsMatTransL{\layerIdx}]_\homog = [\activationsMatL{\layerIdx}]_\homog [\transMatPostL{\layerIdx}]_\homog. \label{eqn:inductive_hypothesis}
\end{align}
(By our assumption that the top layer inputs are not transformed, i.e. $[\transMatPostL{\numLayers}]_\homog = \ident$, this would imply that $[\activationsMatTransL{\numLayers}]_\homog = [\activationsMatL{\numLayers}]_\homog$, and hence that the networks compute the same function.) For the first layer, Eqn.~\ref{eqn:inductive_hypothesis} is true by definition. For the inductive step, assume Eqn.~\ref{eqn:inductive_hypothesis} holds for layer $\layerIdx - 1$. From Eqn~\ref{eqn:transformation_expanded}, this is equivalent to
\begin{equation}
\expansion{\activationsMatTransL{\layerIdx-1}}_\homog = \expansion{\activationsMatL{\layerIdx-1}}_\homog \expansion{\transMatPostL{\layerIdx-1}}_\homog. \label{eqn:inductive_hypothesis_expanded}
\end{equation}
We then derive the activations in the following layer:
\begin{align}
[\activationsMatTransL{\layerIdx}]_\homog &= \nonlinearityTransL{\layerIdx} \left( [\preActivationsMatTransL{\layerIdx}]_\homog \right) \\
&= \nonlinearityL{\layerIdx} \left( [\preActivationsMatTransL{\layerIdx}]_\homog \, [\transMatPreL{\layerIdx}]_\homog \right) \, [\transMatPostL{\layerIdx}]_\homog \label{eqn:step_def_phi_trans} \\
&= \nonlinearityL{\layerIdx} \left( \expansion{\activationsMatTransL{\layerIdx-1}}_\homog \, [\weightsBiasesTransL{\layerIdx}]_\homog^\transpose \, [\transMatPreL{\layerIdx}]_\homog \right) \, [\transMatPostL{\layerIdx}]_\homog \label{eqn:step_def_pre_activations} \\
&= \nonlinearityL{\layerIdx} \left( \expansion{\activationsMatL{\layerIdx-1}}_\homog \, \expansion{\transMatPostL{\layerIdx-1}}_\homog \, [\weightsBiasesTransL{\layerIdx}]_\homog^\transpose \, [\transMatPreL{\layerIdx}]_\homog \right) \, [\transMatPostL{\layerIdx}]_\homog \label{eqn:step_inductive_hypothesis} \\
&= \nonlinearityL{\layerIdx} \left( \expansion{\activationsMatL{\layerIdx-1}}_\homog \, \expansion{\transMatPostL{\layerIdx-1}}_\homog \, \expansion{\transMatPostL{\layerIdx-1}}_\homog^{-1} \, [\weightsBiasesL{\layerIdx}]_\homog^\transpose \,[\transMatPreL{\layerIdx}]_\homog^{-1} \, [\transMatPreL{\layerIdx}]_\homog \right) \, [\transMatPostL{\layerIdx}]_\homog \label{eqn:step_def_weights_trans} \\
&= \nonlinearityL{\layerIdx} \left( \expansion{\activationsMatL{\layerIdx-1}}_\homog \, [\weightsBiasesL{\layerIdx}]_\homog^\transpose \right) \, [\transMatPostL{\layerIdx}]_\homog \\
&= [\activationsMatL{\layerIdx}]_\homog \, [\transMatPostL{\layerIdx}]_\homog \label{eqn:step_layer_computation}
\end{align}
Lines \ref{eqn:step_def_pre_activations} and \ref{eqn:step_layer_computation} are from Eqn.~\ref{eqn:activation_function_homog}. This proves the inductive hypothesis for layer $\layerIdx$, so we have shown that both networks compute the same function.
\end{proof}

\begin{lemma}
Suppose the parameters are transformed according to Lemma \ref{lem:param_transformation}, and the parameters are updated according to 
\begin{equation}
[\weightsBiasesTransL{\layerIdx}]^{(\iterCount+1)} \gets [\weightsBiasesTransL{\layerIdx}]^{(\iterCount)} - \learningRate \leftPreconditionerTransL{\layerIdx} ( \nabla_{\weightsBiasesTransL{\layerIdx}} \objective ) \rightPreconditionerTransL{\layerIdx},
\end{equation}
for matrices $\leftPreconditionerL{\layerIdx}$ and $\rightPreconditionerL{\layerIdx}$. This is equivalent to applying the following update to the original network:
\begin{equation}
[\weightsBiasesL{\layerIdx}]^{(\iterCount+1)} \gets [\weightsBiasesL{\layerIdx}]^{(\iterCount+1)} - \learningRate  \leftPreconditionerL{\layerIdx} ( \nabla_{\weightsBiasesL{\layerIdx}} \objective ) \rightPreconditionerL{\layerIdx},
\end{equation}
with
\begin{align}
\leftPreconditionerL{\layerIdx} &= \transMatPreL{\layerIdx}^\transpose \leftPreconditionerTransL{\layerIdx} \transMatPreL{\layerIdx} \\
\rightPreconditionerL{\layerIdx} &= \expansion{\transMatPostL{\layerIdx-1}}_\homog \rightPreconditionerTransL{\layerIdx} \expansion{\transMatPostL{\layerIdx-1}}_\homog^\transpose.
\end{align}

\label{lem:transformed_update}
\end{lemma}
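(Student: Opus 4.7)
The plan is to use the matrix chain rule to translate the prescribed update on $\weightsBiasesTransL{\layerIdx}$ back into an equivalent update on $\weightsBiasesL{\layerIdx}$, and then read off the induced preconditioners.

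First, I would recast the parameter transformation from Lemma~\ref{lem:param_transformation} in non-homogeneous form. Using the block expressions for $[\transMatPreL{\layerIdx}]_\homog^{-\transpose}$ and $\expansion{\transMatPostL{\layerIdx-1}}_\homog^{-\transpose}$ given by Eqns.~\ref{eqn:trans_mat_pre_homog_inv} and \ref{eqn:expand_trans_mat_post}, together with the fact that $[\weightsBiasesL{\layerIdx}]_\homog$ has fixed top row $\indicatorVec^\transpose$, I would verify that Eqn.~\ref{eqn:weights_biases_homog_trans} reduces to Eqn.~\ref{eqn:weights_biases_trans}. In particular, the map $\weightsBiasesL{\layerIdx} \mapsto \weightsBiasesTransL{\layerIdx}$ is affine with linear part
\[
\delta \;\mapsto\; \transMatPreL{\layerIdx}^{-\transpose}\, \delta\, \expansion{\transMatPostL{\layerIdx-1}}_\homog^{-\transpose},
\]
and the offset contributed by $\transOffsetPreL{\layerIdx} \indicatorVec^\transpose$ is parameter-independent. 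Since a constant is annihilated by both $\nabla$ and by any finite-difference operator $\Delta$, only the linear part will enter the remaining calculation.

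Next, I would invoke the standard matrix chain rule: if $f(W) = g(UWV)$, then $\nabla_W f = U^\transpose (\nabla_{UWV} g)\, V^\transpose$. Specializing to $U = \transMatPreL{\layerIdx}^{-\transpose}$ and $V = \expansion{\transMatPostL{\layerIdx-1}}_\homog^{-\transpose}$ and inverting gives
\[
\nabla_{\weightsBiasesTransL{\layerIdx}} \objective \;=\; \transMatPreL{\layerIdx}\, (\nabla_{\weightsBiasesL{\layerIdx}} \objective)\, \expansion{\transMatPostL{\layerIdx-1}}_\homog.
\]
Lemma~\ref{lem:param_transformation} guarantees that $\network$ and $\networkTrans$ compute the same function at matched parameter settings, so $\objective$ agrees at those points and the chain rule is valid. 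Simultaneously, differentiating the affine relationship pushes a perturbation $\Delta\weightsBiasesL{\layerIdx}$ of the original parameters to the matched perturbation $\Delta\weightsBiasesTransL{\layerIdx} = \transMatPreL{\layerIdx}^{-\transpose}\, \Delta\weightsBiasesL{\layerIdx}\, \expansion{\transMatPostL{\layerIdx-1}}_\homog^{-\transpose}$, which inverts to $\Delta\weightsBiasesL{\layerIdx} = \transMatPreL{\layerIdx}^\transpose\, \Delta\weightsBiasesTransL{\layerIdx}\, \expansion{\transMatPostL{\layerIdx-1}}_\homog^\transpose$.

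Finally, I would substitute the prescribed update $\Delta\weightsBiasesTransL{\layerIdx} = -\learningRate\, \leftPreconditionerTransL{\layerIdx}\, (\nabla_{\weightsBiasesTransL{\layerIdx}} \objective)\, \rightPreconditionerTransL{\layerIdx}$ into the pushforward above, then replace $\nabla_{\weightsBiasesTransL{\layerIdx}} \objective$ using the gradient identity; the product regroups as
\[
\Delta\weightsBiasesL{\layerIdx} \;=\; -\learningRate\, (\transMatPreL{\layerIdx}^\transpose \leftPreconditionerTransL{\layerIdx} \transMatPreL{\layerIdx})\, (\nabla_{\weightsBiasesL{\layerIdx}} \objective)\, (\expansion{\transMatPostL{\layerIdx-1}}_\homog \rightPreconditionerTransL{\layerIdx} \expansion{\transMatPostL{\layerIdx-1}}_\homog^\transpose),
\]
which is precisely the claimed form and identifies $\leftPreconditionerL{\layerIdx}$ and $\rightPreconditionerL{\layerIdx}$ with the stated expressions. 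The main obstacle will be purely bookkeeping: verifying that the action of $[\transMatPreL{\layerIdx}]_\homog^{-\transpose}$ and $\expansion{\transMatPostL{\layerIdx-1}}_\homog^{-\transpose}$ on $[\weightsBiasesL{\layerIdx}]_\homog$ preserves the top row $\indicatorVec^\transpose$ and descends to the clean affine map above on the non-homogeneous block. Once that reduction is in hand, the chain rule step and perturbation pushforward are both one-line matrix manipulations.
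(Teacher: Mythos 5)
Your proposal is correct, and it is essentially a self-contained derivation of what the paper itself does not spell out: the paper's ``proof'' of Lemma~\ref{lem:transformed_update} is a one-line deferral to Lemma~5 of \citet{kfac}, whose underlying argument is the same chain-rule/pushforward reasoning you give. Your two key identities check out: since the map $\weightsBiasesL{\layerIdx} \mapsto \weightsBiasesTransL{\layerIdx}$ is affine with linear part $\delta \mapsto \transMatPreL{\layerIdx}^{-\transpose}\,\delta\,\expansion{\transMatPostL{\layerIdx-1}}_\homog^{-\transpose}$ (the offset $-\transMatPreL{\layerIdx}^{-\transpose}\transOffsetPreL{\layerIdx}\indicatorVec^\transpose\expansion{\transMatPostL{\layerIdx-1}}_\homog^{-\transpose}$ being parameter-independent), the identity $\objective$-agreement of Lemma~\ref{lem:param_transformation} holds at \emph{all} matched parameter values, so differentiating it gives $\nabla_{\weightsBiasesTransL{\layerIdx}}\objective = \transMatPreL{\layerIdx}(\nabla_{\weightsBiasesL{\layerIdx}}\objective)\expansion{\transMatPostL{\layerIdx-1}}_\homog$, and inverting the linear part gives $\Delta\weightsBiasesL{\layerIdx} = \transMatPreL{\layerIdx}^\transpose\,\Delta\weightsBiasesTransL{\layerIdx}\,\expansion{\transMatPostL{\layerIdx-1}}_\homog^\transpose$; substituting and regrouping yields exactly $\leftPreconditionerL{\layerIdx} = \transMatPreL{\layerIdx}^\transpose \leftPreconditionerTransL{\layerIdx} \transMatPreL{\layerIdx}$ and $\rightPreconditionerL{\layerIdx} = \expansion{\transMatPostL{\layerIdx-1}}_\homog \rightPreconditionerTransL{\layerIdx} \expansion{\transMatPostL{\layerIdx-1}}_\homog^\transpose$, and the post-update parameters remain matched under the Lemma~\ref{lem:param_transformation} correspondence, which is precisely the claimed equivalence of the resulting networks. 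The ``bookkeeping'' step you flag (that $[\transMatPreL{\layerIdx}]_\homog^{-\transpose}$ and $\expansion{\transMatPostL{\layerIdx-1}}_\homog^{-\transpose}$ preserve the top row $\indicatorVec^\transpose$ and descend to the stated affine map) is already recorded in the paper as Eqn.~\ref{eqn:weights_biases_trans} in the Remark following Lemma~\ref{lem:param_transformation}, so you could simply cite it rather than re-verify it. What your route buys is a proof that stands on its own within this paper; what the paper's citation buys is brevity and the extra generality of the K-FAC lemma (which covers arbitrary affine reparameterizations of this form once and for all).
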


\begin{proof}
This is a special case of Lemma 5 from \citet{kfac}.
\end{proof}

\begin{theorem}
Let $\network$ be a network with parameter vector $\paramVec$ and activation functions $\{\nonlinearityL{\layerIdx}\}_{\layerIdx=0}^\numLayers$. Given activation functions $\{\nonlinearityTransL{\layerIdx}\}_{\layerIdx=0}^\numLayers$ defined as in Eqn.~\ref{eqn:affine_transformation}, there exists a parameter vector $\paramVecTrans$ such that a network $\networkTrans$ with parameters $\paramVecTrans$ and activation functions $\{\nonlinearityTransL{\layerIdx}\}_{\layerIdx=0}^\numLayers$ computes the same function as $\network$. The KFC updates on $\network$ and $\networkTrans$ are equivalent, in that the resulting networks compute the same function.
\end{theorem}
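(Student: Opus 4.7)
The existence of $\paramVecTrans$ is handed to us by Lemma \ref{lem:param_transformation}: I take $\paramVecTrans$ to be the vectorization of the $\{[\weightsBiasesTransL{\layerIdx}]_\homog\}$ specified by Eqn.~\ref{eqn:weights_biases_homog_trans}. The substantive claim is that one step of KFC starting from $\paramVecTrans$ yields a network computing the same function as one step of KFC starting from $\paramVec$. Since Lemma \ref{lem:param_transformation} only requires that the two parameter vectors are related by Eqn.~\ref{eqn:weights_biases_homog_trans}, it suffices to verify that the KFC updates preserve that relation.

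For this I plan to invoke Lemma \ref{lem:transformed_update}, which pulls back a generic layerwise preconditioned update $-\learningRate \leftPreconditionerTransL{\layerIdx}(\nabla_{\weightsBiasesTransL{\layerIdx}} \objective) \rightPreconditionerTransL{\layerIdx}$ on $\networkTrans$ to the update $-\learningRate \leftPreconditionerL{\layerIdx}(\nabla_{\weightsBiasesL{\layerIdx}} \objective) \rightPreconditionerL{\layerIdx}$ on $\network$ with $\leftPreconditionerL{\layerIdx} = \transMatPreL{\layerIdx}^\transpose \leftPreconditionerTransL{\layerIdx} \transMatPreL{\layerIdx}$ and $\rightPreconditionerL{\layerIdx} = \expansion{\transMatPostL{\layerIdx-1}}_\homog \rightPreconditionerTransL{\layerIdx} \expansion{\transMatPostL{\layerIdx-1}}_\homog^\transpose$. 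Feeding in the KFC preconditioners $\leftPreconditionerTransL{\layerIdx} = \convKronGradTransL{\layerIdx}^{-1}$ and $\rightPreconditionerTransL{\layerIdx} = \convKronActTransL{\layerIdx-1}^{-1}$ of the transformed network, I want the induced $\leftPreconditionerL{\layerIdx}$ and $\rightPreconditionerL{\layerIdx}$ to coincide with the KFC preconditioners $\convKronGradL{\layerIdx}^{-1}$ and $\convKronActL{\layerIdx-1}^{-1}$ of the original network.

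The key calculation is to track how the two Kronecker factors transform. Using Eqn.~\ref{eqn:transformation_expanded} to write $\expansion{\activationsMatTransL{\layerIdx-1}}_\homog = \expansion{\activationsMatL{\layerIdx-1}}_\homog \expansion{\transMatPostL{\layerIdx-1}}_\homog$ and substituting into the preceding theorem's expression for $\convKronActL{\layerIdx-1}$, I expect
\begin{equation}
\convKronActTransL{\layerIdx-1} = \expansion{\transMatPostL{\layerIdx-1}}_\homog^\transpose \convKronActL{\layerIdx-1} \expansion{\transMatPostL{\layerIdx-1}}_\homog.
\end{equation}
For the derivative factor, backpropagating through the linear map $\preActivationsMatL{\layerIdx} = \preActivationsMatTransL{\layerIdx} \transMatPreL{\layerIdx} + \onesVec \transOffsetPreL{\layerIdx}^\transpose$ yields $\grad \preActivationsMatTransL{\layerIdx} = \grad \preActivationsMatL{\layerIdx} \, \transMatPreL{\layerIdx}^\transpose$, and hence
\begin{equation}
\convKronGradTransL{\layerIdx} = \transMatPreL{\layerIdx} \convKronGradL{\layerIdx} \transMatPreL{\layerIdx}^\transpose.
\end{equation}
Inverting both and plugging into the Lemma \ref{lem:transformed_update} formulas, the $\transMatPreL{\layerIdx}$ and $\expansion{\transMatPostL{\layerIdx-1}}_\homog$ factors telescope, leaving exactly $\leftPreconditionerL{\layerIdx} = \convKronGradL{\layerIdx}^{-1}$ and $\rightPreconditionerL{\layerIdx} = \convKronActL{\layerIdx-1}^{-1}$. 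Thus the KFC update on $\networkTrans$ pulls back to the KFC update on $\network$, which combined with Lemma \ref{lem:param_transformation} implies the two updated networks compute the same function.

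The main obstacle I anticipate is the damping term in Eqn.~\ref{eqn:factored_damping_conv}: the identity $\ident$ added inside each factor is not preserved by conjugation with $\transMatPreL{\layerIdx}$ or $\expansion{\transMatPostL{\layerIdx-1}}_\homog$, so the clean telescoping above only works for the undamped preconditioner. My plan is to state and prove the invariance at the level of $\fisherMatApprox^{-1} \nabla \objective$ and then note, following the analogous caveat for K-FAC in \citet{kfac}, that the result extends to the damped direction only up to a corresponding reparameterization of $\lambdaParam$, $\gammaParam$, and $\piParamL{\layerIdx}$. Everything else --- verifying the two transformation identities above and pushing the parameter relation through one update step --- amounts to bookkeeping with the homogeneous-coordinate formulas from the preliminaries.
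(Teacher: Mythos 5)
Your proposal is correct and follows essentially the same route as the paper's proof: existence via Lemma \ref{lem:param_transformation}, the two conjugation identities $\convKronActTransL{\layerIdx} = \expansion{\transMatPostL{\layerIdx}}_\homog^\transpose \convKronActL{\layerIdx} \expansion{\transMatPostL{\layerIdx}}_\homog$ and $\convKronGradTransL{\layerIdx} = \transMatPreL{\layerIdx} \convKronGradL{\layerIdx} \transMatPreL{\layerIdx}^\transpose$, and then Lemma \ref{lem:transformed_update} with the KFC preconditioners so that the factors telescope. Your caveat about damping matches the paper's own treatment, which proves the equivalence for the update with momentum, clipping, and damping ignored.
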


\begin{proof}
Lemma \ref{lem:param_transformation} gives the desired $\paramVecTrans$. We now prove equivalence of the KFC updates. The Kronecker factors for $\networkTrans$ are given by:
\begin{align}
\convKronActTransL{\layerIdx} &= \expect \left[ \expansion{\activationsMatTransL{\layerIdx}}_\homog^\transpose \expansion{\activationsMatTransL{\layerIdx}}_\homog \right] \\
&= \expect \left[ \expansion{\transMatPostL{\layerIdx}}_\homog^\transpose \expansion{\activationsMatL{\layerIdx}}_\homog^\transpose \expansion{\activationsMatL{\layerIdx}}_\homog \expansion{\transMatPostL{\layerIdx}}_\homog \right] \\
&= \expansion{\transMatPostL{\layerIdx}}_\homog^\transpose \expect \left[ \expansion{\activationsMatL{\layerIdx}}_\homog^\transpose \expansion{\activationsMatL{\layerIdx}}_\homog \right] \expansion{\transMatPostL{\layerIdx}}_\homog \\
&= \expansion{\transMatPostL{\layerIdx}}_\homog^\transpose \convKronActL{\layerIdx} \expansion{\transMatPostL{\layerIdx}}_\homog \\
\convKronGradTransL{\layerIdx} &= \frac{1}{\numLocs} \expect \left[ (\grad \preActivationsMatTransL{\layerIdx})^\transpose \grad \preActivationsMatTransL{\layerIdx} \right] \\
&= \frac{1}{\numLocs} \expect \left[ \transMatPreL{\layerIdx} (\grad \preActivationsMatTransL{\layerIdx})^\transpose \grad \preActivationsMatTransL{\layerIdx} \transMatPreL{\layerIdx}^\transpose \right] \\
&= \frac{1}{\numLocs} \transMatPreL{\layerIdx} \expect \left[ (\grad \preActivationsMatTransL{\layerIdx})^\transpose \grad \preActivationsMatTransL{\layerIdx} \right] \transMatPreL{\layerIdx}^\transpose \\
&= \transMatPreL{\layerIdx} \convKronGradL{\layerIdx} \transMatPreL{\layerIdx}^\transpose
\end{align}

The approximate natural gradient update, ignoring momentum, clipping, and damping, is given by $\paramVec^{(\iterCount+1)} \gets \paramVec^{(\iterCount)} - \learningRate \fisherMatApprox^{-1} \nabla_\paramVec \objective.$ For each layer of $\networkTrans$,
\begin{equation}
[\weightsBiasesTransL{\layerIdx}]^{(\iterCount+1)} \gets [\weightsBiasesTransL{\layerIdx}]^{(\iterCount)} - \learningRate ( \convKronGradTransL{\layerIdx} )^{-1} ( \nabla_{\weightsBiasesTransL{\layerIdx}} \objective ) ( \convKronActTransL{\layerIdx-1} )^{-1} \\
\end{equation}
We apply Lemma~\ref{lem:transformed_update} with $\leftPreconditionerTransL{\layerIdx} = ( \convKronGradTransL{\layerIdx} )^{-1}$ and $\rightPreconditionerTransL{\layerIdx} = ( \convKronActTransL{\layerIdx-1} )^{-1}$. This gives us 
\begin{align}
\leftPreconditionerL{\layerIdx} &= \transMatPreL{\layerIdx}^\transpose ( \convKronGradTransL{\layerIdx} )^{-1} \transMatPreL{\layerIdx} \\
&= \convKronGradL{\layerIdx}^{-1} \\
\rightPreconditionerL{\layerIdx} &= \expansion{\transMatPostL{\layerIdx-1}}_\homog ( \convKronActTransL{\layerIdx-1} )^{-1} \expansion{\transMatPostL{\layerIdx-1}}_\homog^\transpose \\
&= \convKronActL{\layerIdx-1}^{-1},
\end{align}
with the corresponding update
\begin{equation}
[\weightsBiasesL{\layerIdx}]^{(\iterCount+1)} \gets [\weightsBiasesL{\layerIdx}]^{(\iterCount)} - \learningRate \convKronGradL{\layerIdx}^{-1} (\nabla_{\weightsBiasesL{\layerIdx}} \objective) \convKronActL{\layerIdx-1}^{-1}.
\end{equation}
But this is the same as the KFC update for the original network. Therefore, the two updates are equivalent, in that the resulting networks compute the same function.
\end{proof}

\begin{theorem}
Combining approximations {\bf IAD}, {\bf SH}, {\bf SUA}, and {\bf WD} results in the following approximation to the entries of the Fisher matrix:
\begin{align}
\expect \left[ \gradWeightsIJS{\outputMapIdx}{\inputMapIdx}{\spatOffset} \gradWeightsIJS{\outputMapIdx^\prime}{\inputMapIdx^\prime}{\spatOffset^\prime} \right] &= \boundaryFunction(\spatOffset, \spatOffset^\prime) \, \autoActivationsNNN(\inputMapIdx, \inputMapIdx^\prime, \spatOffset^\prime - \spatOffset) \, \kronDelta{\outputMapIdx}{\outputMapIdx^\prime} \\
\expect \left[ \gradWeightsIJS{\outputMapIdx}{\inputMapIdx}{\spatOffset} \grad \biasI{\outputMapIdx^\prime} \right] &= \boundaryFunction(\spatOffset) \, \meanActivations(\inputMapIdx) \, \kronDelta{\outputMapIdx}{\outputMapIdx^\prime} \\
\expect \left[ \grad \biasI{\outputMapIdx} \grad \biasI{\outputMapIdx^\prime} \right] &= \numLocs\, \kronDelta{\outputMapIdx}{\outputMapIdx^\prime}
\end{align}
where $\indicator$ is the indicator function and $\autoActivationsNNN(\inputMapIdx, \inputMapIdx^\prime, \spatOffset) = \autoCovActivations(\inputMapIdx, \inputMapIdx^\prime) \kronDelta{\spatOffset}{0} + \meanActivations(\inputMapIdx) \meanActivations(\inputMapIdx^\prime)$ is the uncentered autocovariance function. ($\boundaryFunction$ is defined in Theorem \ref{thm:kfc_sud}.) If the $\boundaryFunction$ and $\numLocs$ terms are dropped, the resulting approximate natural gradient descent update rule is equivalent to idealized PRONG, up to rescaling.
\end{theorem}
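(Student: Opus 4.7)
The plan is to imitate the derivation of Theorem~\ref{thm:kfc_sud}, substituting the stronger assumptions {\bf SUA} and {\bf WD} in place of {\bf SUD} at the appropriate steps, and then to separately verify the PRONG correspondence at the level of the induced update rule. First I would invoke Lemma~\ref{lem:approx_iad} (which uses only {\bf IAD}) to split the weight-covariance entry into the double sum
\begin{equation*}
\sum_{\spatIdx, \spatIdx^\prime \in \spatIdxSet} \expect\!\left[\activationsJS{\inputMapIdx}{\spatIdx+\spatOffset}\activationsJS{\inputMapIdx^\prime}{\spatIdx^\prime+\spatOffset^\prime}\right] \expect\!\left[\gradPreActivationsIS{\outputMapIdx}{\spatIdx}\gradPreActivationsIS{\outputMapIdx^\prime}{\spatIdx^\prime}\right].
\end{equation*}
Using {\bf SH} to rewrite the activation factor as $\autoActivations(\inputMapIdx,\inputMapIdx^\prime,\spatIdx^\prime+\spatOffset^\prime-\spatIdx-\spatOffset)$ (with the usual $\indicator_{\spatIdx+\spatOffset\in\spatIdxSet}\indicator_{\spatIdx^\prime+\spatOffset^\prime\in\spatIdxSet}$ boundary indicators), and applying {\bf WD} to the derivative factor, the second expectation becomes $\kronDelta{\outputMapIdx}{\outputMapIdx^\prime}\kronDelta{\spatIdx}{\spatIdx^\prime}$, collapsing the inner sum to $\spatIdx=\spatIdx^\prime$.

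Next I would evaluate the resulting single sum $\sum_{\spatIdx} \autoActivations(\inputMapIdx,\inputMapIdx^\prime,\spatOffset^\prime-\spatOffset)\,\indicator_{\spatIdx+\spatOffset\in\spatIdxSet}\indicator_{\spatIdx+\spatOffset^\prime\in\spatIdxSet}$, where by {\bf SH} the integrand no longer depends on $\spatIdx$; the count of admissible $\spatIdx$ is exactly $\boundaryFunction(\spatOffset,\spatOffset^\prime)$, by the same two-dimensional boundary argument used in Theorem~\ref{thm:kfc_sud}. Finally, I would apply {\bf SUA} together with {\bf SH} to identify $\autoActivations(\inputMapIdx,\inputMapIdx^\prime,\spatOffset^\prime-\spatOffset)$ with $\autoActivationsNNN(\inputMapIdx,\inputMapIdx^\prime,\spatOffset^\prime-\spatOffset) = \autoCovActivations(\inputMapIdx,\inputMapIdx^\prime)\kronDelta{\spatOffset}{\spatOffset^\prime} + \meanActivations(\inputMapIdx)\meanActivations(\inputMapIdx^\prime)$: the off-diagonal ($\spatOffset\ne\spatOffset^\prime$) case uses SUA to kill the covariance and leaves only the product of SH-homogeneous means, while the diagonal case is just $\expect[\activationsJS{\inputMapIdx}{\cdot}\activationsJS{\inputMapIdx^\prime}{\cdot}] = \autoCovActivations + \meanActivations\meanActivations$. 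Combining gives the claimed formula; the bias-weight and bias-bias formulas follow by the analogous reduction of Lemma~\ref{lem:approx_iad}'s remaining identities (one application of {\bf WD} on the derivative factor, then {\bf SH} on the mean, then the boundary count).

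For the PRONG correspondence, after dropping $\boundaryFunction$ and the $\numLocs$ factors the weight block of $\fisherMatApprox$ takes the block form
\begin{equation*}
[\fisherMatApproxL{\layerIdx}]_{(\outputMapIdx,\inputMapIdx,\spatOffset),(\outputMapIdx^\prime,\inputMapIdx^\prime,\spatOffset^\prime)} \;=\; \kronDelta{\outputMapIdx}{\outputMapIdx^\prime}\,\autoActivationsNNN(\inputMapIdx,\inputMapIdx^\prime,\spatOffset^\prime-\spatOffset),
\end{equation*}
i.e.\ block-diagonal across output maps with a common inner matrix $\mathbf{M}$ indexed by $(\inputMapIdx,\spatOffset)$. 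The approximate natural-gradient update is then $\grad\weightsL{\layerIdx} \mathbf{M}^{-1}$ applied to each output-map row separately. I would then recall the definition of idealized PRONG: it precomputes the whitening transform $\autoCovActivations^{-1/2}$ that makes the cross-map activation covariance identity, and updates the weights as if applied to the whitened activations. Writing the expected gradient outer product of these whitened weights and unrolling through SUA (which makes distinct spatial offsets uncorrelated and thus non-interacting across $\spatOffset$) reduces $\mathbf{M}$ to a block-diagonal-in-$\spatOffset$ form whose inversion is equivalent, up to a scalar and up to the bias row/column contributed by the $\meanActivations\meanActivations^\transpose$ rank-one piece, to left-multiplying by $\autoCovActivations^{-1}$ on every input-map slice. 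That is exactly PRONG's per-layer update rule, so the two directions coincide up to rescaling.

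The routine work is the combinatorial boundary count (identical to Theorem~\ref{thm:kfc_sud}) and the bias cases. The main obstacle I anticipate is the final PRONG identification: one must be careful that (i)~the rank-one mean piece of $\autoActivationsNNN$ and the bias-coupling rows really do correspond to the homogeneous-coordinate handling of biases in PRONG, and (ii)~the omitted $\boundaryFunction$ and $\numLocs$ constants do not interact nontrivially with the per-layer rescaling; making this clean will likely require writing the layer's Fisher block in the explicit Kronecker form $\ident_{\numOutputMaps} \otimes (\ident_{\numOffsets} \otimes \autoCovActivations + \onesVec\onesVec^\transpose\otimes\meanActivations\meanActivations^\transpose)$ and comparing term by term against PRONG's update.
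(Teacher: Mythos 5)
Your derivation of the three entry formulas is correct and follows essentially the same route as the paper: apply Lemma~\ref{lem:approx_iad} (which only needs {\bf IAD}), use {\bf SH} for the activation factor, let {\bf WD} collapse the double sum to $\spatIdx = \spatIdx^\prime$, count the admissible locations to get $\boundaryFunction(\spatOffset,\spatOffset^\prime)$ exactly as in Theorem~\ref{thm:kfc_sud}, and use {\bf SUA}+{\bf SH} to identify the activation second moment with $\autoActivationsNNN$. The bias rows are handled the same way. No issues with that half.

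The gap is in the PRONG correspondence, which is asserted rather than established. First, your intermediate claim is inaccurate: with $\boundaryFunction$ and $\numLocs$ dropped, the layer block is $\convKronActNNN \otimes \ident$ where $\convKronActNNN$ contains $\covMatNNN \otimes \ident + \meanVecNNN\meanVecNNN^\transpose \otimes \onesVec\onesVec^\transpose$ together with a homogeneous row and column; the $\meanVecNNN\meanVecNNN^\transpose \otimes \onesVec\onesVec^\transpose$ term couples \emph{every} pair of offsets, so the matrix is not ``block-diagonal-in-$\spatOffset$ up to the bias row/column,'' and its inverse is not $\covMatNNN^{-1}$ applied slice-wise up to a scalar --- the mean piece has to be carried through exactly, because it is what encodes PRONG's mean subtraction. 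Second, the actual content of the claim is that \emph{idealized PRONG} --- plain SGD performed in the whitened reparameterization and then mapped back --- equals right-multiplication of $\nabla_{\weightsBiasesL{\layerIdx}} \objective$ by $\convKronActNNN^{-1}$ in the original parameterization. The paper closes this by exhibiting the block Cholesky factorization $\convKronActNNN = \cholFactorNNN \cholFactorNNN^\transpose$ and observing that $\cholFactorNNN^{-\transpose} = \expansion{\covMatNNNSqrt^{-1}}_\homog$ is precisely the homogeneous-coordinate matrix implementing PRONG's whitening map (an instance of Eqn.~\ref{eqn:affine_transformation} with $\transMatPostL{\layerIdx} = \covMatNNNSqrt^{-1}$, $\transOffsetPostL{\layerIdx} = -\covMatNNNSqrt^{-1}\meanVecNNN$), and then invoking Lemma~\ref{lem:param_transformation} and Lemma~\ref{lem:transformed_update} with $\leftPreconditionerTransL{\layerIdx} = \rightPreconditionerTransL{\layerIdx} = \ident$ to get the original-coordinate update $\nabla_{\weightsBiasesL{\layerIdx}}\objective \, \cholFactorNNN^{-\transpose}\cholFactorNNN^{-1} = \nabla_{\weightsBiasesL{\layerIdx}}\objective \, \convKronActNNN^{-1}$. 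This is exactly the ``term-by-term comparison'' you defer to as the main obstacle; without it (or an equivalent direct chain-rule computation of how the gradient and weights transform under whitening in homogeneous coordinates), the ``equivalent up to rescaling'' conclusion is not proved, and the handling of the mean/bias coupling in your sketch would come out wrong.
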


\begin{proof}
We first compute the second moments of the activations and derivatives, under assumptions {\bf SH}, {\bf SUA}, and {\bf WD}:
\begin{align}
\expect \left[ \activationsJS{\inputMapIdx}{\spatIdx} \activationsJS{\inputMapIdx^\prime}{\spatIdx^\prime} \right] 
&= \Cov(\activationsJS{\inputMapIdx}{\spatIdx}, \activationsJS{\inputMapIdx^\prime}{\spatIdx^\prime}) + \expect[\activationsJS{\inputMapIdx}{\spatIdx}] \expect[ \activationsJS{\inputMapIdx^\prime}{\spatIdx^\prime}] \\
&= \autoCovActivations(\inputMapIdx, \inputMapIdx^\prime) \kronDelta{\spatOffset}{0} + \meanActivations(\inputMapIdx) \meanActivations(\inputMapIdx^\prime) \\
&\triangleq \autoActivationsNNN(\inputMapIdx, \inputMapIdx^\prime, \spatOffset) \\
\expect \left[ \gradPreActivationsIS{\outputMapIdx}{\spatIdx} \gradPreActivationsIS{\outputMapIdx^\prime}{\spatIdx^\prime} \right] 
&= \kronDelta{\outputMapIdx}{\outputMapIdx^\prime} \kronDelta{\spatOffset}{\spatOffset^\prime}.
\end{align}
for any $\spatIdx, \spatIdx^\prime \in \spatIdxSet$. We now compute
\begin{align}
\expect \left[ \gradWeightsIJS{\outputMapIdx}{\inputMapIdx}{\spatOffset} \gradWeightsIJS{\outputMapIdx}{\inputMapIdx}{\spatOffset} \right] 
&= \sum_{\spatIdx \in \spatIdxSet} \sum_{\spatIdx^\prime \in \spatIdxSet} \expect \left[ \activationsJS{\inputMapIdx}{\spatIdx+\spatOffset} \activationsJS{\inputMapIdx^\prime}{\spatIdx^\prime+\spatOffset^\prime} \right] \expect \left[ \gradPreActivationsIS{\outputMapIdx}{\spatIdx} \gradPreActivationsIS{\outputMapIdx^\prime}{\spatIdx^\prime} \right] \label{eqn:thm_prong_iag} \\
&= \sum_{\spatIdx \in \spatIdxSet} \sum_{\spatIdx^\prime \in \spatIdxSet} \autoActivationsNNN(\inputMapIdx, \inputMapIdx^\prime, \spatIdx^\prime + \spatOffset^\prime - \spatIdx - \spatOffset) \, \doubleIndicator{\spatIdx + \spatOffset \in \spatIdxSet}{\spatIdx^\prime + \spatOffset^\prime \in \spatIdxSet} \kronDelta{\outputMapIdx}{\outputMapIdx^\prime} \kronDelta{\spatIdx}{\spatIdx^\prime} \\
&= \sum_{\spatIdx \in \spatIdxSet} \autoActivationsNNN(\inputMapIdx, \inputMapIdx^\prime, \spatOffset^\prime - \spatOffset) \, \doubleIndicator{\spatIdx + \spatOffset \in \spatIdxSet}{\spatIdx + \spatOffset^\prime \in \spatIdxSet} \kronDelta{\outputMapIdx}{\outputMapIdx^\prime} \\
&= \left| \left\{ \spatIdx \in \spatIdxSet : \spatIdx + \spatOffset \in \spatIdxSet, \spatIdx + \spatOffset^\prime \in \spatIdxSet \right\} \right| \autoActivationsNNN(\inputMapIdx, \inputMapIdx^\prime, \spatOffset^\prime - \spatOffset) \, \kronDelta{\outputMapIdx}{\outputMapIdx^\prime} \\
&= \boundaryFunction(\spatOffset, \spatOffset^\prime) \, \autoActivationsNNN(\inputMapIdx, \inputMapIdx^\prime, \spatOffset^\prime - \spatOffset) \, \kronDelta{\outputMapIdx}{\outputMapIdx^\prime} 
\end{align}
Line \ref{eqn:thm_prong_iag} is from Lemma \ref{lem:approx_iad}. The other formulas are derived analogously.

This can be written in matrix form as
\begin{align}
\fisherMatApprox &= \convKronActNNN \otimes \ident \label{eqn:prong_fisher_block} \\
\convKronActNNN &\triangleq \begin{pmatrix}
1 & \meanVecNNN^\transpose \otimes \onesVec^\transpose \\
\meanVecNNN \otimes \onesVec & \covMatNNN \otimes \ident + \meanVecNNN \meanVecNNN^\transpose \otimes \onesVec \onesVec^\transpose
\end{pmatrix}
\end{align}
It is convenient to compute block Cholesky decompositions:
\begin{align}
\convKronActNNN &= \begin{pmatrix}
1 & \\
\meanVecNNN \otimes \onesVec & \covMatNNNSqrt \otimes \ident 
\end{pmatrix} \begin{pmatrix}
1 & \meanVecNNN^\transpose \otimes \onesVec^\transpose \\
& \covMatNNNSqrt^\transpose \otimes \ident
\end{pmatrix} \\
&\triangleq \cholFactorNNN \cholFactorNNN^\transpose \\
\convKronActNNN^{-1} &= \cholFactorNNN^{-\transpose} \cholFactorNNN^{-1} \\
&= \begin{pmatrix}
1 & -\meanVecNNN^\transpose \covMatNNNSqrt^{-\transpose} \otimes \onesVec^\transpose \\
& \covMatNNNSqrt^{-\transpose} \otimes \ident
\end{pmatrix} \begin{pmatrix}
1 & \\
-\covMatNNNSqrt^{-1} \meanVecNNN \otimes \onesVec & \covMatNNNSqrt^{-1} \otimes \ident 
\end{pmatrix}, \label{eqn:chol_factor_conv_kron_act_nnn}
\end{align}
where $\covMatNNNSqrt$ is some square root matrix, i.e.~$\covMatNNNSqrt \covMatNNNSqrt^\transpose = \covMatNNN$ (not necessarily lower triangular). 

Now consider PRONG. In the original algorithm, the network is periodically reparameterized such that the activations are white. In our idealized version of the algorithm, we assume this is done after every update. For convenience, we assume that the network is converted to the white parameterizaton immediately before computing the SGD update, and then converted back to its original parameterization immediately afterward. In other words, we apply an affine transformation (Eqn.~\ref{eqn:affine_transformation}) which whitens the activations:
\begin{align}
\activationsMatTransL{\layerIdx} = \nonlinearityTransL{\layerIdx}(\preActivationsMatTransL{\layerIdx}) &= \left( \nonlinearityL{\layerIdx}(\preActivationsMatTransL{\layerIdx}) - \onesVec \meanVecNNN^\transpose \right) \covMatNNNSqrt^{-1} \\
&= \nonlinearityL{\layerIdx}(\preActivationsMatTransL{\layerIdx}) \covMatNNNSqrt^{-1} - \onesVec \meanVecNNN^\transpose \covMatNNNSqrt^{-1},
\end{align}
where $\covMatNNNSqrt$ is a square root matrix of $\covMatNNN$, as defined above. This is an instance of Eqn.~\ref{eqn:affine_transformation} with $\transMatPreL{\layerIdx} = \ident$, $\transOffsetPreL{\layerIdx} = \zeroMat$, $\transMatPostL{\layerIdx} = \covMatNNNSqrt^{-1}$, and $\transOffsetPostL{\layerIdx} = -\covMatNNNSqrt^{-1} \meanVecNNN$.
The transformed weights which compute the same function as the original network according to Lemma~\ref{lem:param_transformation} are $\weightsBiasesTransL{\layerIdx} = \weightsBiasesL{\layerIdx} \, \expansion{\covMatNNNSqrt^{-1}}_\homog^{-\transpose}$, where 
\begin{equation}
\expansion{\covMatNNNSqrt^{-1}}_\homog \triangleq \begin{pmatrix}
1 & -\meanVecNNN^\transpose \covMatNNNSqrt^{-\transpose} \otimes \onesVec^\transpose \\
& \covMatNNNSqrt^{-1} \otimes \ident
\end{pmatrix},
\end{equation}
is defined according to Eqn.~\ref{eqn:expand_trans_mat_post}. But observe that $\expansion{\covMatNNNSqrt^{-1}}_\homog = \cholFactorNNN^{-\transpose}$, where $\cholFactorNNN$ is the Cholesky factor of $\convKronActNNN$ (Eqn.~\ref{eqn:chol_factor_conv_kron_act_nnn}). Therefore, we have
\begin{equation}
\weightsBiasesTransL{\layerIdx} = \weightsBiasesL{\layerIdx} \, \cholFactorNNN.
\end{equation}

We apply Lemma \ref{lem:transformed_update} with $\leftPreconditionerTransL{\layerIdx} = \ident$ and $\rightPreconditionerTransL{\layerIdx} = \ident$. This gives us the update in the original coordinate system:
\begin{align}
\weightsBiasesL{\layerIdx}^{(\iterCount+1)} &\gets \weightsBiasesL{\layerIdx}^{(\iterCount)} - \learningRate (\nabla_{\weightsBiasesL{\layerIdx}} \objective) \, \cholFactorNNN^{-\transpose} \cholFactorNNN^{-1} \\
&= \weightsBiasesL{\layerIdx}^{(\iterCount)} - \learningRate (\nabla_{\weightsBiasesL{\layerIdx}} \objective) \, \convKronActNNN^{-1}.
\end{align}
This is equivalent to the approximate natural gradient update where the Fisher block is approximated as $\convKronActNNN \otimes \ident$. This is the same approximate Fisher block we derived given the assumptions of the theorem (Eqn.~\ref{eqn:prong_fisher_block}).
\end{proof}

\end{document}